\documentclass{article}

\usepackage[preprint]{neurips_2026}

\usepackage[utf8]{inputenc}
\usepackage[T1]{fontenc}
\usepackage{microtype}
\usepackage{graphicx}
\usepackage{subfigure}
\usepackage{booktabs}
\usepackage{multirow}
\usepackage{makecell}
\usepackage{hyperref}
\usepackage{url}
\usepackage[defaultlines=2,all]{nowidow}

\usepackage{algorithm}
\usepackage{algorithmic}

\usepackage{amsmath,amssymb,amsfonts}
\usepackage{mathtools}
\usepackage{amsthm}
\usepackage{siunitx}
\usepackage{nicefrac}
\usepackage{xcolor}

\usepackage{amsmath,amsfonts,bm}

\def\Secref#1{Section~\ref{#1}}

\def\eqref#1{equation~\ref{#1}}

\def\1{\bm{1}}

\DeclareMathAlphabet{\mathsfit}{\encodingdefault}{\sfdefault}{m}{sl}
\SetMathAlphabet{\mathsfit}{bold}{\encodingdefault}{\sfdefault}{bx}{n}

\newcommand{\E}{\mathbb{E}}

\newcommand{\KL}{D_{\mathrm{KL}}}

\newcommand{\Cov}{\mathrm{Cov}}

\theoremstyle{plain}
\newtheorem{assumption}{Assumption}
\newtheorem{lemma}{Lemma}
\newtheorem{proposition}{Proposition}
\newtheorem{theorem}{Theorem}
\newtheorem{definition}[theorem]{Definition}
\newtheorem{corollary}[theorem]{Corollary}

\usepackage[capitalize,noabbrev]{cleveref}
\crefname{assumption}{Assumption}{Assumptions}
\Crefname{assumption}{Assumption}{Assumptions}
\crefname{lemma}{Lemma}{Lemmas}
\Crefname{lemma}{Lemma}{Lemmas}
\crefname{proposition}{Proposition}{Propositions}
\Crefname{proposition}{Proposition}{Propositions}
\crefname{theorem}{Theorem}{Theorems}
\Crefname{theorem}{Theorem}{Theorems}
\crefname{definition}{Definition}{Definitions}
\Crefname{definition}{Definition}{Definitions}
\crefname{corollary}{Corollary}{Corollaries}
\Crefname{corollary}{Corollary}{Corollaries}
\usepackage{wrapfig}
\usepackage{float}
\usepackage{enumitem}

\newcommand{\THL}{THL}
\newcommand{\ThlFull}{Tokenizer Heterogeneity Layer}
\newcommand{\ThlFullFirstAppear}{\underline{T}okenizer \underline{H}eterogeneity \underline{L}ayer}
\newcommand{\SEE}{SEE}
\newcommand{\SeeFull}{Shared Experience Exchange}
\newcommand{\SeeFullFirstAppear}{\underline{S}hared \underline{E}xperience \underline{E}xchange}
\newcommand{\MWRA}{MWRA}
\newcommand{\MwraFull}{Multi-Worker Resource Allocation}
\newcommand{\MwraFullFirstAppear}{\underline{M}ulti-\underline{W}orker \underline{R}esource \underline{A}llocation}

\newcommand{\MOneShort}{PRP}

\newcommand{\MOneFullFirstAppear}{\underline{P}eer \underline{R}ollout \underline{P}ooling}

\newcommand{\MThreeShort}{XGRPO}

\newcommand{\MThreeFullFirstAppear}{\underline{C}ross-Policy \underline{G}RPO \underline{A}dvantage \underline{S}haring}
\newcommand{\MFourShort}{SGT}

\newcommand{\MFourFullFirstAppear}{\underline{S}uccess-\underline{G}ated \underline{T}ransfer}

\title{Experience Sharing in Mutual Reinforcement Learning for Heterogeneous Language Models}

\author{%
  Xiaoze Liu$^{1,2}$\thanks{Work done during an internship at Amazon.}, Dhananjay Ram$^2$, Yuting Zhang$^2$ \\
  \textbf{Zhaoyang Zhang$^2$, Wei Xia$^2$, Stefano Soatto$^2$} \\
  $^1$Purdue University \qquad $^2$AWS Agentic AI
}

\begin{document}

\maketitle

\begin{abstract}
We introduce \emph{Mutual Reinforcement Learning}, a framework for concurrent RL post-training in which heterogeneous LLM policies exchange typed experience while keeping separate parameters, objectives, and tokenizers. The framework combines a \emph{Shared Experience Exchange} (SEE), \emph{Multi-Worker Resource Allocation} (MWRA), and a \emph{Tokenizer Heterogeneity Layer} (THL) that retokenizes text and aligns token-level traces across incompatible vocabularies. This substrate makes the experience-sharing design question operational across model families. We instantiate three controlled probes on top of GRPO: data-level rollout sharing via \emph{Peer Rollout Pooling} (PRP), value-level advantage sharing via \emph{Cross-Policy GRPO Advantage Sharing} (XGRPO), and outcome-level success transfer via \emph{Success-Gated Transfer} (SGT). A contextual-bandit analysis characterizes their structural positions on a stability-support trade-off: PRP pays density-ratio variance and THL residual costs, XGRPO preserves on-policy actor support while changing scalar baselines, and SGT supplies a rescue-set score direction toward verified peer successes. In the evaluated regime, outcome-level sharing occupies the favorable point of this trade-off.
\end{abstract}

\section{Introduction}
\label{sec:introduction}

Reinforcement learning (RL) for large language models (LLMs) has largely remained a single-policy endeavor, typically relying on PPO-style surrogates with KL regularization~\citep{Schulman2017PPO,Ouyang2022InstructGPT,Ziegler2019RLHF}.
The field has shifted from critic-based estimation (e.g., GAE~\citep{Schulman2016GAE}) toward verifiable, critic-free group-relative variants~\citep{shao2024deepseekmath,guo2025deepseek,zheng2025group,yu2025dapo,liu2026gdpo}, but the dominant ``one model at a time'' training loop persists.
In this siloed workflow, exploration and improvement are coupled to a single policy.
Complementary experiences discovered by concurrently trained peers are discarded~\citep{Wortsman2022ModelSoups,Wang2023SelfConsistency}, leaving no principled channel for policies to learn from one another \emph{during} training.

This isolation is inefficient in sparse-reward reasoning domains.
Heterogeneous policies (varying in architecture or scale) exhibit diverse inductive biases and traverse distinct regions of the solution space~\citep{fort2019deep,allen2020towards,wang2024mixture}.
When one policy solves a problem that remains unsolved by its peers, that trajectory is a high-value experience currently used only by the successful policy. Mutual RL turns such isolated discoveries into typed training signals for the other policies in the pool.
This amplifies sparse verifier feedback: a policy can receive evidence of a correct reasoning path on prompts where its own rollout group produced none.
We introduce \emph{mutual reinforcement learning} as concurrent post-training in which each policy learns from its own on-policy rollouts and selected peer signals while keeping separate parameters, objectives, and tokenizers.
A peer-discovered behavior can therefore accelerate the acquisition of
complex capabilities like reasoning~\citep{Wei2022CoT,Zelikman2022STAR}.
Rather than replacing standard pipelines with complex off-policy algorithms~\cite{wu2024selfplay,kostrikov2021offline,haarnoja2018soft}, we make peer experiences usable \emph{inside} the on-policy surrogates used by contemporary LLM RL~\cite{zheng2025group, shao2024deepseekmath,guo2025deepseek}.

Realizing this vision requires addressing two design questions. First, on the framework side, policies often use disparate tokenizers, so peer experiences must be translated into a representation each learner can consume~\citep{rust2021good,dobler-de-melo-2023-focus}. Second, at the algorithmic level, directly reusing peer rollouts inside on-policy surrogates introduces distribution shift~\citep{degris2012offpolicy,Espeholt2018IMPALA,liu2020offpolicy,zhang2019geoffpac,kumar2020cql,zheng2025prosperitycollapsefaroffpolicy,fu2025areal,noukhovitch2024asynchronous}. Until heterogeneous policies can exchange comparable signals (token-level traces, scalar rewards, and verified successful trajectories) at all, the question of \emph{which experience-sharing regime is the right abstraction to adopt} is not even well-posed. The framework in Sec.~\ref{sec:system-design} is the prerequisite that makes this question askable. With it in place, we then ask: \emph{within the now-standard regime of single-step verifiable post-training for open-weights LLMs, which experience-sharing regime is appropriate, and what trade-offs do alternative regimes induce?}

We first build the substrate that lets heterogeneous models communicate during RL training. We extend VERL~\citep{sheng2024hybridflow}, one of the most widely used RL frameworks for LLMs, with a typed dataflow in which distinct policies exchange structured experience (rollouts, rewards, advantages, and optional traces) while preserving native per-policy objectives. \MwraFullFirstAppear{} assigns workers and budgets across policies, and \SeeFullFirstAppear{} stores typed records with provenance. To bridge incompatible vocabularies, we
develop a \ThlFullFirstAppear{} that retokenizes peer rollouts and aligns token-level traces across model families.
With the framework in place, we populate the stability-support design space with three controlled probes on top of the same GRPO base update~\citep{shao2024deepseekmath}. Each probe exposes a different level of experience sharing and can be compared under matched compute and analyzed structurally in App.~\ref{app:theory}.
\begin{itemize}
[topsep=0pt,itemsep=0pt,parsep=0pt,partopsep=0pt,leftmargin=*]
    \item \textbf{Regime 1: Data-level experience sharing.} \MOneFullFirstAppear{} treats the shared pool as supplementary training trajectories and tests the density-ratio cost of direct peer-rollout reuse.
    \item \textbf{Regime 2: Value-level experience sharing.} \MThreeFullFirstAppear{} normalizes the learner's critic-free advantages with cross-policy reward statistics while keeping the actor strictly on-policy.
    \item \textbf{Regime 3: Outcome-level experience sharing.} \MFourFullFirstAppear{} treats peer outputs as sparse outcome certificates rather than general off-policy rollouts. The gate activates only when the learner fails on a prompt that a peer has solved; on this rescue subset, the verified peer trajectory is injected as a sparse positive sample alongside the learner's own on-policy GRPO update. \MFourShort{} is therefore peer-conditioned, failure-triggered, and interleaved with the base RL objective.
\end{itemize}

To summarize, our contributions are as follows:

\begin{enumerate}[topsep=0pt,itemsep=0pt,parsep=0pt,partopsep=0pt,leftmargin=*]
    \item \textbf{Framework.} We introduce a reproducible framework for mutual RL across heterogeneous LLM policies, comprising \SeeFull{} (\SEE{}) for typed cross-policy experience, \MwraFull{} (\MWRA{}) for per-policy budgets, and \ThlFull{} (\THL{}) for text and token-level signal exchange across incompatible vocabularies. \THL{} provides the cross-tokenizer interface for text and token-level trace exchange, with bounded alignment residual analyzed in App.~\ref{app:theory}.
    \item \textbf{Three controlled experience-sharing probes.} We instantiate \MOneShort{} (data-level), \MThreeShort{} (value-level), and \MFourShort{} (outcome-level) on the same GRPO substrate and compare them under matched compute, hyperparameters, and deterministic validation decoding within a single VERL-based stack. The probes populate the stability-support design space at three distinct coupling levels.
    \item \textbf{Theoretical characterization of the stability-support trade-off.} A contextual-bandit theoretical analysis (App.~\ref{app:theory}) proves the structural differences between the three probes: density-ratio variance and THL-residual sensitivity for \MOneShort{}, on-policy support preservation for \MThreeShort{}, and a rescue-set positive-gradient direction for \MFourShort{}. This grounds the design space in update structure rather than per-task benchmark variation.
\end{enumerate}

\section{Related Work} 
\label{sec:related}

\paragraph{RL for LLMs.}
RL for post-training LLMs typically adopts PPO-style surrogates with KL regularization and either critic-based (e.g., GAE) or critic-free (e.g., group-relative) advantages, often coupled with outcome- or process-level reward models and verifiers~\citep{Schulman2017PPO,Schulman2016GAE,Williams1992REINFORCE,Schulman2015TRPO,Christiano2017Preferences,Stiennon2020Summarize,Ouyang2022InstructGPT,Ziegler2019RLHF,Bai2022ConstitutionalAI,Rafailov2023DPO,Cobbe2021TrainingVerifiers,Lewkowycz2022Minerva,Wei2022CoT,Wang2023SelfConsistency,Yao2023ReAct,Yao2023ToT}. On math-focused reasoning tasks, GRPO-style pipelines and their variants further refine these templates through verifiable, test-time, and data-efficient RL on language models, as well as intrinsic and noisy reward signals~\citep{shao2024deepseekmath,guo2025deepseek,TTRL,1-shot-RL,Few-Shot-RL,Absolute-Zero,one-shot-EM,RENT,EM-RL,INTUITOR,Self-Train,lv-PRP,NoFreeLunch,shao2025spurious}. A growing body of work also revisits the underlying algorithms, proposing value-augmented and trust-region variants to improve stability and credit assignment in RLVR-style regimes~\citep{kazemnejad2024vineppo,yuan2025vc_ppo,yuan2025vapo,yu2025dapo,hu2025reinforce_pp,zhang2025srpo}. In parallel, off-policy RL theory highlights how naively combining replay with policy gradients can be unstable, motivating guards such as clipped ratios and KL regularization~\citep{degris2012offpolicy,Espeholt2018IMPALA,liu2020offpolicy,zhang2019geoffpac,kumar2020cql,Mnih2016A3C,Levine2018RLasInference,Schulman2017PPO,Schulman2016GAE}, and sequence-level RL for text predates modern RLHF with similar surrogates for translation and summarization~\citep{Ranzato2016MIXER,Bahdanau2017ActorCritic,Norouzi2016RAML,Rennie2017SCST,Paulus2018SummarizationRL}. Complementary system work explores asynchronous and large-scale RLHF frameworks that decouple rollouts from learning while carefully managing staleness~\citep{fu2025areal,noukhovitch2024asynchronous,zhong2025streamrl,he2025history}.
Our work keeps the GRPO/PPO-style training interface but changes the dataflow: concurrently trained policies can publish and consume typed peer fields. This lets us compare which level of sharing is stable inside the standard on-policy surrogate family rather than replacing that family with a separate off-policy algorithm.

\paragraph{LLM Ensembling and Merging.}
A large body of work combines LLMs at \emph{inference} time via ensembling, verifier or retriever-based reranking, and multi-agent debate or critique to improve accuracy and calibration without changing weights~\citep{Wang2023SelfConsistency,Wei2022CoT,Yao2023ReAct,Yao2023ToT,Shinn2023Reflexion,Lewis2020RAG,Izacard2021FiD,Irving2018Debate,Zhou2023LeastToMost,Asai2023SelfRAG,Zelikman2022STAR,Gao2023PAL}. Orthogonal work merges parameters through weight averaging such as model soups or task arithmetic, interpolating capabilities at the cost of potential interference and tokenizer constraints~\citep{Wortsman2022ModelSoups,Matena2022FisherMerging,Ilharco2023TaskVectors,Hu2022LoRA,Pfeiffer2021AdapterFusion,BenZaken2022BitFit,Lester2021PromptTuning,Li2021PrefixTuning}.
Recent work also explores concurrent training of multiple LLMs in multi-agent RL settings, using distributed infrastructure to coordinate heterogeneous roles and collaborative problem solving~\citep{zhang2025marti,feng2025stronger,zhao2025maporl,li2025interactive}.
Mutual RL differs along three axes. It keeps model parameters separate rather than merging weights, changes the trained policy rather than only ensembling at inference, and improves individual policies rather than optimizing a specialized multi-agent team. The shared object is experience, not parameters or test-time votes.

\paragraph{Mutual Learning and Multi-Student Distillation.}
Traditional mutual learning trains multiple students jointly via logits matching, co-distillation, or co-teaching strategies that filter noisy labels; related ideas appear in co-training and multi-agent RL, which use agreement or shared critics to stabilize decentralized policies~\citep{Zhang2018DML,Furlanello2018BAN,Hinton2015Distillation,Romero2015FitNets,Han2018CoTeaching,Jiang2018MentorNet,Blum1998CoTraining,Lowe2017MADDPG,Rashid2018QMIX,Foerster2018COMA,Sunehag2018VDN,Tarvainen2017MeanTeacher,Xie2020NoisyStudent,Kim2016SeqKD,Jiao2020TinyBERT,Sanh2019DistilBERT,Laine2017Temporal,FixMatch2020,Son2019QTRAN,Iqbal2019MAAC}.
Our setting exchanges signals native to LLM RL, including rollouts, scalar rewards, group-normalized advantages, and verified successful responses. \MFourShort{} is not online distillation: it is gated on learner failure, anchored by the concurrent on-policy GRPO update, and transfers outcome-verified trajectories rather than matching peer logits or full output distributions.

\section{Preliminaries}
\label{sec:preliminaries}
\paragraph{Group-Relative Policy Optimization (GRPO).}
Let $x$ be a prompt and $y=(y_1,\dots,y_T)$ a sampled response.
We optimize a current policy $\pi_\theta$ using samples generated by a fixed behavior policy
$\pi_{\theta_{\mathrm{old}}}$.
For each $(x,y)$, we write
$
\ell_\theta(x,y)\in\mathbb{R}^{|y|}
\quad\text{and}\quad
\ell_{\mathrm{old}}(x,y)\in\mathbb{R}^{|y|}
$
for the vectors of token-level log-probabilities under $\pi_\theta$ and
$\pi_{\theta_{\mathrm{old}}}$, respectively; their entries are
$\log \pi_\theta(y_t\mid x,y_{<t})$ and
$\log \pi_{\theta_{\mathrm{old}}}(y_t\mid x,y_{<t})$ for each token position $t$.
The token-level importance weights are defined elementwise by
$
w_\theta(x,y) = \exp\big(\ell_\theta(x,y)-\ell_{\mathrm{old}}(x,y)\big)\in\mathbb{R}^{|y|}.
$
For each prompt $x$ we draw a group of $K$ responses
$\{y_i\}_{i=1}^{K}\sim \pi_{\theta_{\mathrm{old}}}(\cdot\mid x)$
with scalar rewards $\{r_i\}_{i=1}^{K}$, and form group-relative advantages
\[
\bar r = \tfrac{1}{K}\sum_{j=1}^{K} r_j,
\qquad
\tilde A_i = r_i-\bar r,
\qquad
\tilde A_i^{\mathrm{zn}} = \frac{\tilde A_i}{\mathrm{std}(\{r_j\})+\epsilon}.
\]
Let $A_i$ denote the sequence-level advantage for $y_i$, i.e.,
$A_i = \tilde A_i$ (or $\tilde A_i^{\mathrm{zn}}$).
We denote by $\mathbb{E}_{\mathrm{tok}}[\cdot]$ the empirical average over token positions in $y_i$, and by $w_{\theta,t}(x,y_i)$ the per-token importance ratio at position $t$. With clipping threshold $\epsilon$, the GRPO loss we minimize is the standard PPO-style surrogate aggregated over the rollout group:
\begin{equation}
\label{eq:grpo}
\mathcal{L}_{\mathrm{GRPO}}(\theta)
=
-\,\mathbb{E}_{x}\,
\frac{1}{K}\sum_{i=1}^{K}
\mathbb{E}_{\mathrm{tok}}\!\Big[
\min\!\big(w_{\theta,t}(x,y_i)\,A_i,\;\mathrm{clip}\!\big(w_{\theta,t}(x,y_i),1-\epsilon,1+\epsilon\big)\,A_i\big)
\Big]
+\beta\,\mathcal{R}(\theta).
\end{equation}

where $\mathcal{R}(\theta)$ collects KL-style regularization to a reference model.
The behavior policy $\pi_{\theta_{\mathrm{old}}}$ is treated as fixed (we do not differentiate through it).
\paragraph{Mutual RL Setup.}
We train $M$ policies $\{\pi^{(m)}\}_{m=1}^{M}$ concurrently on a shared dataset. At each training step, every policy receives the \emph{identical} batch of prompts $x$.
Policy $m$ generates on-policy rollouts $y^{(m)} \sim \pi^{(m)}(\cdot\mid x)$ and publishes selected fields from those rollouts to the shared exchange for subscribed peers.
We denote by $\ell^{(a)}(x, y^{(b)})$ the token-level log-probabilities of policy $\pi^{(a)}$ evaluating a response $y^{(b)}$ originally generated by policy $\pi^{(b)}$.
This notation lets the three probes distinguish which cross-policy field is consumed: complete trajectories in \MOneShort{}, scalar reward statistics in \MThreeShort{}, or verified successful responses in \MFourShort{}.

\begin{figure*}[t]
  \centering
  \includegraphics[width=0.95\linewidth]{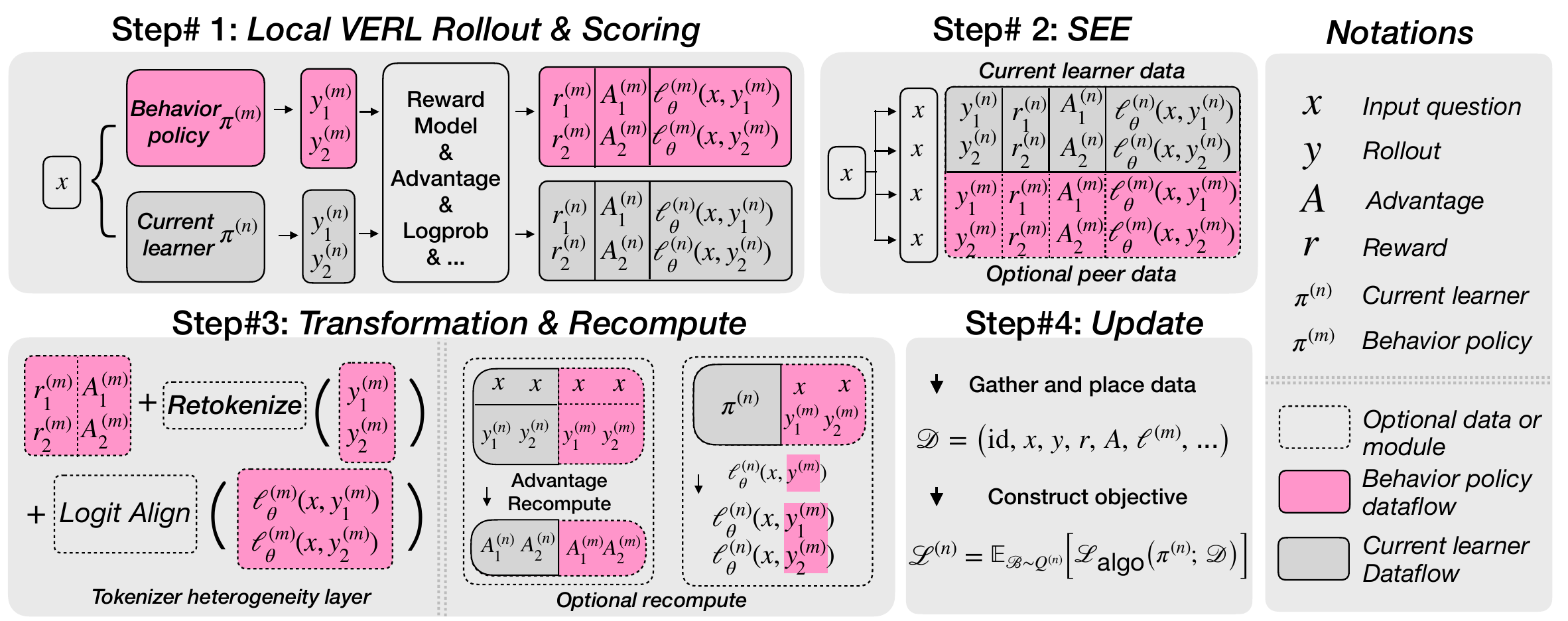}
  \vspace{-2mm} 
  \caption{\textbf{Mutual RL preserves local policy updates while routing typed peer fields through SEE and THL.} Each policy first performs native VERL rollout and scoring, publishes selected fields to the shared exchange, transforms subscribed peer fields into learner-compatible tensors, and then constructs its regime-specific GRPO-based objective independently.}
  \label{fig:overall-dataflow} 
  \vspace{-4mm} 
\end{figure*}

\section{The Mutual RL System Design}
\label{sec:system-design}

Figure~\ref{fig:system-design} summarizes the architecture.
We extend VERL~\citep{sheng2024hybridflow} with a communication layer that preserves native per-policy objectives. Each policy keeps its own rollout, scoring, and learner loop; the shared components only determine which typed fields are published, subscribed to, transformed, and consumed. \SeeFull{} stores the shared pool, \MwraFull{} assigns workers and budgets, and \THL{} converts text and token-level traces across tokenizer boundaries. Fig.~\ref{fig:overall-dataflow} shows the resulting dataflow.

\begin{wrapfigure}{r}{0.45\linewidth}
  \vspace{-\intextsep}
  \centering
  \includegraphics[width=\linewidth]{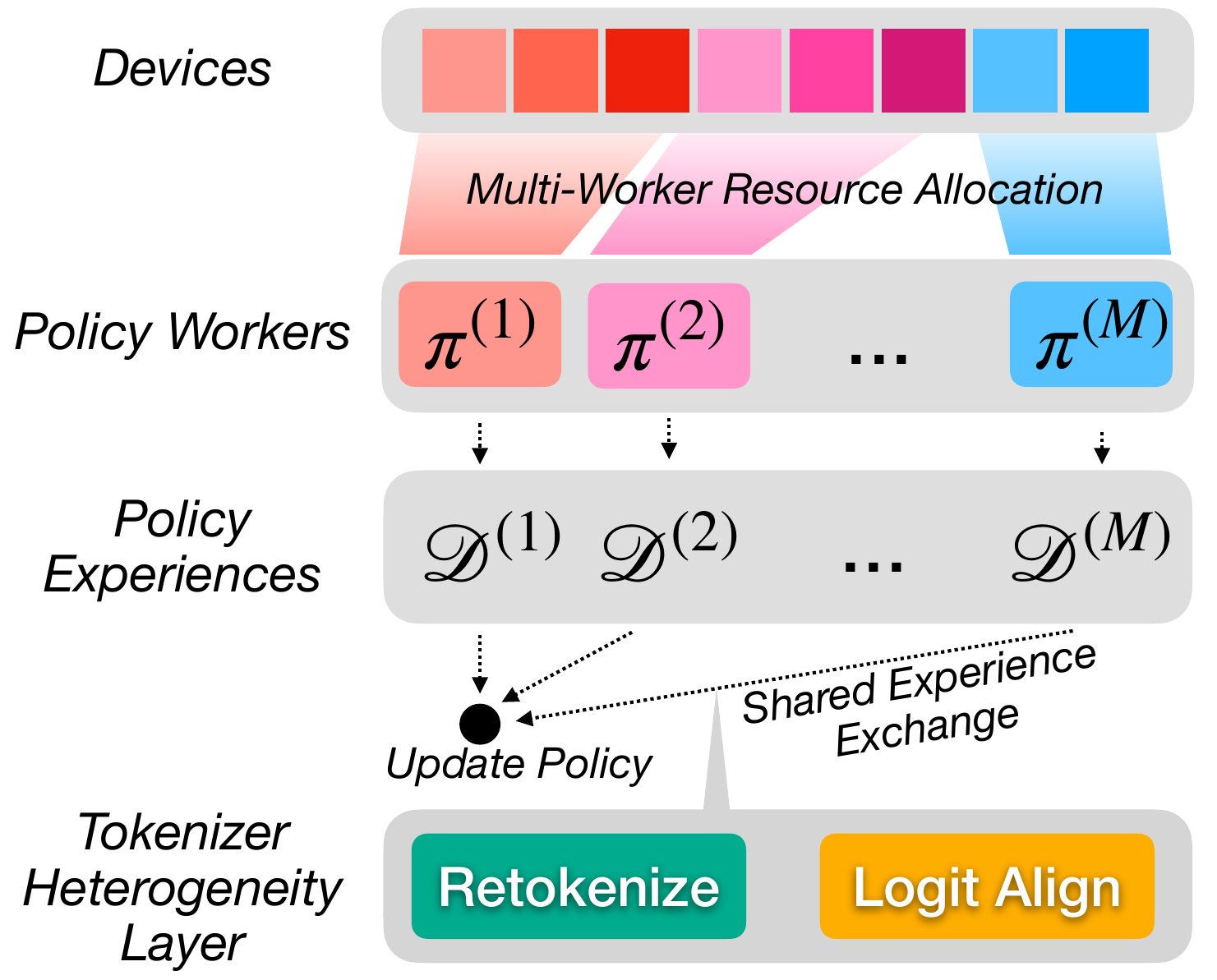}
  \vspace{-1em}
  \caption{\textbf{The system decouples resource placement, experience exchange, and tokenizer alignment.} \MWRA{} assigns devices to policy workers; each policy publishes typed experience $\mathcal{D}^{(m)}$ into \SEE{}; subscribed peer fields are retokenized or trace-aligned by \THL{} before the learner constructs its own update.}
  \label{fig:system-design}
  \vspace{-3em}
\end{wrapfigure}

\subsection{\MwraFull{}}
The \MwraFull{} manages the VERL worker groups responsible for sampling, scoring, and learning. By default, it distributes workers for different policies evenly across devices. Users can override the placement with an explicit device map to pin specific policies to selected devices.

\subsection{Dataflow in One Training Loop}
\paragraph{Standalone computation.} Each policy $m$ runs the standard on-policy generation loop for a batch of prompts $x$. Assigned workers sample candidate trajectories $y^{(m)}_i\sim \pi^{(m)}(\cdot\mid x)$, evaluate rewards $r$, compute local advantages $A$, and cache the per-token log-probability trace $\ell^{(m)}$. This step is policy-local and parallel across policies.
\paragraph{\SeeFull{} (\SEE{}).} The \SEE{} maintains the shared pool of typed experience records $\mathcal{D}=(\mathrm{id}, x_{\text{text}}, y_{\text{text}}, r, A, \ell^{(m)}, \text{meta})$, where $x_{\text{text}}$ and $y_{\text{text}}$ are raw texts, $r$ is the verifier reward, $A$ contains optional advantages, $\ell^{(m)}$ stores behavior-policy traces, and \text{meta} records provenance. The active regime determines which fields are published and which fields peers subscribe to. \MOneShort{} subscribes to rollout text and behavior traces, \MThreeShort{} subscribes to scalar rewards, and \MFourShort{} subscribes to verified successful responses on gated prompts.
\paragraph{Peer data transformation and integration.} Retrieved peer fields are transformed only when the learner needs them. Text fields are retokenized into the learner vocabulary; token-level traces are aligned through \THL{} (Section~\ref{sec:thlmethod}); scalar rewards require no tokenizer processing. The transformed fields are routed according to the active regime: peer trajectories enter the local candidate pool for \MOneShort{}, rewards enter pooled normalization for \MThreeShort{}, and verified peer successes enter the gated auxiliary loss for \MFourShort{}.

\paragraph{Loss construction and update.} After transformation, each policy constructs its configured loss locally and updates its own parameters in parallel. Standard guards such as importance-ratio clipping and group normalization are applied where the regime requires them~\citep{shao2024deepseekmath,Schulman2017PPO}. No additional synchronization is required during the optimization step.

\subsection{\ThlFull{} (\THL{})}
\label{sec:thlmethod}

Heterogeneous policies cannot exchange token-level fields by assuming a shared vocabulary. Token IDs, special-token conventions, and vocabulary versions differ across model families, so each learner must ingest peer data through a tokenizer-aware interface. \THL{} provides that interface through two operations. \emph{(1) Text adaptation (retokenization):} \THL{} decodes the peer response into text and re-encodes it with the learner tokenizer, producing character-token offset mappings for the generated response; because the prompt $x$ is shared across policies, \THL{} retokenizes only the generated response $y$ and appends it to the learner's cached prompt representation. \emph{(2) Policy-distribution alignment (logits projection):} objectives that use peer behavior traces, such as PRP sequence ratios, require a denominator on the learner token grid, and \THL{} constructs a word-level trace alignment in which whitespace-delimited words are used as anchors, source log-probabilities are summed within each word, and the word total is redistributed over the target tokens for that word. This is a trace-alignment primitive, not an exact probability-correction oracle: under tokenization mismatch, the residual enters the importance ratio multiplicatively; App.~\ref{app:theory} (\Cref{cor:thl-ratio}) gives the envelope and App.~\ref{subsec:thl-denominator} reports the empirical ratio statistics.

Mechanistically, we first aggregate the source log-probabilities within each textual word to obtain the total word-level mass \(Z_w = \sum_{s:\, w_{\mathrm{src}}(s)=w} m^{\mathrm{src}}_s\,\ell^{\mathrm{src}}_s\). We then distribute this mass across the corresponding target tokens. Let \(C_w\) be the count of target tokens spanning word \(w\); we assign the aligned log-prob as \(\tilde\ell^{\mathrm{tgt}}_t = m^{\mathrm{tgt}}_t\, Z_{\,w_{\mathrm{tgt}}(t)} / \max(1, C_{\,w_{\mathrm{tgt}}(t)})\) (visualized in Fig.~\ref{fig:wordalign-demo}).

\begin{wrapfigure}{r}{0.5\linewidth}
  \vspace{-\intextsep}
  \centering
  \includegraphics[width=\linewidth]{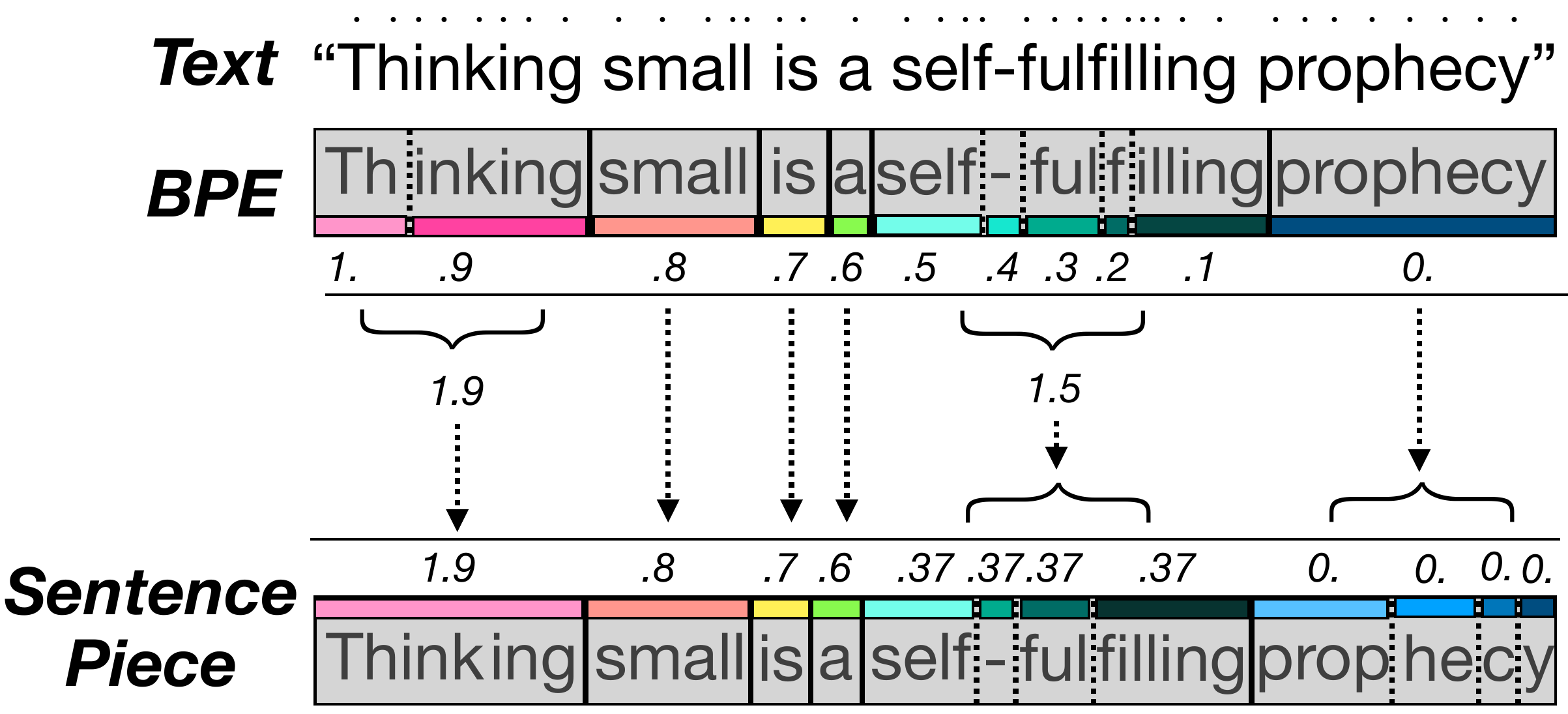}
  \vspace{-1em}
  \caption{\textbf{\THL{} preserves word-level trace mass under tokenizer mismatch.} Source per-token log-probs for \textit{Thinking small is a self-fulfilling prophecy} are summed per textual word ($Z_w$) and evenly assigned to the target tokens of that word ($\tilde\ell_t = Z_w/C_w$).}
  \label{fig:wordalign-demo}
  \vspace{-3em}
\end{wrapfigure}

This rule preserves per-word log-probability mass when source and target segmentations refine the same word partition; mismatched or uncovered spans contribute the residual bounded in \Cref{thm:thl-residual}. The result is a target-aligned probability trace $\tilde{\ell}^{(m\to n)}$ for estimating sequence-level importance ratios $\rho^{(n\mid m)}$ on peer-generated text, with the bias-variance consequences for direct rollout reuse characterized in \Cref{thm:prp-bv}.

\section{Three Controlled Probes of Experience Sharing}
\label{sec:methods}

With the framework in place, we instantiate three controlled probes on top of the same base GRPO update~\citep{shao2024deepseekmath}, occupying the data, value, and outcome levels of experience sharing for matched-compute comparison and structural analysis (App.~\ref{app:theory}). Each is detailed below.

\subsection{Regime 1: Data-level experience sharing via \MOneShort{}}
\label{subsec:prp}

As a controlled probe of data-level experience sharing, the learner pools peer rollouts directly into its own batch and processes them with its standard policy gradient. Here, a learner policy $n$ augments its own generation batch with trajectories produced by its peers.
For a given prompt $x$, rather than optimizing on just its own group of $K$ outputs, the learner constructs a \textbf{unified candidate pool} $\mathcal{Y}_{\text{pool}}^{(n)} = \{y^{(n)}_{1:K}\} \cup \bigcup_{m \neq n} \{y^{(m)}_{1:K}\}$ by concatenating its own and peer generations. The learner then applies the standard GRPO update over this expanded set. Crucially, group-relative advantages $A_y$ are normalized across the entire pool $\mathcal{Y}_{\text{pool}}^{(n)}$, effectively comparing the learner's performance directly against peer baselines.
The critical design choice is the behavior-policy denominator in the importance weights $w^{(n)}_\theta$. Peer data is generated by a different behavior policy $\mu^{(m)}$ (where $m \neq n$), so PRP separates two denominator choices in the weight calculation $w_t = \frac{\pi_\theta(y_t)}{\pi_{\text{behavior}}(y_t)}$:

\begin{enumerate}[topsep=0pt,itemsep=4pt,parsep=0pt,leftmargin=*]
    \item \textbf{Learner-snapshot denominator.}
    This variant treats peer rollouts \emph{as if} they were generated by the learner's own behavior snapshot, setting $\pi_{\text{behavior}} = \pi^{(n)}_{\theta_{\mathrm{old}}}$.
    $$
    w^{(n)}_\theta(x,y) = \exp\big(\ell^{(n)}_\theta(x,y) - \ell^{(n)}_{\mathrm{old}}(x,y)\big),
    $$
    which ignores the peer's actual generation probabilities and relies on the PPO clip in Eq.~\ref{eq:grpo} to limit contributions from mismatched rollouts.

    \item \textbf{\THL{}-aligned peer denominator.}
    This variant uses the peer's recorded likelihood trace after alignment into the learner's token space through \THL{}, setting $\pi_{\text{behavior}} = \mu^{(m)}$ at the trace level.
    $$
    w^{(n)}_\theta(x,y) = \exp\big(\ell^{(n)}_\theta(x,y) - \ell^{(m)}_{\text{aligned}}(x,y)\big).
    $$
    Compared with the learner-snapshot denominator, this accounts for behavior-policy mismatch; App.~\ref{app:theory} analyzes the associated density-ratio variance term and the multiplicative \THL{} alignment-residual envelope.
\end{enumerate}

\subsection{Regime 2: Value-level experience sharing via \MThreeShort{}}
As a controlled probe of value-level experience sharing, the learner samples its own rollouts but shapes their advantages using reward statistics aggregated across the policy pool. This isolates the role of collaborative credit assignment from any direct off-policy signal, with the actor remaining strictly on-policy.

In this setup, \emph{generation is local, but evaluation is global}. Policy $n$ generates and trains solely on its own rollouts $y^{(n)}$. For each prompt $x$, the learner's local group-relative advantage is combined with a pool-aggregated counterpart computed from the per-prompt reward statistics across the learner and its $M-1$ concurrent peers, $\mathcal{R}_{\text{pool}} = \{r^{(n)}\} \cup \bigcup_{m \neq n}\{r^{(m)}\}$. The resulting effective advantage is regularized to discourage long-response domination and clipped before entering the loss; the precise mixing rule, regularizer strength, and clip width are detailed in App.~\ref{app:impl-details} (and the variance condition under which the pooled rule strictly reduces gradient variance is given in \Cref{thm:xgrpo-variance}, App.~\ref{app:theory}).

This mechanism effectively ``grades'' the learner on a population-level curve: if a peer finds a much better solution, the learner's relative advantage on its own rollouts decreases. Crucially, because only scalar reward statistics are exchanged, this regime is tokenizer-agnostic and requires no off-policy corrections.

\subsection{Regime 3: Outcome-level experience sharing via \MFourShort{}}

As a controlled probe of outcome-level experience sharing, \MFourShort{} treats peer outputs as sparse outcome certificates rather than general off-policy rollouts. The interface transfers a single verified peer success to the learner exactly when the learner's own rollout group contains no correct solution. Denote $\mathcal{S}^{(n)}_x$ as the successful rollout collection of policy $n$ under the verifier. \emph{(1) Gate:} the learner has no correct response among its $K$ rollouts ($\mathcal{S}^{(n)}_x = \varnothing$) and at least one peer has a correct solution ($\mathcal{S}^{(-n)}_x \neq \varnothing$); \emph{(2) Transfer:} a successful peer trajectory $y^* \sim \mathrm{Uniform}(\mathcal{S}^{(-n)}_x)$ is injected as a sparse positive sample for the learner via $\mathcal{L}_{\mathrm{SGT}} = - \log \pi_\theta^{(n)}(y^* \mid x)$.

The total loss for policy $n$ combines the on-policy GRPO update with the gated outcome transfer:
\begin{equation}
\label{eq:regime3_total}
\mathcal{L}^{(n)}(\theta) = \mathcal{L}_{\mathrm{GRPO}}^{(n)}(\theta) \;+\; \lambda \cdot \mathbb{I}_{\text{trigger}} \cdot \mathcal{L}_{\mathrm{SGT}}^{(n)}(\theta),
\end{equation}
where $\mathbb{I}_{\text{trigger}}$ is the binary indicator for the gate above and $\lambda$ is a balancing coefficient. \MFourShort{} differs from rejection-sampling fine-tuning~\citep{dong2023raft,xiong2025minimalist} along three axes: it is peer-conditioned (the positive trajectory is drawn from a heterogeneous peer rather than the model's own rollouts), failure-triggered (the gate fires only on the rescue subset where the learner produced no correct sample), and interleaved with on-policy GRPO rather than applied as a standalone offline objective. The auxiliary NLL supplies a missing positive trajectory when the learner's own rollout group contains none, propagating verified solutions across heterogeneous policies during the active RL loop without replacing the local GRPO objective.

\section{Experiments}
\label{sec:experiments}

The main evaluation compares the three experience-sharing probes under matched compute, hyperparameters, and deterministic validation decoding. Section~\ref{subsec:sgt-q2m-q3b} reports the mathematical-reasoning comparison, and Section~\ref{subsec:diagnostics} analyzes the optimization signatures that distinguish the three regimes. The appendix provides the supporting checks that isolate each component: \emph{(1) Infrastructure fidelity:} Appendix~\ref{subsec:verl-sanity} verifies that the VERL-based stack reproduces a standard GRPO baseline, and Appendix~\ref{app:thl-wikitext} verifies \ThlFull{} word-level mass preservation; \emph{(2) Regime controls:} Appendix~\ref{subsec:prp-ablation} isolates the role of importance correction in data-level sharing, Appendix~\ref{subsec:xgrpo-ablation} separates pooled reward statistics from random baseline noise, and Appendix~\ref{app:naive_distillation} compares online gated transfer with sequential offline SFT; \emph{(3) Outcome-level extension:} Appendix~\ref{subsec:sgt-commonsense} evaluates \MFourShort{} on commonsense and scientific QA benchmarks using the same unified answer-extraction template. See Appendix~\ref{app:impl-details} for full experimental setups.

\begin{figure*}[t]
  \centering
  \includegraphics[width=0.95\linewidth]{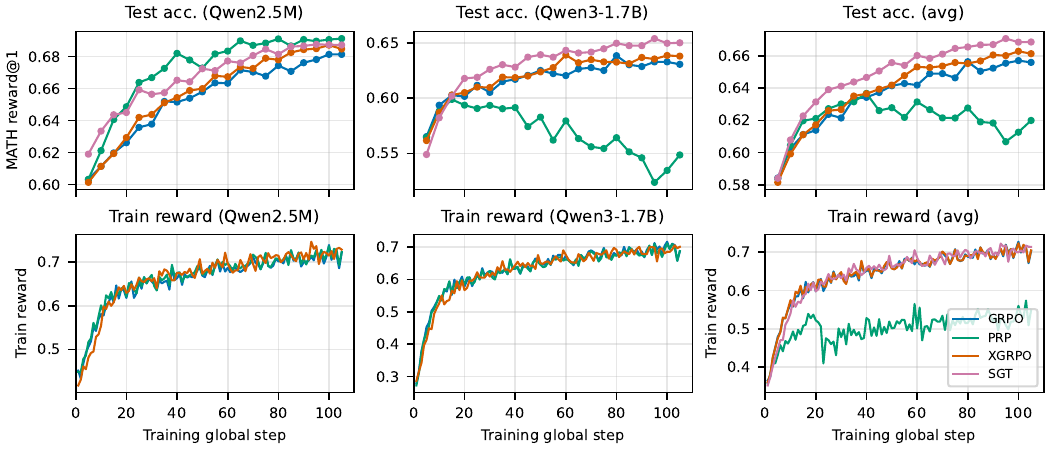}
  \caption{\textbf{Outcome-level sharing gives the best stability-support point in the two-model math comparison.} Validation MATH reward@1 (top) and training reward (bottom) for standalone GRPO, data-level sharing (\MOneShort{}), value-level sharing (\MThreeShort{}), and outcome-level sharing (\MFourShort{}). \MFourShort{} achieves the strongest average performance and faster early-to-mid training progress; \MOneShort{} exposes the sensitivity of direct rollout sharing to policy mismatch; \MThreeShort{} remains a stable low-coupling baseline. For training reward, \MOneShort{} is omitted from per-model panels because its pooling mechanism mixes training rewards for one model; it is shown only in the averaged view.}
  \vspace{-1.5em}
  \label{fig:q2m-q3b-sgt}
\end{figure*}

\subsection{Probe Comparison}
\label{subsec:sgt-q2m-q3b}

We compare the three controlled experience-sharing probes (Sec.~\ref{sec:methods}) against standalone GRPO using a heterogeneous Qwen-family pool that combines a math-specialized Qwen2.5-Math-1.5B with a general-purpose Qwen3-1.7B-Base, spanning two Qwen generations with different pretraining specializations. The Qwen family is the most widely used backbone family for controlled-pool RL post-training comparisons in contemporary verifier-based reasoning work~\citep{yu2025dapo,DBLP:DoesRLIncentivizeReasoning,zheng2025group,1-shot-RL}. Models are trained on MATH and evaluated on the test set. Each curve in Fig.~\ref{fig:q2m-q3b-sgt} is one matched-compute, matched-hyperparameter, deterministic-decoding training trajectory; reported numbers are final-checkpoint values under the same validation protocol. Appendix~\ref{subsec:sgt-commonsense} extends this controlled comparison to four policy pools spanning four open-weights model families (Qwen, Llama, Ministral, Phi-4-mini) on a broader benchmark suite, validating outcome-level transfer across model families and diagnosing the \THL{} alignment path under cross-family tokenizer mismatch (App.~\ref{app:thl-diagnostics}).

\noindent\textbf{Outcome-level sharing (\MFourShort{}) calibrates at a favorable point of the trade-off.}
Among the three probes, outcome-level experience sharing (instantiated by \MFourShort{}) calibrates at the most favorable point of the stability-support trade-off in this setting. On both backbones in the 2Qwen pool, \MFourShort{} stays at or above the standalone GRPO curve at virtually every validation checkpoint of the matched-compute training trajectory (Fig.~\ref{fig:q2m-q3b-sgt}) and reaches the standalone GRPO final-checkpoint accuracy noticeably earlier in training. Per-channel compute overhead is reported in App.~\ref{subsec:sgt-cost} (\MFourShort{} adds $0.0065\times$ one rollout, \MThreeShort{} adds zero, and \MOneShort{} adds $1\times$ rollout). Structurally, \Cref{thm:sgt-rescue-gradient} (App.~\ref{app:theory}) establishes that outcome-level sharing supplies a rescue-set score direction toward verified peer successes that the other two probes do not provide. Because \MFourShort{} only transfers verified peer successes on the rescue subset where the learner produced no correct rollout, the learner's on-policy GRPO update is preserved on every other prompt (the activation profile across pools is reported in App.~\ref{subsec:sgt-where-fires}, and the prompt-specificity of the transferred signal is verified by the matched-vs-mismatched-teacher control of App.~\ref{subsec:sgt-matched-teacher}); the peer signal supplies a missing positive trajectory rather than replacing the local objective. Per-task variation in this calibration tracks the rescue-gate frequency $\mathbb{E}_x[G_n(x)]$ in \Cref{lem:sgt-self-limit} and \Cref{prop:sgt-perturb}: tasks on which the policy pool already attains high success rates produce a smaller rescue subset, and the auxiliary signal correspondingly shrinks.

\noindent\textbf{Value-level sharing (\MThreeShort{}) preserves on-policy sampling and reduces baseline variance.}
\MThreeShort{} stays at or above the standalone GRPO curve through most of training on both backbones. Exchanging only scalar reward statistics keeps the actor strictly on-policy and preserves the sampled support of the base optimization trajectory; structurally, \Cref{thm:xgrpo-variance} gives the condition under which the pooled baseline strictly reduces gradient variance, and \Cref{thm:xgrpo-direct-support} establishes that the value-level probe contributes no direct score-function term for peer-only successful trajectories. Appendix~\ref{subsec:xgrpo-ablation} compares \MThreeShort{} against a baseline that injects random reward perturbations into the per-prompt advantage; the random baseline does not reproduce \MThreeShort{}'s curve, indicating that pooled normalization carries structured cross-policy reward information rather than acting as injected noise.

\noindent\textbf{Data-level sharing (\MOneShort{}) exposes the divergence-sensitivity of direct rollout reuse.}
\MOneShort{} characterizes the high-coupling endpoint of the design space: it transfers complete peer trajectories into the learner's GRPO update. In the two-policy setting, it benefits the stronger policy on parts of the trajectory but imposes a larger optimization burden on the weaker policy, reducing average accuracy. Structurally, \Cref{thm:prp-bv} attributes this behavior to the density-ratio variance term $1+\chi^2(\pi^{(n)}_\theta\,\|\,\mu^{(m)})$ that direct rollout reuse pays under behavior mismatch, with the \THL{}-aligned denominator entering multiplicatively through \Cref{cor:thl-ratio} (the resulting token-level importance-ratio statistics relative to shuffled and broken-alignment controls are reported in App.~\ref{subsec:thl-denominator}). \Cref{prop:prp-anti-align} additionally constructs a baseline-subtracted bandit instance on which naive peer pooling is anti-aligned with the on-policy gradient. The data-level probe therefore motivates the less tightly coupled interfaces in \MThreeShort{} and \MFourShort{}.

\begin{figure*}[t]
  \centering
  \includegraphics[width=0.95\linewidth]{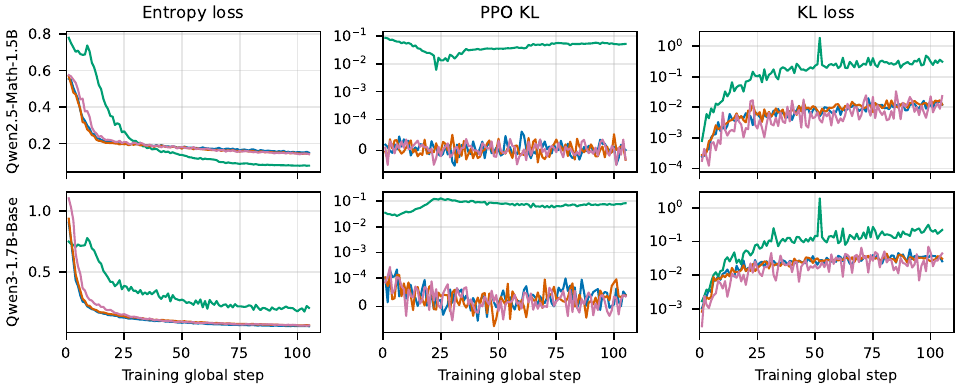}
  \vspace{-4mm}
  \caption{\textbf{The regimes have distinct optimization signatures.} Entropy loss, PPO KL coefficient, and KL loss show that \MFourShort{} preserves entropy comparable to or higher than GRPO while maintaining low KL, \MOneShort{} produces larger KL and lower entropy under direct rollout sharing, and \MThreeShort{} remains close to GRPO because it leaves actor sampling on-policy.}
  \label{fig:q2m-q3b-regularizers}
  \vspace{-4mm}
\end{figure*}

\subsection{Optimization Diagnostics}
\label{subsec:diagnostics}

Fig.~\ref{fig:q2m-q3b-regularizers} shows the optimization signature of each coupling level through entropy and KL traces.

\noindent\textbf{\MFourShort{} preserves entropy under outcome transfer.}
\MFourShort{} maintains entropy comparable to or slightly higher than the GRPO baseline, and the auxiliary outcome-transfer signal keeps KL within the same low range as the baseline. The gate transfers a verified peer success only on a prompt where the learner currently has no correct rollout, so the learner receives outcome-level evidence that complements rather than replaces its own exploration distribution.

\noindent\textbf{\MThreeShort{} preserves the trust region.}
The diagnostic traces for \MThreeShort{} are nearly indistinguishable from the single-policy baseline. By exchanging only scalar rewards and leaving token-level actor gradients on learner samples, \MThreeShort{} reduces variance in the advantage estimate without changing the policy's sampled trajectory support.

\noindent\textbf{\MOneShort{} produces stronger distribution coupling.}
\MOneShort{} drives the stronger policy's entropy below the baseline while the weaker policy's training trace shows larger KL deviations relative to GRPO. Both effects match the data-level channel coupling each learner to a peer-dominated trajectory distribution; the weaker policy in particular receives gradient signal pointing toward trajectories outside its current effective trust region, which is the regime in which direct rollout sharing requires additional distribution control.

\vspace{-2em}
\section{Conclusion}
\vspace{-1em}
\label{sec:conclusion}

Mutual RL turns peer discoveries into typed training signal for heterogeneous LLM policies. (i) The framework substrate (SEE, MWRA, THL) lets policies with different tokenizers exchange rollouts, rewards, advantages, and aligned traces while preserving native per-policy objectives. (ii) Three controlled probes populate the resulting design space on the same GRPO substrate: PRP at the data level, XGRPO at the value level, and SGT at the outcome level. (iii) The contextual-bandit characterization establishes the structural positions on the stability-support trade-off: PRP exposes density-ratio and THL-residual costs, XGRPO preserves on-policy actor support while changing baselines, and SGT supplies a gate-weighted rescue-set score direction. In the evaluated regime, the empirical curves, diagnostics, and theory agree that outcome-level sharing occupies the favorable point of this trade-off.

\bibliographystyle{plainnat}
\bibliography{icml}

\newpage
\appendix

\section*{Ethics Statement}

This work studies reinforcement learning algorithms for large language models in a controlled, multi-policy training setting. We do not collect new data, interact with human subjects, or process personally identifiable or otherwise sensitive information. All benchmarks are public and widely used in prior work. The transfer mechanism operates only through task-level verifier rewards on reasoning benchmarks, and the experiments run in research environments on open-weights base models. Responsible deployment of mutual-RL systems should pair any cross-policy transfer channel with task-appropriate verifiers, safety filters, and the upstream licenses of the models and datasets used.

\section*{LLM Use Statement}
\label{app:llm-use}

We used large language models as general-purpose assistants during this project. Concretely, LLMs were used to help with editing and paraphrasing prose, suggesting alternative phrasings for section titles and abstracts, generating boilerplate code and configuration templates, and checking for obvious inconsistencies in notation and references. All technical content, experimental designs, implementations, and analyses were authored, verified, and run by the authors, and all LLM-generated text and code was manually reviewed and edited before inclusion in the paper. 

\section{Experiment setup and implementation details}
\label{app:impl-details}
\subsection{Experimental Setup}
\label{sec:experimental-setup}

\paragraph{Benchmarks.}
The primary comparison uses \textbf{MATH}~\citep{hendrycks2021math} as the verifiable mathematical-reasoning testbed. The extension study in Appendix~\ref{subsec:sgt-commonsense} adds six public QA benchmarks spanning scientific, physical, and social reasoning: AI2-ARC, OpenBookQA, BoolQ, PIQA, HellaSwag, and Social IQa. All tasks use chain-of-thought prompting with a boxed final answer, giving the verifier a uniform answer-extraction interface across mathematical and multiple-choice settings.

Evaluation is strictly zero-shot: for each prompt we generate a single completion, extract the final answer between the outermost \verb|\boxed{}| tokens, and compare this string exactly against the canonical label. Semantically correct answers with non-matching formatting are counted as incorrect; additional text outside the \verb|\boxed{}| tokens is ignored.

\paragraph{Policy Pools and System Configurations.}
Our pools span 1.5B--4B open-weights backbones from four model families (Qwen, Llama, Ministral, Phi-4-mini), comprising Qwen2.5-Math, Qwen3-Base, Qwen3-4B-Base, Llama-3.2-3B-Instruct, Ministral-3B-Instruct, and Phi-4-mini-Instruct. The five configuration families in Tab.~\ref{tab:main-results} test two axes simultaneously: coordination across multiple policies, and signal exchange across partially or fully mismatched tokenizers from different pretraining families.
Each configuration is trained and evaluated independently.
\begin{itemize}[topsep=0pt,itemsep=0pt,parsep=0pt,partopsep=0pt,leftmargin=*]
    \item \textbf{Homogeneous Pools (2Qwen, 3Qwen, 4Qwen):} We start with a \textbf{2Qwen} pair (Qwen2.5-Math-1.5B~\citep{yang2024qwen2.5math} and Qwen3-1.7B-Base~\citep{yang2025qwen3}) and expand to \textbf{3Qwen} (adding Qwen3-4B-Base~\citep{yang2025qwen3}) and \textbf{4Qwen} (a swarm of four 1.5B/1.7B variants from Qwen2.5/Qwen3~\citep{yang2024qwen2.5math,yang2025qwen3}). These pools test coordination among related models with different capabilities and non-identical tokenizer conventions.
    \item \textbf{Heterogeneous Pools (Llama-Mist, Qwen-Phi4):} The heterogeneous pools stress the \THL{} path directly: \textbf{Llama-Mist} pairs Llama-3.2-3B-Instruct~\citep{dubey2024llama3} with Ministral-3B-Instruct~\citep{mistral2024ministral}, and \textbf{Qwen-Phi4} combines Phi-4-mini-Instruct~\citep{abouelenin2025phi4mini} with Qwen3-4B-Base~\citep{yang2025qwen3}. These configurations require cross-family retokenization and token-level trace alignment.
\end{itemize}

\paragraph{Model-family coverage.}
The five pools collectively span open-weights backbones from four model families (Qwen, Llama, Ministral, Phi-4-mini) with distinct pretraining data, instruction-tuning recipes, and tokenizer vocabularies. The Llama-Mist and Qwen-Phi4 pools each pair two backbones from different families, exercising the \THL{} alignment path on fully disjoint vocabularies.

\begin{table*}[t]
  \centering
  \small
  \caption{\textbf{Policy pools used to test coordination and tokenizer heterogeneity.} The homogeneous Qwen pools vary pool size and capability within a related family, while Llama-Mist and Qwen-Phi4 require cross-family retokenization and trace alignment.}
  \label{tab:main-results}
  \begin{tabular}{lll}
    \toprule
    Config & \# Policies & Backbones \\
    \midrule
    2Qwen & 2 & Qwen2.5-Math-1.5B; Qwen3-1.7B-Base \\
    3Qwen & 3 & Qwen2.5-Math-1.5B; Qwen3-1.7B-Base; Qwen3-4B-Base \\
    4Qwen & 4 & Four Qwen2.5/3 variants (Instruct, Coder, Math; 1.5B/1.7B) \\
    Llama-Mist & 2 & Llama-3.2-3B-Instruct; Ministral-3B-Instruct \\
    Qwen-Phi4 & 2 & Phi-4-mini-Instruct; Qwen3-4B-Base \\
    \bottomrule
  \end{tabular}
\end{table*}

\paragraph{Baselines and Regimes.}
For every backbone, we compare the mutual-RL probes against two single-policy references:
\begin{enumerate}[topsep=0pt,itemsep=0pt,parsep=0pt,partopsep=0pt,leftmargin=*]
    \item[(i)] \textbf{No Train:} Zero-shot accuracy of the pretrained model, corresponding to step 0 in the training curves.
    \item[(ii)] \textbf{Standalone GRPO:} The single-policy baseline fine-tuned in isolation using GRPO~\citep{shao2024deepseekmath}.
    \item[(iii)] \textbf{Mutual RL Regimes:} Data-level sharing via \textbf{\MOneShort{}} (Regime 1), value-level sharing via \textbf{\MThreeShort{}} (Regime 2), and outcome-level sharing via \textbf{\MFourShort{}} (Regime 3).
\end{enumerate}

\subsection{Implementation details}

\paragraph{Framework \& Compute.}
All experiments are implemented in VERL~\citep{sheng2024hybridflow} extended with our \SEE{} and \THL{} modules. Standalone and mutual-RL runs use the same training schedules, rollout counts, prompt and response limits, and validation decoding, so each regime comparison isolates the sharing channel rather than a compute or decoding change. Here ``matched compute'' refers to per-policy training schedule, rollout count, optimizer settings, and validation decoding being identical between Standalone GRPO and the mutual-RL variants; regime-specific auxiliary cost (e.g., \MFourShort{}'s sparse NLL pass and \MOneShort{}'s peer-rollout re-scoring) is reported separately in App.~\ref{subsec:sgt-cost} and is not bundled into the per-policy training budget.
All runs use a single Amazon P4 instance with eight A100 GPUs or a single Amazon G6 instance with eight L40S GPUs, with the same hardware for standalone GRPO and mutual-learning variants. Each policy is trained with FSDP on one of the eight GPUs;
\paragraph{Hyperparameters.}
We follow the matched-compute, deterministic-decoding reporting convention used in contemporary large-scale RL post-training of LLMs and diffusion models~\citep{yu2025dapo,li2025tothink,li2026branchgrpo,ding2026treegrpo}: each configuration is trained once under matched compute and hyperparameters, validation uses temperature 0, and reported numbers are taken from the final checkpoint of each training trajectory.
We train for fifteen epochs, with each GRPO update seeing a global batch of \num{1024} prompts, split into four mini-batches of \num{256} prompts and micro-batches of two prompts per GPU (sixteen prompts per micro-batch across eight GPUs). Optimization uses AdamW with learning rate \(1\times10^{-6}\), weight decay \(0.01\), gradient-norm clipping at~1, and a cosine scheduler without warmup; entropy regularization is disabled (coefficient zero), checkpoints are saved every 20 steps, validation runs every 5 steps, and the trainer is run in deterministic mode.
For the loss we use critic-free GRPO with group-relative advantages: for each prompt we sample a group of 5 candidate responses, compute scalar rewards, subtract the group mean to form advantages, and normalize them to zero mean and unit variance across the batch. The discount factor and generalized-advantage parameter are both one, so advantages depend only on per-response returns. KL penalties are applied as a separate loss term rather than folded into rewards; the per-token KL between policy and reference is scaled by a fixed coefficient of \(10^{-3}\). During training we truncate prompts at 1\,024 tokens and responses at 3\,072 tokens, generate up to five responses per prompt with temperature 1 and full nucleus sampling (\(p=1\)), and cap each rollout batch at 8\,192 tokens and 1\,024 sequences; log-prob computations allow sequences up to 16\,384 tokens per GPU, and the rollout engine uses at most 90\% of GPU memory. For validation we decode a single completion per prompt with temperature 0 (no sampling). Data for MATH and the commonsense benchmarks is converted to a unified chain-of-thought multiple-choice format (App.~\ref{app:prompt-templates}); examples whose tokenized prompts exceed 1\,024 tokens are dropped; rewards are computed by exact answer checking for MATH and by matching the correct option for multiple-choice tasks, separately for each policy.

The three sharing mechanisms modify different fields of this GRPO update. Peer Rollout Pooling (\MOneShort{}) aggregates trajectories from multiple policies and feeds each learner a mix of self and peer rollouts, downweighting off-policy contributions with an importance ratio clipped to a band of width 0.2 around one. Cross-Policy GRPO Advantage Sharing (\MThreeShort{}) averages per-prompt advantages across policies, mixes them back into each policy's own advantages with a cross-model factor of 0.2, applies a length correction of strength 0.1, and clips the resulting effective advantages before the loss. Success-Gated Transfer (\MFourShort{}) adds a failure-conditioned outcome-transfer NLL on verified peer successes: responses with rewards above 0.8 are successes, responses below 0.2 are negatives, and the auxiliary loss is scaled by $\lambda_{\mathrm{SGT}}=0.1$ relative to the main GRPO loss; if a batch contains no high-reward examples, the update is pure GRPO. Throughout, we disable the neural reward-model subsystem and rely only on verifiable benchmark-based rewards, so observed effects come from GRPO and the mutual-learning mechanisms rather than from a learned reward model.

\section{Implementation details for Success-Gated Transfer}
\label{app:sgt-details}

\MFourShort{} is implemented as a shared post-processing module on top of the VERL dataflow. The module runs once per training step after rewards (and, optionally, advantages) have been computed for each policy, and it attaches a peer success only for learner-prompt pairs on the rescue subset: the learner has no correct rollout and at least one peer has a verified success. This makes the auxiliary signal peer-conditioned, failure-triggered, and interleaved with the learner's GRPO update rather than an offline reward-ranked fine-tuning pass~\citep{dong2023raft}. Fig.~\ref{fig:q2m-q3b-sgt-entropy} shows that, on the 2Qwen MATH setup, \MFourShort{} preserves entropy relative to the standalone GRPO baseline. The module input is a list of per-policy batches, each containing tokenized prompts and responses together with (i) a binary reward $\tilde r\in\{0,1\}$ for every response and (ii) a prompt identifier shared across models for the same input.

\paragraph{Gating on failure and peer success.} The post-processor groups examples across policies by the shared prompt identifier, recording which models produced successful ($\tilde r=1$) and unsuccessful ($\tilde r=0$) responses. The experiments use binary verifier rewards; scalar verifier scores are thresholded before entering this module. For policy $n$, a prompt is eligible only when the learner's response is negative, $\tilde r^{(n)}_{x,i}=0$, and the stricter configuration used in our experiments requires all sampled responses from that policy on the prompt to be negative. The code then collects \emph{peer positives} (responses with $\tilde r^{(m)}_{x,j}=1$ from models $m\neq n$) and selects a single peer success $y_x^{\star}$ by random choice or by the configured shorter-response heuristic. A per-prompt counter caps how many SGT pairs can be formed for the same prompt, preventing a small set of hard prompts from dominating the auxiliary loss.

\paragraph{Packaging positive examples.} For every learner example that passes the gate, the implementation marks that example as using \MFourShort{} in a boolean mask. The selected peer success is decoded with the peer tokenizer and re-encoded with the learner tokenizer; the learner's original prompt tokens are reused, and the re-encoded positive response is appended to form an auxiliary sequence that the learner can score. These auxiliary sequences and their attention masks are attached to the learner's batch alongside the existing on-policy rollouts, and per-model hyperparameters such as the \MFourShort{} weighting coefficient are recorded in the batch metadata. Examples outside the rescue subset carry inactive masks and dummy placeholders so that batch shapes remain consistent.

\paragraph{Loss computation and combination with GRPO.} During the learner update, \MFourShort{} is evaluated only on examples whose SGT mask is active. For each such example, the learner re-computes log-probabilities on its own unsuccessful response $y^{-}$ and on the selected peer success $y^{\star}$. In the mode used in our experiments, the auxiliary loss is a sequence-level negative log-likelihood on the peer response, averaged per token and aggregated over all gated examples:
\[
\mathcal{L}_{\text{SGT}}^{(n)}
  = -\,\mathbb{E}_{x,i:\,\text{SGT mask}=1}
      \ell^{(n)}_{\theta}\!\big(x,y^{\star}_{x,i}\big),
\]
with standard masking and numerical clipping applied inside the implementation. The learner's total objective for policy $n$ is the sum of the base GRPO loss on on-policy rollouts and a weighted SGT term,
\[
\mathcal{L}_{\text{total}}^{(n)}
  = \mathcal{L}_{\text{GRPO}}^{(n)}
    + \lambda_{\text{SGT}}\,
      \mathcal{L}_{\text{SGT}}^{(n)},
\]
where $\lambda_{\text{SGT}}$ controls the strength of the rescue-set outcome transfer. The gate activates only when a learner fails on a prompt that some peer has solved, and at most one peer success is consumed per prompt and learner. The auxiliary loss therefore supplies a sparse positive trajectory missing from the learner's rollout group while the base GRPO update remains on-policy.

\begin{figure}[t]
  \centering
  \includegraphics[width=0.4\linewidth]{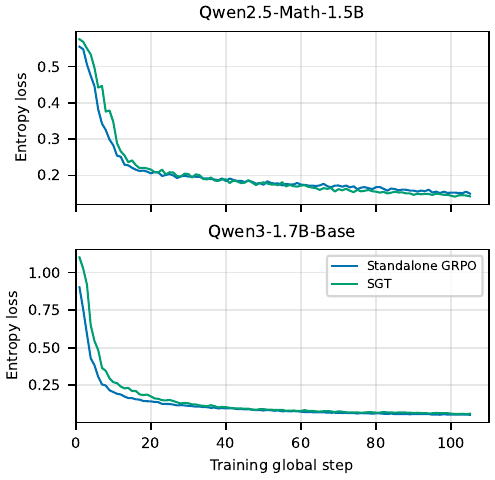}
  \caption{\textbf{SGT preserves entropy while adding rescue-set outcome transfer.} Entropy loss for the standalone GRPO baseline and SGT on Qwen2.5-Math-1.5B (top) and Qwen3-1.7B-Base (bottom). SGT keeps entropy at least as high as the GRPO baseline throughout training, consistent with a peer-only auxiliary loss that fires only on learner-failure prompts rather than replacing the on-policy GRPO update.}
  \label{fig:q2m-q3b-sgt-entropy}
\end{figure}

\section{Additional experimental results}
\label{app:extra-experiments}

\subsection{Infrastructure fidelity: reproducing VERL GRPO}
\label{subsec:verl-sanity}

Before comparing mutual-learning mechanisms, we verify that the VERL-based stack preserves standard single-policy GRPO dynamics. We run two matched Qwen2.5-Math-1.5B experiments on MATH: one launched through the original VERL codebase and one launched through our mutual-RL system built on top of VERL. Both runs use the same GRPO hyperparameters and data pipeline.

The fidelity check runs on a single Amazon G6e instance with eight NVIDIA L40S GPUs. To match the hardware shape used in the main multi-model experiments, we restrict the experiment to effectively use four GPUs, corresponding to two policies sharing the same budget. The comparison therefore isolates implementation fidelity rather than extra compute.

Fig.~\ref{fig:verl-verlhybrid-math} summarizes the comparison. The top-left panel plots test accuracy (MATH reward@1) over training steps, and the top-right panel shows the corresponding training reward. The bottom-left panel reports training throughput in tokens per second, measured using VERL's built-in logging utilities and computed only on non-evaluation steps, while the bottom-right panel tracks the policy-gradient loss. Across all four views, the two runs are closely aligned: test and train accuracy curves almost overlap, throughput is the same order of magnitude, and the PG-loss trajectories exhibit similar trends.

This experiment verifies that the mutual-RL stack reproduces the reference single-policy GRPO pipeline without introducing instability or large slowdowns. All subsequent baselines and mutual-learning variants are run within this unified stack, so regime comparisons share identical logging, data, and hardware conditions.

\begin{figure}[t]
  \centering
  \includegraphics[width=.7\linewidth]{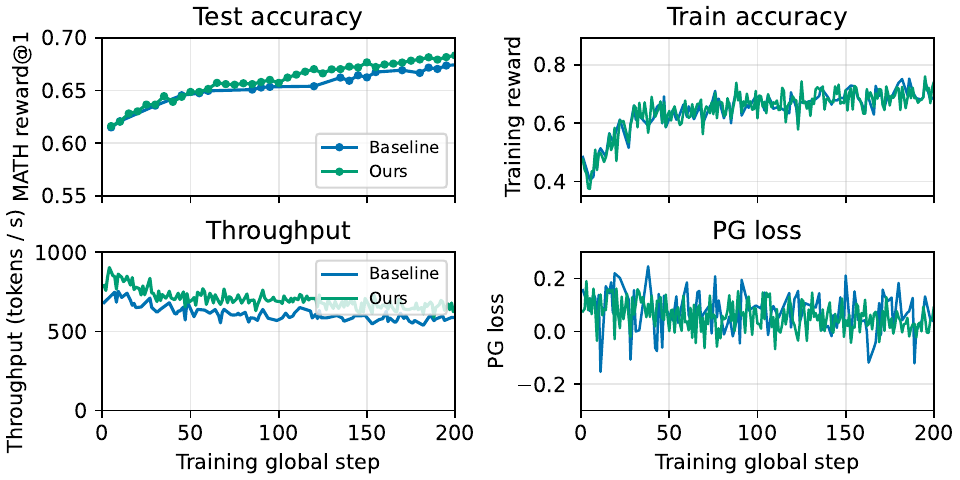}
  \caption{\textbf{System sanity check on single‑policy GRPO.} Comparison between a reference VERL implementation and our VERL‑based mutual‑RL stack on Qwen2.5‑Math‑1.5B trained on MATH. Panels show (top‑left) test accuracy (MATH reward@1), (top‑right) training reward, (bottom‑left) training throughput on non‑evaluation steps, and (bottom‑right) policy‑gradient loss as a function of training step. The curves remain well aligned across all metrics, indicating that our system faithfully reproduces the baseline GRPO dynamics while introducing only minor variations attributable to stochasticity.}
  \label{fig:verl-verlhybrid-math}
\end{figure}
 
\subsection{Sanity checks on THL word alignment}
\label{app:thl-wikitext}

\begin{table}[t]
  \centering
  \small
  \caption{\textbf{\THL{} preserves source word-level mass on WikiText-103.} For each source$\rightarrow$target model pair we report per-word mass errors between source and THL-aligned sequences (Src$\to$Aligned MAE / Max), between THL-aligned and target sequences (Aligned$\to$Target MAE / Max), and the maximum absolute prefix deviation (Prefix Leak Max), aggregated over \num{1024} spans.}
  \label{tab:thl-wikitext}
  \begin{tabular}{lrrrrrr}
    \toprule
    Pair & \multicolumn{2}{c}{Src$\to$Aligned} & \multicolumn{2}{c}{Aligned$\to$Target} & Prefix Leak \\
    \cmidrule(lr){2-3} \cmidrule(lr){4-5}
         & MAE & Max & MAE & Max & Max \\
    \midrule
    Qwen3-1.7B $\rightarrow$ Qwen3-1.7B & 0. & 0. & 0. & 0. & 0. \\
    Qwen2-1.5B $\rightarrow$ Qwen2-1.5B & 0. & 0. & 0. & 0. & 0. \\
    Llama3.2-1B $\rightarrow$ Llama3.2-1B & 0. & 0. & 0. & 0. & 0. \\
    Qwen3-1.7B $\rightarrow$ Llama3.2-1B & 0. & 0. & 12.7053 & 122.7694 & 0. \\
    \bottomrule
  \end{tabular}
\end{table}

To validate that \THL{} preserves word-level probability mass when retokenizing between models, we run a diagnostic on WikiText-103~\cite{merity2016wikitext}. We sample \num{1024} contiguous text spans from the training split of the \texttt{wikitext-103-raw-v1} dataset. Each span contains between 5 and 80 tokens under the Qwen3 tokenizer so that all models see comparable prompts. For each span we compute per-token log-probabilities under three causal language models (Qwen3-1.7B, Qwen2-1.5B, and Llama-3.2-1B) using right-padded batches and a single forward pass per model.

Given a source-target pair, we feed the source log-probabilities and masks through \THL{} to obtain a target-tokenized aligned trace. For each text and each whitespace-delimited word we sum the token log-probabilities belonging to that word, obtaining three per-word totals: source, THL-aligned, and target. The \emph{Src$\to$Aligned} metrics test conservation of the source model's word-level mass; the \emph{Aligned$\to$Target} metrics measure the expected difference between source-derived scores and the target model's native logits. Prefix Leak Max is the maximum absolute prefix sum of source-aligned word differences and detects drift across word boundaries.

Tab.~\ref{tab:thl-wikitext} summarizes the results. In the self-alignment rows (Qwen3$\rightarrow$Qwen3, Qwen2$\rightarrow$Qwen2, Llama3.2$\rightarrow$Llama3.2), both Src$\to$Aligned and Aligned$\to$Target MAE/Max are numerically zero. For the cross-model case (Qwen3-1.7B$\rightarrow$Llama-3.2-1B), Src$\to$Aligned remains zero: \THL{} preserves the Qwen3 word totals. Aligned$\to$Target MAE/Max is large because Qwen3-derived scores need not equal Llama-3.2 native logits. Prefix Leak Max is zero in all rows, showing no cumulative drift across word boundaries.

\begin{algorithm}[t]
\small
\caption{THL: PerTokenWordMapBySegments}
\label{alg:thl-word-map}
\begin{algorithmic}[1]
\REQUIRE $text$; tokenizer (no specials)
\ENSURE $(full\_ids,\ word\_map)$ where $full\_ids$ is tokenization of $text$ (no specials)
\ENSURE $word\_map$ matches $full\_ids$ in length with word index per token or \texttt{None} for leading delimiters
\STATE $full\_ids \leftarrow$ Tokenize($text$, add\_special\_tokens=false)
\STATE spans $\leftarrow$ WordSpans($text$)
\STATE $word\_map \leftarrow [\;]$; $built\_ids \leftarrow [\;]$; $prev\_end \leftarrow 0$
\FOR{$w\_idx, (start, end)$ in enumerate(spans)}
  \STATE $seg \leftarrow text[prev\_end{:}end]$
  \STATE $seg\_ids \leftarrow$ Tokenize($seg$, add\_special\_tokens=false)
  \STATE Append $seg\_ids$ to $built\_ids$
  \IF{$prev\_end = 0$ and $start > 0$}
    \STATE $lead \leftarrow text[:start]$
    \STATE $n\_lead \leftarrow$ Len(Tokenize($lead$, add\_special\_tokens=false))
    \STATE Append $n\_lead$ entries of \texttt{None} to $word\_map$
    \STATE Append $(\text{Len}(seg\_ids) - n\_lead)$ entries of $w\_idx$ to $word\_map$
  \ELSE
    \STATE Append $\text{Len}(seg\_ids)$ entries of $w\_idx$ to $word\_map$
  \ENDIF
  \STATE $prev\_end \leftarrow end$
\ENDFOR
\IF[best-effort alignment]{$\text{Len}(built\_ids) \ne \text{Len}(full\_ids)$}
  \STATE $L \leftarrow \min(\text{Len}(built\_ids),\ \text{Len}(full\_ids))$
  \STATE Truncate $built\_ids \leftarrow built\_ids[:L]$; $word\_map \leftarrow word\_map[:L]$; $full\_ids \leftarrow full\_ids[:L]$
\ENDIF
\STATE \textbf{return} $(full\_ids,\ word\_map)$
\end{algorithmic}
\end{algorithm}

\subsection{Role of Importance Correction in Data-Level Sharing}
\label{subsec:prp-ablation}

We ablate the behavior-policy denominator inside \MOneShort{} on the 2Qwen setup. We compare two instantiations: (1) \textbf{Naive Pooling}, which treats peer rollouts as if they were generated by the learner's own behavior snapshot, and (2) \textbf{Importance-Corrected Pooling}, which uses the peer likelihood trace aligned into the learner token space by \THL{} and applies PPO clipping. Fig.~\ref{fig:prp-ablation} reports validation accuracy.

\paragraph{Effect of importance correction.}
On Qwen2.5-Math-1.5B, removing importance correction reduces final accuracy from 69\% to 58\%. On Qwen3-1.7B-Base, the uncorrected variant tracks below the corrected curve and drifts from a peak near 54\% down to 46\% as training progresses. Treating off-policy peer trajectories without distribution control produces gradients that are poorly matched to the learner's effective trust region, making importance correction the baseline requirement for data-level sharing.

\paragraph{System validation of \THL{}.}
The Qwen2.5 and Qwen3 families use different tokenizers, so the corrected denominator is computed through \THL{}-aligned peer traces. The corrected curve closes the gap on Qwen2.5 and stabilizes Qwen3, showing that the \THL{}-aligned denominator is usable across tokenizer families.

\paragraph{Reading the result.}
Importance correction is the floor for direct rollout sharing, not the ceiling: it prevents divergence but does not by itself reach the accuracy of outcome-level sharing (Fig.~\ref{fig:q2m-q3b-sgt}). The data-level channel therefore serves as a stress test for the design space and motivates the less tightly coupled interfaces in \MThreeShort{} and \MFourShort{}.

\begin{figure}[t]
  \centering
  \includegraphics[width=.6\linewidth]{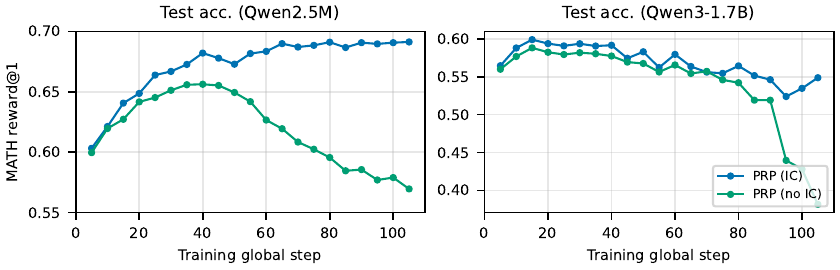}
  \caption{\textbf{\THL{}-aligned importance correction is required for tractable data-level sharing.} Validation MATH reward@1 for \MOneShort{} with and without the peer-denominator correction. The naive-pooling curves (light green) lie below the importance-corrected curves (dark green) on both policies, with a larger gap on the weaker Qwen3-1.7B-Base policy.}
  \label{fig:prp-ablation}
\end{figure}

\subsection{Ablation: shared normalization for XGRPO}
\label{subsec:xgrpo-ablation}

Cross-Policy GRPO Advantage Sharing (\MThreeShort{}) normalizes critic-free advantages using reward statistics pooled across policies, whereas the standalone GRPO baseline computes group-normalized advantages independently for each model. To disentangle the effect of cross-policy normalization from generic baseline noise, we compare three regimes on the same 2Qwen MATH configuration used in Sec.~\ref{subsec:sgt-q2m-q3b}: standalone GRPO, XGRPO, and a GRPO variant where each model still uses its own rollouts but we inject random perturbations into the per-prompt baseline used for advantage computation (the \texttt{id2mean} values) before normalization. Fig.~\ref{fig:xgrpo-ablation} reports validation MATH reward@1 for both math backbones.

\begin{figure}[t]
  \centering
  \includegraphics[width=.6\linewidth]{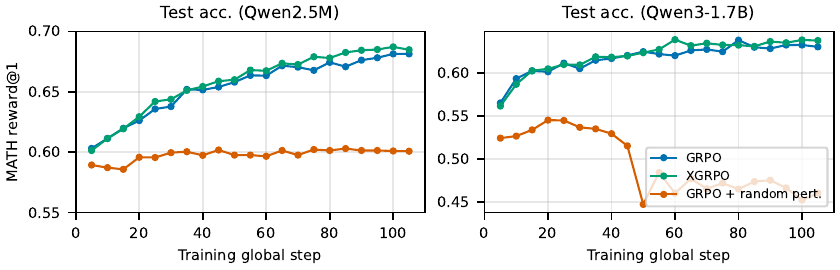}
  \caption{\textbf{XGRPO gains come from structured cross-policy normalization.} Validation MATH reward@1 for standalone GRPO, XGRPO, and a GRPO + random-perturbation baseline (GRPO + random pert.), which injects noise into the per-prompt baseline before computing advantages. Panels show the Qwen2.5-Math-1.5B policy (left) and the Qwen3-1.7B-Base policy (right). XGRPO matches or exceeds GRPO on both models, while GRPO + random pert. lags behind.}
  \label{fig:xgrpo-ablation}
\end{figure}

Across both backbones, XGRPO tracks the GRPO baseline closely in the early stages of training and attains higher final MATH accuracy, matching the value-level pattern in Fig.~\ref{fig:q2m-q3b-sgt}. In contrast, GRPO + random pert.\ is worse than both GRPO and XGRPO, especially on Qwen3-1.7B-Base, where noisy baselines slow convergence and degrade the final score. The control shows that XGRPO leverages informative cross-policy reward structure rather than acting as an implicit regularizer reproduced by random baseline noise.

\subsection{Value-Level Sharing Across Model Pools}
\label{subsec:xgrpo-extended}

We extend the evaluation of \MThreeShort{} (Regime 2) from the 2Qwen setup to three additional configurations from Tab.~\ref{tab:main-results}: \textbf{4Qwen} (a homogeneous swarm of four 1B-level variants), \textbf{3Qwen} (mixing 1B- and 4B-level backbones), and \textbf{Qwen-Phi4} (a heterogeneous Phi-4 / Qwen3 pairing). For each pool we track the average validation MATH reward@1, with training data, hyperparameters, and compute budgets strictly matched between Standalone GRPO and \MThreeShort{}.

Fig.~\ref{fig:xgrpo-ext-avg} summarizes the learning dynamics. Across all three configurations, \MThreeShort{} tracks Standalone GRPO closely and produces final-accuracy improvements in the pools where pooled scalar rewards add useful baseline information.

\paragraph{Interpreting stable value-level sharing.}
\MThreeShort{} remains close to standalone GRPO across larger and more heterogeneous pools because it exchanges only scalar reward statistics and preserves the base actor sampling distribution. This makes value-level sharing the low-coupling point of the design space: it can improve baseline variance without introducing peer trajectories or verified successes. Outcome-level transfer (\MFourShort{}) supplies the direct support-expanding signal; \MThreeShort{} complements that channel by stabilizing value estimates.

\begin{figure}[t]
  \centering
  \includegraphics[width=.75\linewidth]{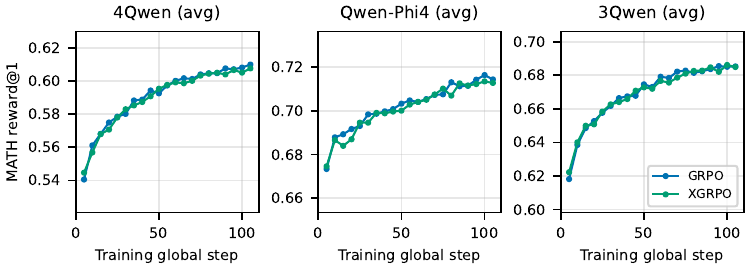}
  \caption{\textbf{Value-level sharing preserves the GRPO stability profile across model pools.} Averaged validation MATH reward@1 for Standalone GRPO vs.\ \MThreeShort{} across three pools. The overlapping curves show that pooled scalar reward sharing remains a low-coupling intervention as pool size and heterogeneity increase.}
  \label{fig:xgrpo-ext-avg}
\end{figure}

\subsection{Online Gating versus Sequential Peer Distillation}
\label{app:naive_distillation}

We compare \MFourShort{} against \textbf{Sequential Peer Distillation}, a control that removes the two structural ingredients of outcome-level transfer: online failure gating and interleaving with the learner's GRPO update.

\paragraph{Setup.}
The offline control is a ``Plain RL $\to$ SFT'' pipeline:
\begin{enumerate}
    \item \textbf{Phase 1 (Data Collection):} Two models (Qwen2.5-Math and Qwen-3-1.7B) are trained independently via standard single-policy GRPO to convergence, matching the Standalone GRPO setup. All distinct successful trajectories generated during their MATH-train runs are harvested.
    \item \textbf{Phase 2 (Peer SFT):} The successful trajectories from the \emph{peer} model are used to fine-tune the \emph{learner} via supervised fine-tuning.
\end{enumerate}

\textbf{SFT hyperparameters.} Learning rate 2e-5 with cosine schedule and 0.03 warmup; LoRA (rank=16, alpha=32, dropout=0.05); 1 epoch. The SFT loss matches the \MFourShort{} auxiliary NLL but is applied offline, without the rescue gate or the concurrent on-policy GRPO update.

\begin{figure}[h]
\centering
\includegraphics[width=0.5\linewidth]{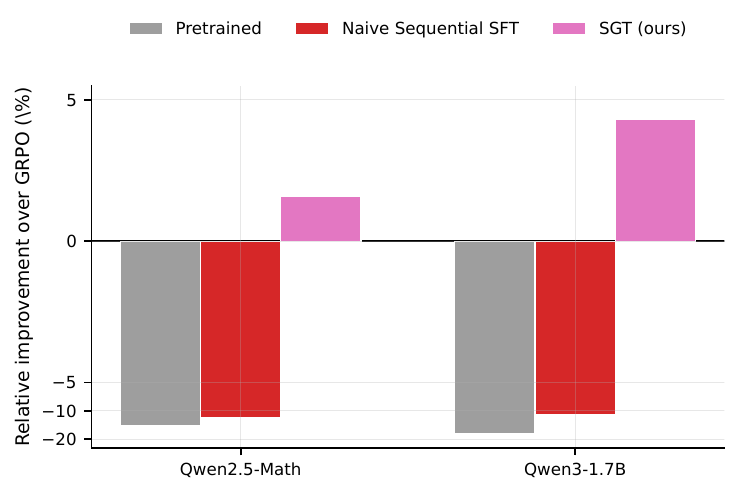}
\caption{\textbf{Online gating versus sequential offline distillation on MATH.} Each bar shows the relative MATH-accuracy improvement over the Standalone GRPO baseline (0 line) on Qwen2.5-Math-1.5B and Qwen3-1.7B-Base. Naive Sequential SFT fine-tunes on peer successes after standalone RL completes and recovers only part of the deficit relative to the pretrained backbone, leaving it well below the RL baseline; \MFourShort{} (online gating, concurrent with GRPO) is the only treatment that exceeds Standalone GRPO on both backbones, with the larger margin on Qwen3-1.7B-Base. Relative to the pretrained backbones, Standalone GRPO contributes $+16.9\%$ and $+21.5\%$ MATH improvement, and \MFourShort{} contributes $+17.9\%$ and $+25.3\%$ on Qwen2.5-Math and Qwen3-1.7B-Base respectively.}
\label{fig:naive_sft}
\end{figure}

\paragraph{Why online gating matters.}
Sequential SFT trails Standalone GRPO by 8.1 points on Qwen2.5-Math and 6.9 points on Qwen-3-1.7B; \MFourShort{} matches or improves on Standalone GRPO under the same budget. Two design factors drive this difference:

\begin{itemize}
    \item \textbf{Concurrent on-policy update.} \MFourShort{} interleaves the auxiliary peer loss with the learner's GRPO step, so the KL constraint of the on-policy update keeps the learner near its own reasoning distribution while the peer trajectory supplies a sparse positive sample. Sequential SFT removes this anchor and pushes the learner toward the peer's full output distribution.
    \item \textbf{Failure-conditioned gating.} \MFourShort{} only activates when the learner's own rollout group contains no correct sample on the prompt. Sequential SFT applies the peer trace on every prompt, including ones where the learner already has a correct trajectory, replacing the learner's own positive signal with the peer's.
\end{itemize}

The interface choice (peer-conditioned, failure-triggered, and interleaved with on-policy GRPO) separates outcome-level transfer from offline distillation.
\subsection{MATH and Commonsense Reasoning with Regime 3}
\label{subsec:sgt-commonsense}

We extend the evaluation of \MFourShort{} from the 2Qwen MATH configuration to a broader benchmark suite. We combine \textbf{MATH}~\citep{hendrycks2021math} with six datasets spanning scientific, physical, and social reasoning: \textbf{AI2-ARC}~\citep{clark2018arc}, \textbf{OpenBookQA}~\citep{mihaylov2018openbookqa}, \textbf{BoolQ}~\citep{clark2019boolq}, \textbf{PIQA}~\citep{bisk2020piqa}, \textbf{HellaSwag}~\citep{zellers2019hellaswag}, and \textbf{Social IQa}~\citep{sap2019socialiqa}. We aggregate these training sets into a unified mixture and evaluate on their respective test sets. We run a focused 50-step training with a global batch size of 1,024 inputs per step.

To support automated reward verification, all inputs use a unified prompt template (Appendix~\ref{app:prompt-templates}). The same template is applied to Standalone GRPO and \MFourShort{}, so the comparison measures the gain from outcome-level transfer under identical decoding and answer-extraction conditions; pretrained accuracy serves as a qualitative reference under the same template.

Fig.~\ref{fig:sgt-step50-commonsense} reports accuracy for No-Train, Standalone GRPO, and \MFourShort{} across four policy pools: \textbf{3Qwen}, \textbf{4Qwen}, \textbf{Llama-Mist}, and \textbf{Qwen-Phi4}.

\paragraph{Robust across diverse reasoning tasks.}
\MFourShort{} matches or improves on Standalone GRPO on most tasks in the homogeneous 3Qwen and 4Qwen pools. In the heterogeneous Qwen-Phi4 configuration, \MFourShort{} attains the highest score on most tasks, with gains of $+2$ to $+3$ percentage points on HellaSwag and Social IQa. Outcome-level transfer therefore extends beyond MATH to broader commonsense domains under the same verifier-driven interface.

\paragraph{Larger gains under heterogeneous pools.}
The largest gains appear on the \textbf{Llama-Mist} configuration, where \MFourShort{} adds $+10$ to $+30$ points over Standalone GRPO on several datasets. Llama-3.2 and Ministral-3B traverse distinct regions of the solution space, and \MFourShort{} routes verified successes between them on prompts where one model fails. This setup also exercises the \THL{} pipeline across fully disjoint vocabularies, since the two backbones share no tokenizer.

\paragraph{Performance under unified-template evaluation.}
Several base models start from low No-Train scores under the unified template (Appendix~\ref{app:prompt-templates}). The gain from \MFourShort{} over Standalone GRPO is computed under matched decoding and answer extraction, isolating the channel comparison from template effects.

\begin{figure*}[t]
  \centering
  \begin{minipage}{0.95\linewidth}
    \centering
    \includegraphics[width=\linewidth]{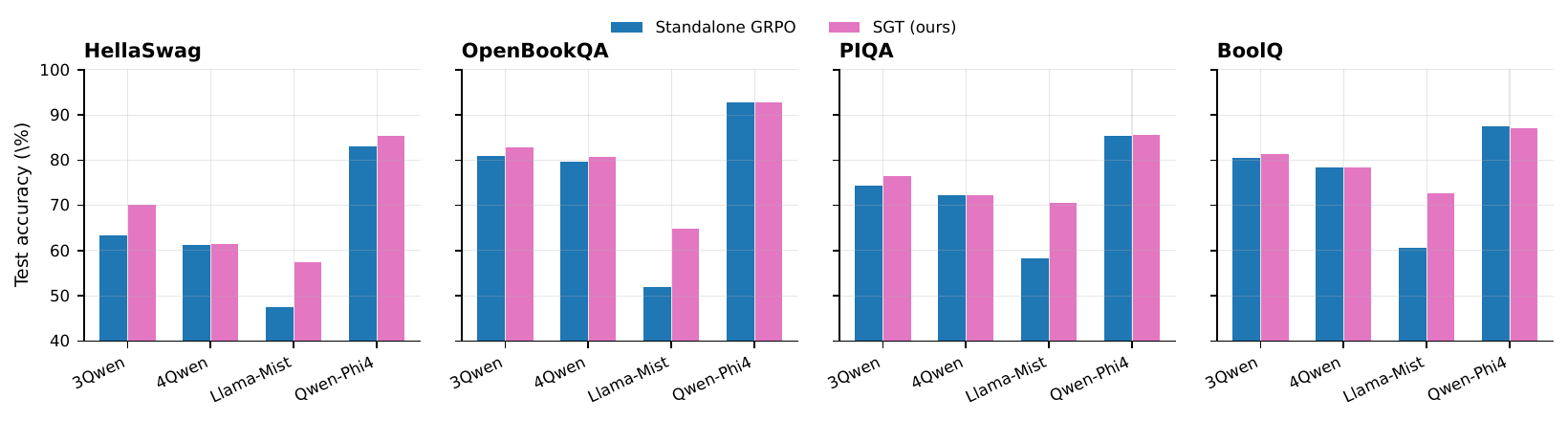}
  \end{minipage}\\[0.4em]
  \begin{minipage}{0.72\linewidth}
    \centering
    \includegraphics[width=\linewidth]{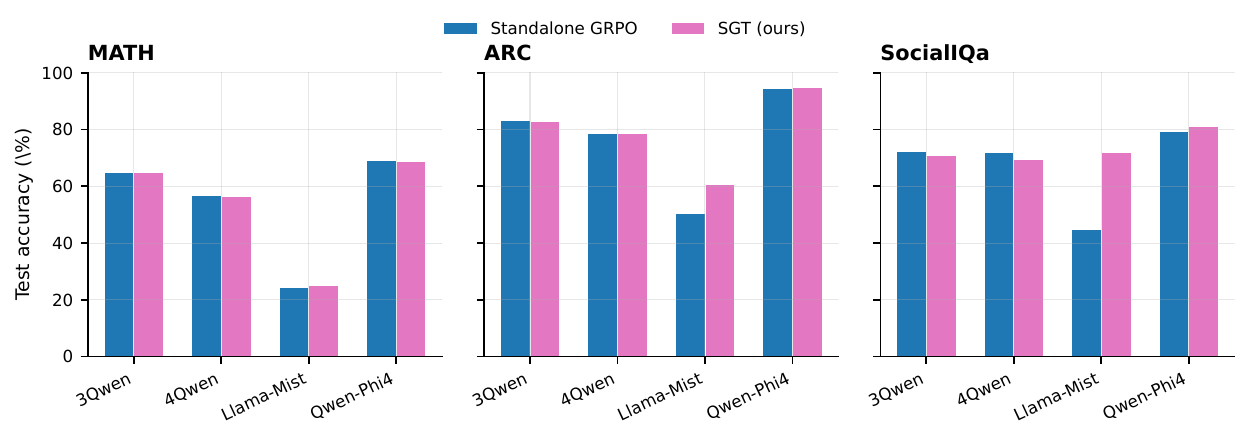}
  \end{minipage}
  \caption{\textbf{Outcome-level transfer extends beyond the main MATH comparison.} Per-task test accuracy on MATH and six commonsense/scientific QA benchmarks for \MFourShort{} compared with Standalone GRPO across four policy pools (3Qwen, 4Qwen, Llama-Mist, Qwen-Phi4). Top row groups the four datasets on which \MFourShort{} adds support across pools (HellaSwag, OpenBookQA, PIQA, BoolQ); bottom row groups the remaining datasets where SGT either matches GRPO at the verifier-grounded MATH task or contributes its largest absolute heterogeneous-pool gains on ARC and SocialIQa. The Llama-Mist pool consistently shows the largest \MFourShort{} gains, consistent with rescue-set transfer between policies that traverse disjoint regions of the solution space.}
  \label{fig:sgt-step50-commonsense}
\end{figure*}

Together with the learning curves in Sec.~\ref{subsec:sgt-q2m-q3b}, these results show that outcome-level transfer propagates verified peer successes across both homogeneous and heterogeneous pools.

\section{Detailed Logits Word Alignment}
\label{app:word-align}
 
This appendix gives the pseudocode for the \THL{} word-level trace-alignment routine that maps source-model per-token log-probabilities onto target-model token positions. The main text (Sec.~\ref{sec:system-design}) defines the mass-preservation rule and the residual bound; this appendix records the concrete span, token-map, and redistribution procedures. The default routine assumes response tokens do not span two distinct whitespace-delimited words, as is typical for BPE/SPM tokenizers that encode spaces with dedicated prefix symbols. When straddling tokens are detected, the implementation apportions values by character overlap or falls back to full-sequence retokenization for that sample.

\subsection{THL WordSpans}
WordSpans creates the tokenizer-agnostic coordinate system used by \THL{}. Direct token-to-token mapping is ill-defined because two tokenizers can segment the same string differently, for example "algorithm" as ["al", "go", "rithm"] versus ["alg", "orithm"]. Algorithm~\ref{alg:thl-word-spans} instead anchors alignment on the raw UTF-8 text.

The procedure scans the string for maximal non-whitespace spans using the regular expression \texttt{\textbackslash S+}. For each match it records the start index and exclusive end index, producing a list of character intervals $(s_k,e_k)$. These intervals are independent of any vocabulary or merge table.

The returned spans define the windows over which source log-probability mass is aggregated and then redistributed onto target tokens. This converts cross-tokenizer alignment from discrete token matching into probability-mass accounting over shared character spans.

\subsection{THL WordAlignLogProbs}
Given WordSpans, WordAlignLogProbs projects a source log-probability trace onto the target tokenizer grid. Algorithm~\ref{alg:thl-word-align} preserves each word's total source log-probability by aggregating over source tokens assigned to that word and redistributing the sum over target tokens assigned to the same word.

The algorithm initializes the output tensor \texttt{out} with an \texttt{ignore\_value}, so padding and masked prompt tokens remain inactive in downstream losses. For each sequence, PerTokenWordMapBySegments links source and target subword tokens to the WordSpans index they cover. Distinct source and target token indices can therefore refer to the same textual word $W_k$.

The transformation has two passes. In the aggregation pass, the algorithm sums source response-token log-probabilities for each word $W_k$:
\[
Z_k=\sum_{t\in W_k}\texttt{source\_log\_probs}[t].
\]
In the redistribution pass, it counts the target tokens $cnt$ assigned to $W_k$ and assigns $Z_k / cnt$ to each of them. The sum over target tokens for the word is then exactly $Z_k$, which is the conservation property used in \Cref{thm:thl-residual}. Uniform redistribution avoids token-count inflation or deflation when source and target tokenizers split the same text into different numbers of subwords.
\begin{algorithm}[t]
\small
\caption{THL: WordSpans (whitespace-delimited spans)}
\label{alg:thl-word-spans}
\begin{algorithmic}[1]
\REQUIRE $text$ (UTF-8 string)
\ENSURE list of spans $[(s_k,e_k)]$ for each maximal non-space run
\STATE spans $\leftarrow [\;]$
\FOR{each regex match $m$ of \texttt{/\textbackslash\textbackslash S+/} in $text$}
  \STATE Append $(m.start,\; m.end)$ to spans
\ENDFOR
\STATE \textbf{return} spans
\end{algorithmic}
\end{algorithm}

\subsection{THL PerTokenWordMapBySegments}
Algorithm~\ref{alg:thl-word-map} links tokenizer-specific subword tokens to the tokenizer-agnostic WordSpans indices. Character offsets alone are insufficient because tokenizers can attach spaces to neighboring tokens or apply context-sensitive merges. The routine therefore reconstructs the tokenization span by span.

For each word span, the algorithm tokenizes the substring from the previous span boundary through the current word end. It appends the resulting token IDs to \texttt{built\_ids} and records the current word index in the parallel \texttt{word\_map}. This creates the token-to-word attribution table consumed by WordAlignLogProbs.

If the text begins with whitespace, the leading delimiter tokens receive \texttt{None} rather than the first word index, so that pure structural delimiter mass is not attributed to the first content word and does not skew the downstream probability alignment.

The final consistency check compares the incrementally reconstructed \texttt{built\_ids} with the full tokenizer output \texttt{full\_ids}, since context-sensitive tokenizer merges can drift between local and global tokenization. When a mismatch occurs, the routine truncates to the common valid prefix, which keeps the map shape-consistent with the token sequence and prevents invalid indexing in the training loop.
\begin{algorithm}[t]
\small
\caption{THL: WordAlignLogProbs (word-level normalization)}
\label{alg:thl-word-align}
\begin{algorithmic}[1]
\REQUIRE texts $[t_i]_{i=1..B}$;\; source\_log\_probs $\in \mathbb{R}^{B\times S_{src}}$;\; source\_response\_mask $\in \{0,1\}^{B\times S_{src}}$
\REQUIRE target\_response\_mask $\in \{0,1\}^{B\times S_{tgt}}$;\; source\_tok;\; target\_tok;\; ignore\_value
\ENSURE out $\in \mathbb{R}^{B\times S_{tgt}}$ with values only where target\_response\_mask==1
\STATE out $\leftarrow$ FullLike(target\_response\_mask, fill=ignore\_value, dtype=source\_log\_probs.dtype)
\FOR{$i \gets 1$ to $B$}
  \STATE $text \leftarrow t_i$
  \STATE $(src\_ids, src\_word\_map) \leftarrow$ PerTokenWordMapBySegments$(text,$ source\_tok$)$
  \STATE $(tgt\_ids, tgt\_word\_map) \leftarrow$ PerTokenWordMapBySegments$(text,$ target\_tok$)$
  \STATE $src\_idx \leftarrow \{j: \text{source\_response\_mask}[i,j]=1\}$
  \STATE $tgt\_idx \leftarrow \{j: \text{target\_response\_mask}[i,j]=1\}$
  \STATE $L_{src} \leftarrow \min(|src\_idx|, |src\_ids|)$;\; $L_{tgt} \leftarrow \min(|tgt\_idx|, |tgt\_ids|)$
  \STATE per\_word\_sum $\leftarrow$ empty map (word\_id $\to$ sum)
  \FOR[sum source log-probs per word]{$p \gets 0$ to $L_{src}-1$}
    \STATE $wid \leftarrow src\_word\_map[p]$; \textbf{continue} if $wid=\text{None}$
    \STATE per\_word\_sum[$wid$] $\leftarrow$ per\_word\_sum.get($wid$,0) $+$ source\_log\_probs[$i,\; src\_idx[p]$]
  \ENDFOR
  \IF{per\_word\_sum is empty} \STATE \textbf{continue} \ENDIF
  \STATE per\_word\_count $\leftarrow$ empty map (word\_id $\to$ count)
  \FOR[count target tokens per word to fill]{$p \gets 0$ to $L_{tgt}-1$}
    \STATE $wid \leftarrow tgt\_word\_map[p]$; \textbf{continue} if $wid=\text{None}$
    \STATE per\_word\_count[$wid$] $\leftarrow$ per\_word\_count.get($wid$,0) $+ 1$
  \ENDFOR
  \FOR[assign normalized values]{$p \gets 0$ to $L_{tgt}-1$}
    \STATE $wid \leftarrow tgt\_word\_map[p]$; \textbf{continue} if $wid=\text{None}$
    \STATE $cnt \leftarrow$ per\_word\_count.get($wid$,0)$;$ \textbf{continue} if $cnt \le 0$
    \STATE $val \leftarrow$ per\_word\_sum.get($wid$,0) $/ cnt$
    \STATE out[$i,\; tgt\_idx[p]$] $\leftarrow val$
  \ENDFOR
\ENDFOR
\STATE \textbf{return} out
\end{algorithmic}
\end{algorithm}

\section{THL alignment diagnostics}
\label{app:thl-diagnostics}

This appendix quantifies three properties of \THL{} beyond the qualitative description in Sec.~\ref{sec:system-design}: (i) alignment error as a function of response length, (ii) the extension to non-whitespace scripts, and (iii) the token-level importance-ratio distribution induced by the \THL{}-aligned peer denominator. All three diagnostics run on frozen models with no parameter updates, isolating \THL{} as a logit-alignment primitive.

\subsection{Long-context alignment error}
\label{subsec:thl-long-context}

We evaluate alignment error as a function of response length on WikiText-103~\citep{merity2016wikitext} spans drawn at four target-token budgets: 1024, 2048, 4096, and 8192 tokens. For each length we sample 64 sequences, giving 256 sequences per model pair. The metric is the relative mean absolute error between the source per-token log-probability tensor and the target-aligned tensor produced by \THL{}, normalized by the mean absolute source word mass on the same span.

The baseline copies source response-token log-probabilities to target response positions at the same indices, truncates excess source tokens, and pads uncovered target positions with zero. Tab.~\ref{tab:thl-long-context} reports the cross-family pair Qwen3-1.7B-Base $\to$ Phi-4-mini-Instruct.

\begin{table}[h]
\centering
\caption{\textbf{Long-context alignment error on heterogeneous Qwen3-1.7B-Base $\to$ Phi-4-mini-Instruct.} Relative mean absolute error (\%) between source per-token log-probabilities and the target-aligned tensor, as a function of target response length. \THL{} stays below 0.03\% across all four lengths, while the position-copy baseline incurs roughly 95--105\% relative error.}
\label{tab:thl-long-context}
\begin{tabular}{rccc}
\toprule
\textbf{Target length} & \textbf{\THL{} rel.\ MAE (\%)} & \textbf{Naive copy/truncate/pad rel.\ MAE (\%)} & \textbf{\THL{} reduction (\%)} \\
\midrule
1024 & 0.0109 & 95.4193 & 99.9876 \\
2048 & 0.0144 & 101.1483 & 99.9859 \\
4096 & 0.0252 & 103.6427 & 99.9758 \\
8192 & 0.0247 & 105.2713 & 99.9768 \\
\bottomrule
\end{tabular}
\end{table}

\THL{} relative error stays in the $10^{-2}$\% range across all four lengths. The position-copy baseline sits at 95--105\% relative error throughout: index-level copying does not approximate the source mass when subword boundaries disagree.

\subsection{Non-whitespace languages}
\label{subsec:thl-cjk}

For non-whitespace scripts such as CJK, the default whitespace anchor degenerates because a single run can span an entire sentence. We replace it with a per-character segmentation when CJK code points are detected, leaving the rest of the alignment pipeline (per-token word map, aggregation, and redistribution) unchanged. Errors are measured per word for Western text and per character for CJK text, matching the unit at which probability mass is conserved.

\begin{table}[h]
\centering
\caption{\textbf{Multilingual alignment error.} Relative MAE (\%) for the original word-level rule, the character-span rule, and an automatic switch that detects CJK code points. The auto-switch reduces CJK error from over 120\% to at most $\sim$1\% on both pools and recovers the original behaviour on Western text.}
\label{tab:thl-cjk}
\begin{tabular}{llccc}
\toprule
\textbf{Pool} & \textbf{Text family} & \makecell{\textbf{Word-level} \\ \textbf{rel.\ MAE (\%)}} & \makecell{\textbf{Char-level} \\ \textbf{rel.\ MAE (\%)}} & \makecell{\textbf{Auto-switch} \\ \textbf{rel.\ MAE (\%)}} \\
\midrule
\multirow{2}{*}{2Qwen} & Western & 0 & 0 & 0 \\
                      & CJK     & 122.2186 & 0 & 0 \\
\midrule
\multirow{2}{*}{Qwen-Phi4} & Western & 0 & 0 & 0 \\
                          & CJK     & 123.0206 & 0.9199 & 0.9199 \\
\bottomrule
\end{tabular}
\end{table}

The original rule is exact on Western (whitespace-delimited) text, exact on the homogeneous 2Qwen pool with character spans, and within $\sim$1\% relative MAE on the cross-family Qwen-Phi4 pool. The CJK auto-switch matches the character-level result on CJK text and the original word-level result on Western text.

\subsection{Importance-ratio distribution under the THL-aligned denominator}
\label{subsec:thl-denominator}

\MOneShort{} (Sec.~\ref{sec:methods}) uses the \THL{}-aligned peer denominator for off-policy importance correction. We measure the resulting token-level ratio distribution against two controls on the same response set: (a) a \emph{shuffled} variant that replaces the peer denominator with one computed on a randomly permuted prompt, and (b) a \emph{broken alignment} variant in which the word-to-token map is intentionally corrupted before redistribution. All three denominators reuse the identical numerator and reward; only the denominator changes. Under the PPO setting $\epsilon=0.2$, ratios outside $[0.8,1.2]$ are clipped.

\begin{table}[h]
\centering
\caption{\textbf{Token-level ratio statistics under the \THL{}-aligned, shuffled, and broken-alignment denominators.} Lower p99, lower clip rate, and a smaller fraction of responses with any token ratio above 10 indicate a tighter, less heavy-tailed ratio distribution. The \THL{}-aligned denominator is the tightest of the three on both pairs.}
\label{tab:thl-denominator-control}
\setlength{\tabcolsep}{1pt}
\resizebox{\linewidth}{!}{%
\begin{tabular}{llccc}
\toprule
\textbf{Source $\to$ Target} & \textbf{Denominator} & \textbf{Token-ratio p99} & \textbf{Token clip rate} & \textbf{Resp.\ with any ratio $>10$} \\
\midrule
\multirow{3}{*}{\makecell[l]{Qwen3-1.7B-Base \\ $\to$ Qwen2.5-Math-1.5B}}
  & \THL{}-aligned peer & 2.1170 & 0.1985 & 0.1021 \\
  & Shuffled prompt     & 4.0806 & 0.3036 & 0.3550 \\
  & Broken alignment    & 4.6240 & 0.2551 & 0.9902 \\
\midrule
\multirow{3}{*}{\makecell[l]{Qwen3-4B-Base \\ $\to$ Phi-4-mini-Instruct}}
  & \THL{}-aligned peer & 2.1902 & 0.3214 & 0.1982 \\
  & Shuffled prompt     & 4.4121 & 0.4202 & 0.4136 \\
  & Broken alignment    & 4.5087 & 0.3585 & 0.9868 \\
\bottomrule
\end{tabular}%
}
\end{table}

On both pairs, the \THL{}-aligned denominator yields a smaller token-ratio tail than either control. Under broken alignment, almost every response contains at least one token ratio above 10; under shuffled prompts, more than a third do. The \THL{}-aligned denominator drops this fraction to 10--20\% and roughly halves the p99 ratio.

At the sequence level (Tab.~\ref{tab:thl-seq-ratio}), 1{,}280 responses per source--target pair give finite, bounded ratios on both pairs: the same-family pair has median 0.9104 and p99 0.9834, the cross-family pair has median 0.7466 and p99 0.9057.

\begin{table}[h]
\centering
\caption{\textbf{\THL{} yields finite sequence-level importance ratios on 1{,}280 sampled responses per pair.} The same-family Qwen pair stays inside the PPO clip band for 89.06\% of responses, while the cross-family Qwen-to-Phi4 pair remains finite but clips more often, matching the stronger tokenizer and model mismatch.}
\label{tab:thl-seq-ratio}
\begin{tabular}{lccc}
\toprule
\textbf{Source $\to$ Target} & \textbf{Median} & \textbf{p99} & \textbf{Clip rate ($[0.8,1.2]$)} \\
\midrule
Qwen3-1.7B-Base $\to$ Qwen2.5-Math-1.5B & 0.9104 & 0.9834 & 0.1094 \\
Qwen3-4B-Base $\to$ Phi-4-mini-Instruct & 0.7466 & 0.9057 & 0.6969 \\
\bottomrule
\end{tabular}
\end{table}

\section{Pooled normalization vs.\ shuffled-pool control}
\label{app:xgrpo-pooled}

\MThreeShort{} replaces the per-policy reward baseline with a baseline pooled across policies on the same prompt. We measure how much of the resulting advantage adjustment is driven by the cross-policy reward structure rather than by random replacement of the pool.

\paragraph{Setup.} For every prompt in a shared rollout batch, we compute (a) the standard per-policy baseline, (b) the true pooled baseline used by \MThreeShort{}, and (c) a \emph{shuffled-pool} baseline in which the peer reward bag is randomly permuted across prompts before pooling. The numerator and the policy gradient are otherwise identical. We measure three quantities. The \emph{correlation} is between the per-prompt peer reward spread and the change in learner advantage. The \emph{advantage sign-flip rate} is the fraction of prompts on which the sign of the learner's advantage changes relative to the per-policy baseline. The \emph{mean absolute advantage change} is the average magnitude of that change.

\begin{table}[h]
\centering
\small
\caption{\textbf{Pooled normalization carries structured cross-policy reward information.} On the same shared batches, the true cross-policy pool produces stronger correlation between peer reward spread and learner advantage adjustment, while keeping both the sign-flip rate and the magnitude of the change smaller than the shuffled-pool control.}
\label{tab:xgrpo-pool-control}
\setlength{\tabcolsep}{1pt}
\resizebox{\linewidth}{!}{%
\begin{tabular}{llcccccc}
\toprule
\textbf{Pool} & \textbf{Learner} & \multicolumn{2}{c}{\textbf{Corr.\ peer spread $\leftrightarrow$ adv.\ change}} & \multicolumn{2}{c}{\textbf{Adv.\ sign-flip rate}} & \multicolumn{2}{c}{\textbf{Mean abs.\ adv.\ change}} \\
& & True & Shuffled & True & Shuffled & True & Shuffled \\
\midrule
\multirow{2}{*}{2Qwen} & Qwen2.5-Math-1.5B & 0.2863 & $-$0.0526 & 0.0836 & 0.2906 & 0.2461 & 0.4641 \\
                       & Qwen3-1.7B-Base   & 0.4108 & $-$0.0672 & 0.0984 & 0.3094 & 0.2418 & 0.4782 \\
\midrule
\multirow{2}{*}{Qwen-Phi4} & Phi-4-mini-Instruct & $-$0.0235 & $-$0.3293 & 0.3117 & 0.4539 & 0.5514 & 0.6783 \\
                           & Qwen3-4B-Base       &  0.1592 & $-$0.1136 & 0.1562 & 0.3281 & 0.5232 & 0.6630 \\
\bottomrule
\end{tabular}%
}
\end{table}

The true pool yields stronger coupling with peer reward spread than the shuffled control: averaged over the four learners, mean correlation is $0.2082$ for the true pool and $-0.1407$ for the shuffled pool. The true pool also flips advantage signs less often and produces smaller absolute changes. \MThreeShort{} therefore reshapes credit through coherent cross-policy reward statistics rather than through generic baseline noise. The pattern holds across the four learners, which span the Qwen and Phi-4 model families, consistent with the stable accuracy gains in Sec.~\ref{subsec:sgt-q2m-q3b} and App.~\ref{subsec:xgrpo-ablation}.

\section{SGT diagnostics: cost, activation profile, and prompt specificity}
\label{app:sgt-diagnostics}

This appendix complements the implementation description in App.~\ref{app:sgt-details} with three measurements: (i) extra compute consumed by each of the three sharing regimes under a fixed rollout budget, (ii) the per-prompt activation profile of \MFourShort{}, and (iii) the prompt-specificity of the \MFourShort{} signal via a matched- vs.\ mismatched-teacher comparison.

\subsection{Compute overhead under a fixed rollout budget}
\label{subsec:sgt-cost}

We measure compute on a shared MATH rollout batch drawn from both the 2Qwen and the Qwen-Phi4 settings, with 256 prompts per pool and 3{,}238{,}926 response tokens in total, plus the corresponding teacher traces consumed by \MFourShort{}. \MThreeShort{} requires no extra model work: it only changes how rewards are normalized when computing advantages. \MFourShort{} adds a sparse auxiliary supervised loss on a small set of selected peer responses, contributing 21{,}168 additional teacher tokens in the 2Qwen run. \MOneShort{} re-scores each pooled rollout under the learner's policy via the \THL{}-aligned denominator, costing another full pass of cross-policy compute.

\begin{table}[h]
\centering
\caption{\textbf{Extra compute on the same rollout batch.} \MThreeShort{} adds no extra model work after the batch is collected. \MFourShort{} adds a small fraction of one rollout. \MOneShort{} adds approximately one extra rollout because every peer trajectory is re-scored.}
\label{tab:regime-cost}
\begin{tabular}{lcc}
\toprule
\textbf{Regime} & \textbf{Extra response tokens} & \textbf{Extra cost} \\
\midrule
\MThreeShort{} & 0           & $0\times$ rollout \\
\MFourShort{}  & 21{,}168    & $0.0065\times$ rollout \\
\MOneShort{}   & 3{,}238{,}926 & $1\times$ rollout \\
\bottomrule
\end{tabular}
\end{table}

The auxiliary cost of \MFourShort{} has an analytic upper bound. With $M$ models each generating $N$ rollouts, every prompt that triggers \MFourShort{} for at least one learner contains at least one peer success that is not itself triggered. The worst-case extra sequence count is therefore bounded by $(M-1)/(MN)$ of the rollout batch, matching the empirical $\sim$0.65\% in Tab.~\ref{tab:regime-cost}.

\subsection{Per-prompt activation profile}
\label{subsec:sgt-where-fires}

\MFourShort{} activates on prompts where the learner fails and a peer succeeds. We instrument the gate over a 256-prompt MATH batch in the 2Qwen and Qwen-Phi4 settings, and track for each learner: how often the gate fires, and the per-prompt success rate on the gated and ungated subsets.

\begin{table}[h]
\centering
\caption{\textbf{\MFourShort{} activation profile.} The gated subset has lower pool-wide success rates (12/90 to 18/70 successful rollouts) than the ungated subset, where 1263/2490 to 1427/2500 rollouts succeed. \MFourShort{} therefore activates on the hard subset of the pool's prompt distribution.}
\label{tab:sgt-selectivity}
\setlength{\tabcolsep}{4pt}
\begin{tabular}{llcccc}
\toprule
\textbf{Pool} & \textbf{Learner} & \textbf{SGT prompts} & \makecell{\textbf{Succ./all} \\ \textbf{(gate on)}} & \makecell{\textbf{Succ./all} \\ \textbf{(gate off)}} & \textbf{All-fail prompts} \\
\midrule
\multirow{2}{*}{2Qwen} & Qwen2.5-Math-1.5B & 9/256 & 12/90 & 1269/2470 & \multirow{2}{*}{61/256} \\
                       & Qwen3-1.7B-Base   & 7/256 & 18/70 & 1263/2490 & \\
\midrule
\multirow{2}{*}{Qwen-Phi4} & Phi-4-mini-Instruct & 7/256 & 18/70 & 1420/2490 & \multirow{2}{*}{56/256} \\
                           & Qwen3-4B-Base       & 6/256 & 11/60 & 1427/2500 & \\
\bottomrule
\end{tabular}
\end{table}

The activation set is the rescue subset: prompts where one model fails and a peer solves the same question. The pool-wide success rate on this subset is roughly $1/7$ of the success rate on the rest of the batch, identifying it as the hard tail.

The activation density scales with task difficulty. On the 3Qwen pool, the gate fires on 8.98\% of MATH prompts, 24.35\% of Social IQa prompts, and 31.38\% of HellaSwag prompts; tasks with denser per-prompt reasoning paths and higher absolute success rates produce more rescue prompts.

\subsection{Matched vs.\ mismatched teacher: prompt-specificity of the SGT signal}
\label{subsec:sgt-matched-teacher}

\MFourShort{} transfers a successful peer trace on a prompt where the learner fails. We measure whether the transferred signal is prompt-specific by computing the token-normalized NLL of the matched teacher (the trace \MFourShort{} would actually use) against a mismatched teacher (a successful trace from the same peer on a \emph{different} prompt). Both teachers are successful traces from the same peer; only the prompt-level alignment differs.

\begin{table}[h]
\centering
\caption{\textbf{Matched vs.\ mismatched teacher NLL on the SGT-triggered subset.} The learner consistently assigns lower NLL to the prompt-matched teacher than to a mismatched one across all four learners drawn from the Qwen and Phi-4 model families, with the largest gap on the cross-family Qwen3-4B-Base learner.}
\label{tab:sgt-matched}
\begin{tabular}{llcc}
\toprule
\textbf{Pool} & \textbf{Learner} & \textbf{Matched teacher NLL} & \textbf{Mismatched teacher NLL} \\
\midrule
\multirow{2}{*}{2Qwen} & Qwen2.5-Math-1.5B & 0.3243 & 0.4207 \\
                       & Qwen3-1.7B-Base   & 0.2347 & 0.3461 \\
\midrule
\multirow{2}{*}{Qwen-Phi4} & Phi-4-mini-Instruct & 0.5483 & 0.5898 \\
                           & Qwen3-4B-Base       & 0.7975 & 1.0511 \\
\bottomrule
\end{tabular}
\end{table}

Across all four learners the matched teacher attains a strictly lower NLL than the mismatched one, with the largest gap on the cross-family Qwen3-4B-Base learner (0.7975 vs.\ 1.0511). Both teachers are drawn from the same peer model and are both successful, so the gap isolates the prompt-specific component of the signal that \MFourShort{} transfers.

\section{Cross-policy complementarity and channel decomposition}
\label{app:complementarity}

This appendix quantifies the prompt-level signal that mutual RL makes available: peers solve a measurable fraction of the prompts a learner misses, this complementarity concentrates on harder prompts, and the three sharing regimes expose different subsets of the same shared batch.

\subsection{Prompt-level complementarity of the policy pool}
\label{subsec:complementarity-prompts}

We sample 500 MATH prompts and record per-model success indicators under fixed decoding for the 2Qwen and Qwen-Phi4 pools, which together span the Qwen and Phi-4 model families. From these we compute (a) per-model success rates, (b) Jaccard overlap between any two models' success sets, (c) the conditional probability that a peer succeeds given that the learner fails, and (d) pool-level statistics: the fraction of prompts solved by \emph{any} model, by \emph{all} models, and by \emph{exactly one} model.

\begin{table}[h]
\centering
\caption{\textbf{Prompt-level complementarity on 500 MATH prompts.} The peer rescues the learner with conditional probability 0.21--0.26 across the four learner-peer pairings. Per-model success rates sit at 0.65--0.67; the any-model pool success is 0.74.}
\label{tab:complementarity-pairs}
\setlength{\tabcolsep}{4pt}
\begin{tabular}{llccccc}
\toprule
\textbf{Pool} & \textbf{Learner} & \textbf{Peer} & \makecell{\textbf{Learner} \\ \textbf{succ.}} & \makecell{\textbf{Peer} \\ \textbf{succ.}} & \textbf{Jaccard} & \makecell{\textbf{Peer rescues} \\ \textbf{fail}} \\
\midrule
\multirow{2}{*}{2Qwen} & Qwen2.5-Math-1.5B & Qwen3-1.7B-Base   & 0.6480 & 0.6540 & \multirow{2}{*}{0.7642} & 0.2557 \\
                       & Qwen3-1.7B-Base   & Qwen2.5-Math-1.5B & 0.6540 & 0.6480 &                          & 0.2428 \\
\midrule
\multirow{2}{*}{Qwen-Phi4} & Phi-4-mini-Instruct & Qwen3-4B-Base       & 0.6700 & 0.6580 & \multirow{2}{*}{0.7946} & 0.2121 \\
                           & Qwen3-4B-Base       & Phi-4-mini-Instruct & 0.6580 & 0.6700 &                          & 0.2398 \\
\bottomrule
\end{tabular}
\end{table}

\begin{table}[h]
\centering
\caption{\textbf{Pool-level success on the same 500 MATH prompts.} Any-model success exceeds mean single-model success on both pools; exactly-one-model success reaches 17.4\% on 2Qwen and 15.2\% on Qwen-Phi4.}
\label{tab:complementarity-pool}
\begin{tabular}{lcccc}
\toprule
\textbf{Pool} & \textbf{Mean single-model} & \textbf{Any-model} & \textbf{All-model} & \textbf{Exactly-one-model} \\
\midrule
2Qwen     & 0.6510 & 0.7380 & 0.5640 & 0.1740 \\
Qwen-Phi4 & 0.6640 & 0.7400 & 0.5880 & 0.1520 \\
\bottomrule
\end{tabular}
\end{table}

Per-model accuracy is 0.65--0.67; the any-model pool clears 0.73--0.74. The 8--9 percentage-point gap is the prompt set on which one model succeeds and its peer does not, and the conditional rescue rate of 0.21--0.26 quantifies the fraction of learner failures that a single peer covers. \MFourShort{} converts this gap into supervised signal; \MOneShort{} converts it into off-policy gradient updates.

\subsection{Difficulty-conditioned diversity}
\label{subsec:complementarity-difficulty}

We stratify the same 500 MATH prompts into equal-sized easy, medium, and hard buckets by mean verifier reward, and recompute Jaccard overlap and exactly-one-model success within each bucket.

\begin{table}[h]
\centering
\caption{\textbf{Difficulty-conditioned diversity on 500 MATH prompts, equal-sized easy/medium/hard buckets.} Jaccard overlap is near-perfect on easy prompts and drops to 0.48--0.51 on hard prompts in both pools, with exactly-one-model success rising to 15--16\%.}
\label{tab:complementarity-difficulty}
\setlength{\tabcolsep}{6pt}
\begin{tabular}{lcccc}
\toprule
\textbf{Bucket} & \makecell{\textbf{2Qwen} \\ \textbf{Jaccard}} & \makecell{\textbf{2Qwen} \\ \textbf{one-model}} & \makecell{\textbf{Qwen-Phi4} \\ \textbf{Jaccard}} & \makecell{\textbf{Qwen-Phi4} \\ \textbf{one-model}} \\
\midrule
Easy   & 1.0000 & 0      & 1.0000 & 0      \\
Medium & 0.9701 & 0.0299 & 0.9940 & 0.0060 \\
Hard   & 0.4792 & 0.1506 & 0.5091 & 0.1627 \\
\bottomrule
\end{tabular}
\end{table}

The aggregate diversity in Tab.~\ref{tab:complementarity-pool} concentrates on the hard bucket. On easy prompts the two models agree perfectly. On the hard bucket Jaccard overlap drops to 0.48--0.51 and exactly-one-model success rises to 0.15--0.16. This matches the per-prompt activation profile of \MFourShort{} reported in App.~\ref{subsec:sgt-where-fires}: the gate fires almost exclusively on prompts where pool-wide success is low.

\subsection{Channel decomposition of the three sharing regimes}
\label{subsec:regime-channels}

\MOneShort{}, \MThreeShort{}, and \MFourShort{} isolate three coupling levels. \MOneShort{} exposes the trajectory channel: a learner optimizes directly on peer rollouts. \MThreeShort{} exposes the scalar-value channel: learner trajectories are unchanged and only pooled reward statistics enter the advantage. \MFourShort{} exposes the outcome channel: verified peer successes are transferred only on learner-failure prompts. The three channels are complementary rather than nested, because rollout usability, scalar-baseline movement, and rescue events occur on different prompt subsets.

We measure their joint activation pattern on a single shared batch. A prompt is \emph{usable} for \MOneShort{} if at least one peer response has importance ratio inside $[0.8, 1.2]$ for the learner; usable for \MThreeShort{} if pooled normalization changes the learner's advantage relative to per-model normalization; usable for \MFourShort{} if the learner fails and a peer succeeds. \MFourShort{} usability implies \MThreeShort{} usability, since a peer-success / learner-failure mismatch changes the pooled reward statistics.

\begin{table}[h]
\centering
\caption{\textbf{Joint usability of the three regimes on the same shared batch.} The dominant case is \MOneShort{} and \MThreeShort{} both usable but \MFourShort{} not. \MOneShort{}-only prompts cover 19.43\% of the batch, \MThreeShort{}-only prompts cover 14.45\%, and \MFourShort{} is usable on 2.64\% of prompts.}
\label{tab:regime-overlap}
\begin{tabular}{lc}
\toprule
\textbf{Usable regimes on prompt} & \textbf{\% of prompts} \\
\midrule
None of the three                              &  4.39 \\
Only \MThreeShort{}                            & 14.45 \\
\MThreeShort{} and \MFourShort{}, not \MOneShort{} &  0.59 \\
Only \MOneShort{}                              & 19.43 \\
\MOneShort{} and \MThreeShort{}, not \MFourShort{} & 59.08 \\
All three                                       &  2.05 \\
\bottomrule
\end{tabular}
\end{table}

The overlap pattern matches the stability-support interpretation. \MOneShort{} reaches the broadest set of peer trajectories but pays the density-ratio cost; \MThreeShort{} supplies dense scalar shaping on most prompts; \MFourShort{} fires sparsely on the rescue subset where a verified peer success supplies trajectory support the learner did not sample.

\section{Theoretical analysis of the three communication channels}
\label{app:theory}

\subsection{Verifiable contextual-bandit formulation}
\label{subsec:theory-setup}

We work in the same prompt-level, critic-free setting as the GRPO objective in Sec.~\ref{sec:preliminaries}.  A prompt $x$ is drawn from $\mathcal{D}$, a policy samples a complete response $y$, and a verifier returns a binary outcome $r(x,y)\in\{0,1\}$.  The success set is
\[
\mathcal{S}_x=\{y:r(x,y)=1\}.
\]
For compactness write $\pi_n(\cdot\mid x)=\pi^{(n)}_\theta(\cdot\mid x)$ when the current parameter is clear.  The per-policy success probability and rollout group are
\[
p_n(x)=\Pr_{y\sim\pi_n(\cdot\mid x)}[y\in\mathcal{S}_x],
\qquad
Y_n(x)=\{y_{n,1},\ldots,y_{n,K}\},
\qquad
y_{n,j}\overset{\mathrm{i.i.d.}}{\sim}\pi_n(\cdot\mid x).
\]
The successful self-rollout set is $\mathcal{S}^{(n)}_x=Y_n(x)\cap\mathcal{S}_x$.  Under independent $K$-rollout groups across policies, the probability that at least one peer of learner $n$ succeeds is
\[
q_{-n}(x)
=
1-\prod_{m\neq n}(1-p_m(x))^K.
\]
For a token-level log-probability trace $\ell^{(n)}_\theta(x,y)$, we write
\[
L^{(n)}_\theta(x,y)=\sum_t \ell^{(n)}_{\theta,t}(x,y)
\]
for its response-level sum.  The usual score-function identity is used only at this single-step response level, as in REINFORCE/PPO/GRPO-style objectives~\citep{Williams1992REINFORCE,Schulman2017PPO,shao2024deepseekmath}.

The three channels expose different functionals of the same prompt-level sample.

\paragraph{\MOneShort{}: data-level coupling.}
\MOneShort{} consumes peer trajectories $y\sim\mu^{(m)}(\cdot\mid x)$ and inserts them into learner $n$'s policy-gradient surrogate.  The learner evaluates its own trace $\ell^{(n)}_\theta(x,y)$ on the retokenized peer text and forms a sequence-level ratio
\[
w^{(n)}_\theta(x,y)
=
\exp\{L^{(n)}_\theta(x,y)-L_{\mathrm{behavior}}(x,y)\}.
\]
The naive variant uses the learner snapshot denominator $L_{\mathrm{behavior}}=L^{(n)}_{\theta_{\mathrm{old}}}$ even on peer data.  The corrected variant uses the peer behavior denominator, represented on learner $n$'s token grid by $\tilde\ell^{(m\to n)}$ and summed into $\tilde L^{(m\to n)}(x,y)$.  This is the high-coupling channel: it transfers complete trajectories and therefore must pay density-ratio and alignment-error costs.

\paragraph{\MThreeShort{}: value-level coupling.}
\MThreeShort{} consumes only scalar rewards.  For prompt $x$, learner $n$ samples its own $y_{n,i}\sim\pi_n(\cdot\mid x)$ but computes advantages using pooled reward statistics, for example
\[
\mu_{\mathrm{pool}}(x)=\frac{1}{MK}\sum_{m=1}^M\sum_{j=1}^K r(x,y_{m,j}),
\qquad
A^{(n)}_i=\frac{r(x,y_{n,i})-\mu_{\mathrm{pool}}(x)}
{\sigma_{\mathrm{pool}}(x)+\epsilon}.
\]
The actor score term remains $\nabla_\theta\log\pi^{(n)}_\theta(y_{n,i}\mid x)$ for learner-sampled responses only.

\paragraph{\MFourShort{}: outcome-level coupling.}
\MFourShort{} consumes a verified peer success only on the rescue subset.  The gate fires on prompt $x$ when
\[
Y_n(x)\cap\mathcal{S}_x=\varnothing
\quad\text{and}\quad
\mathcal{S}_x\cap\bigcup_{m\neq n}Y_m(x)\neq\varnothing.
\]
On this event, learner $n$ receives a selected $y^*\in\mathcal{S}_x\cap\bigcup_{m\neq n}Y_m(x)$ and adds the auxiliary loss
\[
\mathcal{L}^{(n)}_{\mathrm{SGT}}(x,y^*)=-\log\pi^{(n)}_\theta(y^*\mid x)
\]
with weight $\lambda$ or $\lambda_{\mathrm{SGT}}$.  If the gate is off, the learner's update is the base GRPO update.

\subsection{\THL{} alignment: conservation, residual error, and ratio envelopes}
\label{subsec:theory-thl}

\begin{assumption}[\THL{} span and clipping convention]
\label{ass:thl-span}
Fix a response text $y$.  WordSpans returns a partition into units indexed by $w$; for whitespace-delimited text these are maximal non-whitespace spans, with the delimiter convention used by PerTokenWordMapBySegments.  Source and target token spans are mapped to word indices by PerTokenWordMapBySegments.  A response token is \emph{valid} if it is assigned to a single word index and is included by the response mask; otherwise it is boundary-mismatched or ignored.  WordAlignLogProbs aggregates valid source log-probabilities by word and redistributes each word sum uniformly over valid target tokens assigned to that word.  All response log-probabilities used in the denominator are clipped or masked so that $|\ell^{\mathrm{src}}_t|\leq B$.
\end{assumption}

For a word $w$, let
\[
S_w=\{t:\ a_{\mathrm{src}}(t)=w,\ t\text{ valid}\},
\qquad
T_w=\{u:\ a_{\mathrm{tgt}}(u)=w,\ u\text{ valid}\},
\qquad
C_w=|T_w|.
\]
Let
\[
Z_w^{\mathrm{src}}=\sum_{t\in S_w}\ell^{\mathrm{src}}_t.
\]
Let $\mathcal{B}_{\mathrm{src}}$ be the set of source response tokens that are not assigned to any word by the valid mapping, and let
\[
\mathcal{W}_0=\{w:\ S_w\neq\varnothing,\ C_w=0\}
\]
be the set of source-covered words with no valid target slot.  Define the \THL{} residual
\[
R_{\mathrm{THL}}(x,y)
=
\sum_{t\in\mathcal{B}_{\mathrm{src}}}\ell^{\mathrm{src}}_t
+
\sum_{w\in\mathcal{W}_0}Z_w^{\mathrm{src}},
\]
the count
\[
C_{\mathrm{mis}}(x,y)
=
|\mathcal{B}_{\mathrm{src}}|
+
\sum_{w\in\mathcal{W}_0}|S_w|,
\]
and the sharper residual magnitude
\[
\delta_{\mathrm{THL}}(x,y)=|R_{\mathrm{THL}}(x,y)|.
\]

\begin{theorem}[\THL{} conservation with a derived residual bound]
\label{thm:thl-residual}
Assume Assumption~\ref{ass:thl-span}.  Let
\[
L_\mu(x,y)
=
\sum_{w}\sum_{t\in S_w}\ell^{\mathrm{src}}_t
+
\sum_{t\in\mathcal{B}_{\mathrm{src}}}\ell^{\mathrm{src}}_t
\]
be the masked source sequence log-probability under the peer tokenizer, and let
\[
\tilde L_\mu(x,y)=\sum_{u:\ a_{\mathrm{tgt}}(u)\neq\bot}\tilde\ell^{(\mathrm{src}\to\mathrm{tgt})}_u
\]
be the target-grid denominator produced by WordAlignLogProbs, excluding ignore-value positions.  Then
\[
L_\mu(x,y)-\tilde L_\mu(x,y)=R_{\mathrm{THL}}(x,y),
\]
and therefore
\[
|L_\mu(x,y)-\tilde L_\mu(x,y)|
=
\delta_{\mathrm{THL}}(x,y)
\leq
\sum_{t\in\mathcal{B}_{\mathrm{src}}}|\ell^{\mathrm{src}}_t|
+
\sum_{w\in\mathcal{W}_0}\sum_{t\in S_w}|\ell^{\mathrm{src}}_t|
\leq
B\,C_{\mathrm{mis}}(x,y).
\]
If the source and target segmentations are refinements of the same word partition and every source-covered word has at least one valid target token, then $C_{\mathrm{mis}}(x,y)=0$ and \THL{} preserves the sequence denominator exactly.  In particular, for each word with $C_w>0$,
\[
\sum_{u\in T_w}\tilde\ell^{(\mathrm{src}\to\mathrm{tgt})}_u
=
\sum_{t\in S_w}\ell^{\mathrm{src}}_t .
\]
\end{theorem}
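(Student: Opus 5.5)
The identity follows from bookkeeping. Compute $\tilde L_\mu$ by grouping target positions by word, then compare term by term with $L_\mu$.

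First, recall how WordAlignLogProbs fills the output. Every target position with $a_{\mathrm{tgt}}(u)=w$ valid and $C_w>0$ receives $\tilde\ell_u = Z_w^{\mathrm{src}}/C_w$. Positions with no valid word index keep the ignore value and are excluded from $\tilde L_\mu$.

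A word $w$ with $S_w=\varnothing$ but $C_w>0$ gets $Z_w^{\mathrm{src}}=0$, since the per-word sum defaults to zero. Such words contribute nothing. Summing over $u\in T_w$ therefore gives
\[
\sum_{u\in T_w}\tilde\ell_u = C_w\cdot Z_w^{\mathrm{src}}/C_w = Z_w^{\mathrm{src}}
\qquad\text{whenever } C_w>0.
\]
This is the per-word conservation claim. Summing over all words gives
\[
\tilde L_\mu=\sum_{w:\,C_w>0} Z_w^{\mathrm{src}} .
\]

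Second, split $L_\mu$. Its word part is
\[
\sum_w Z_w^{\mathrm{src}} = \sum_{w:\,C_w>0} Z_w^{\mathrm{src}} + \sum_{w\in\mathcal W_0} Z_w^{\mathrm{src}},
\]
because words with $S_w=\varnothing$ contribute zero regardless of $C_w$. Subtracting $\tilde L_\mu$ leaves
\[
L_\mu-\tilde L_\mu=\sum_{t\in\mathcal B_{\mathrm{src}}}\ell^{\mathrm{src}}_t+\sum_{w\in\mathcal W_0}Z^{\mathrm{src}}_w=R_{\mathrm{THL}} .
\]

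The bounds then follow directly. The first is the triangle inequality applied to $R_{\mathrm{THL}}$ after expanding $Z_w^{\mathrm{src}}$ as a sum over $S_w$. The second uses $|\ell^{\mathrm{src}}_t|\le B$ from Assumption~\ref{ass:thl-span} and counts the terms, which number exactly $C_{\mathrm{mis}}$.

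For the exactness clause, suppose both segmentations refine the same word partition. Then every masked source token lies inside one word and is valid, so $\mathcal B_{\mathrm{src}}=\varnothing$. The hypothesis that every source-covered word has a valid target token gives $\mathcal W_0=\varnothing$. Hence $C_{\mathrm{mis}}=0$ and $R_{\mathrm{THL}}=0$.

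The main obstacle is matching the algorithm's conventions to the sets in the statement. Three points need care:
\begin{itemize}
\item Truncation to the common prefix, and positions mapped to \texttt{None}, must be treated as invalid. Source tokens of this kind fall into $\mathcal B_{\mathrm{src}}$; target tokens of this kind fall outside every $T_w$.
\item The \texttt{continue} branches, together with the defaults \texttt{get(wid,0)}, must be checked to produce exactly the cases $S_w=\varnothing$ and $C_w=0$ above.
\item The sum in $\tilde L_\mu$ over $a_{\mathrm{tgt}}(u)\neq\bot$ must be read as ranging over valid filled positions, i.e.\ $\bigcup_{w:C_w>0}T_w$.
\end{itemize}
I would make these identifications explicit before the computation.
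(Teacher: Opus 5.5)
Your proposal is correct and matches the paper's proof essentially step for step. Per-word conservation gives $\tilde L_\mu=\sum_{w:C_w>0}Z_w^{\mathrm{src}}$; decomposing $L_\mu$ over the words with $C_w>0$, the set $\mathcal{W}_0$, and $\mathcal{B}_{\mathrm{src}}$ and then subtracting yields $R_{\mathrm{THL}}$; and the bounds follow from the triangle inequality and $|\ell^{\mathrm{src}}_t|\le B$. Your handling of words with $S_w=\varnothing$ and of \texttt{None} or truncated positions simply makes explicit conventions that the paper leaves implicit.
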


\begin{proof}
By the aggregation pass of WordAlignLogProbs, the value stored for word $w$ is $Z_w^{\mathrm{src}}$.  If $C_w>0$, the redistribution pass assigns
\[
\tilde\ell^{(\mathrm{src}\to\mathrm{tgt})}_u
=
\frac{Z_w^{\mathrm{src}}}{C_w}
\qquad
\text{for each }u\in T_w.
\]
Hence the total aligned mass placed on target tokens of word $w$ is
\[
\sum_{u\in T_w}\tilde\ell^{(\mathrm{src}\to\mathrm{tgt})}_u
=
\sum_{u\in T_w}\frac{Z_w^{\mathrm{src}}}{C_w}
=
Z_w^{\mathrm{src}}.
\]
If $C_w=0$, no target token receives the source word mass $Z_w^{\mathrm{src}}$.  Summing over all words gives
\[
\tilde L_\mu(x,y)
=
\sum_{w:C_w>0}Z_w^{\mathrm{src}}.
\]
The source sequence sum decomposes as
\[
L_\mu(x,y)
=
\sum_{w:C_w>0}Z_w^{\mathrm{src}}
+
\sum_{w\in\mathcal{W}_0}Z_w^{\mathrm{src}}
+
\sum_{t\in\mathcal{B}_{\mathrm{src}}}\ell^{\mathrm{src}}_t.
\]
Subtracting the previous two displays yields
$L_\mu(x,y)-\tilde L_\mu(x,y)=R_{\mathrm{THL}}(x,y)$.  Taking absolute values and using the triangle inequality gives the sharper mass bound.  The final inequality follows from $|\ell^{\mathrm{src}}_t|\leq B$.  Under exact refinement, $\mathcal{B}_{\mathrm{src}}=\varnothing$ and $\mathcal{W}_0=\varnothing$, so the residual is zero and the per-word equality above holds for every word.
\end{proof}

\begin{corollary}[Importance-ratio envelope induced by \THL{}]
\label{cor:thl-ratio}
Assume the conditions of \Cref{thm:thl-residual}.  Let
\[
\rho(x,y)
=
\exp\{L^{(n)}_\theta(x,y)-L_\mu(x,y)\},
\qquad
\tilde\rho(x,y)
=
\exp\{L^{(n)}_\theta(x,y)-\tilde L_\mu(x,y)\}.
\]
Then
\[
e^{-\delta_{\mathrm{THL}}(x,y)}\rho(x,y)
\leq
\tilde\rho(x,y)
\leq
e^{\delta_{\mathrm{THL}}(x,y)}\rho(x,y),
\qquad
\delta_{\mathrm{THL}}(x,y)\leq B\,C_{\mathrm{mis}}(x,y).
\]
\end{corollary}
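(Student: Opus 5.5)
The corollary follows from \Cref{thm:thl-residual} by exponentiating. The plan is to form the log-ratio between the two importance weights and then transfer the theorem's residual identity through the exponential.

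First, both $\rho$ and $\tilde\rho$ share the numerator $L^{(n)}_\theta(x,y)$. Their log-ratio is therefore
\[
\log\tilde\rho(x,y)-\log\rho(x,y)=L_\mu(x,y)-\tilde L_\mu(x,y)=R_{\mathrm{THL}}(x,y),
\]
where the last equality is the exact residual identity of \Cref{thm:thl-residual}. This gives $\tilde\rho=e^{R_{\mathrm{THL}}}\rho$ exactly.

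Second, since $-\delta_{\mathrm{THL}}\le R_{\mathrm{THL}}\le\delta_{\mathrm{THL}}$ by definition of $\delta_{\mathrm{THL}}=|R_{\mathrm{THL}}|$, and $\exp$ is monotone, we get $e^{-\delta_{\mathrm{THL}}}\le e^{R_{\mathrm{THL}}}\le e^{\delta_{\mathrm{THL}}}$. Multiplying through by $\rho(x,y)>0$, which is positive as an exponential of a finite quantity, yields the two-sided envelope. Finiteness here is guaranteed because all log-probabilities are clipped to $[-B,B]$ under Assumption~\ref{ass:thl-span}.

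Third, the bound $\delta_{\mathrm{THL}}\le B\,C_{\mathrm{mis}}$ is quoted directly from the final inequality of \Cref{thm:thl-residual}.

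The only step needing care is bookkeeping rather than analysis: both sequence sums must be taken over the same masked response tokens for the numerator to cancel. This holds because $L^{(n)}_\theta$ is the learner's own trace on the retokenized text and appears identically in both ratios, so no alignment of the numerator is involved. With that observed, there is no genuine obstacle.
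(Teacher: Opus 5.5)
Your proposal is correct and follows the paper's proof: both write $\tilde\rho=e^{R_{\mathrm{THL}}}\rho$ from the residual identity of \Cref{thm:thl-residual}, bound $e^{R_{\mathrm{THL}}}$ between $e^{\pm\delta_{\mathrm{THL}}}$, multiply by $\rho>0$, and quote the count bound $\delta_{\mathrm{THL}}\le B\,C_{\mathrm{mis}}$ from that theorem. One minor correction: the $B$-clipping in Assumption~\ref{ass:thl-span} applies only to the source (denominator) log-probabilities, not to the learner numerator $L^{(n)}_\theta$, so it does not justify $\rho>0$; that holds simply because $\rho$ is the exponential of a real number, and since your derivation never divides by $\rho$, even $\rho\ge 0$ would suffice.
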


\begin{proof}
The ratio of the aligned and ideal weights is
\[
\frac{\tilde\rho(x,y)}{\rho(x,y)}
=
\exp\{L_\mu(x,y)-\tilde L_\mu(x,y)\}
=
\exp\{R_{\mathrm{THL}}(x,y)\}.
\]
Since $|R_{\mathrm{THL}}(x,y)|=\delta_{\mathrm{THL}}(x,y)$, the exponential lies in
$[e^{-\delta_{\mathrm{THL}}(x,y)},e^{\delta_{\mathrm{THL}}(x,y)}]$.  Multiplying by $\rho(x,y)>0$ proves the envelope.  The count-based bound is exactly the final inequality of \Cref{thm:thl-residual}.
\end{proof}

The corollary is the THL alignment guarantee used by \MOneShort{}. THL preserves the peer denominator exactly when source and target tokenizations refine the same word partition; under segmentation mismatch, the only remaining denominator error is the explicit residual from mismatched or uncovered spans. The residual enters the PRP importance ratio multiplicatively, so THL should be read as a trace-alignment primitive with a controlled envelope, not as an exact probability-correction oracle.

\subsection{Data-level sharing (\MOneShort{}): density-ratio variance and alignment error}
\label{subsec:theory-prp}

\begin{assumption}[\MOneShort{} estimator family]
\label{ass:prp}
Fix a prompt $x$, learner $n$, and peer $m$.  Write
\[
\pi(\cdot)=\pi^{(n)}_\theta(\cdot\mid x),
\qquad
\mu(\cdot)=\mu^{(m)}(\cdot\mid x),
\]
and assume $\pi\ll\mu$.  Let
\[
s_\theta(y)=\nabla_\theta\log\pi^{(n)}_\theta(y\mid x),
\qquad
h(y)=A(x,y)s_\theta(y),
\qquad
\rho(y)=\frac{\pi(y)}{\mu(y)}.
\]
Assume $|A(x,y)|\leq A_{\max}$ and $\|s_\theta(y)\|\leq G$ for all $y$ with $\mu(y)>0$.  Let $\tilde\rho(y)$ be the \THL{}-aligned ratio and suppose it satisfies the envelope in \Cref{cor:thl-ratio} with residual $\delta_{\mathrm{THL}}(x,y)\leq\bar\delta_{\mathrm{THL}}$.  Let
\[
c_\epsilon(z)=\operatorname{clip}(z,1-\epsilon,1+\epsilon).
\]
\end{assumption}

\begin{theorem}[\MOneShort{} bias--variance decomposition]
\label{thm:prp-bv}
Under Assumption~\ref{ass:prp}, define
\[
g_\pi(x)=\mathbb{E}_{y\sim\pi}[h(y)],
\qquad
\hat g_{\mathrm{IS}}=\rho(y)h(y),\ y\sim\mu,
\]
\[
g_{\mathrm{clip}}(x)=\mathbb{E}_{\mu}[c_\epsilon(\rho(y))h(y)],
\qquad
g_{\mathrm{clip,THL}}(x)=\mathbb{E}_{\mu}[c_\epsilon(\tilde\rho(y))h(y)].
\]
Then:
\[
\mathbb{E}_{\mu}[\hat g_{\mathrm{IS}}]=g_\pi(x),
\]
and, using mean-square vector variance,
\[
\operatorname{Var}(\hat g_{\mathrm{IS}})
\leq
A_{\max}^2G^2\left(1+\chi^2(\pi\,\|\,\mu)\right),
\qquad
\chi^2(\pi\,\|\,\mu)=\mathbb{E}_{\mu}[\rho(y)^2]-1.
\]
The exact-ratio clipping bias is
\[
B_{\mathrm{clip}}(\epsilon)
=
\left\|
\mathbb{E}_{\mu}[(c_\epsilon(\rho(y))-\rho(y))h(y)]
\right\|
\]
and satisfies the tail bound
\[
B_{\mathrm{clip}}(\epsilon)
\leq
A_{\max}G\,
\mathbb{E}_{\mu}\!\left[
(\rho(y)-(1+\epsilon))_+
+
((1-\epsilon)-\rho(y))_+
\right].
\]
The clipped estimator has bounded second moment
\[
\mathbb{E}_{\mu}\|c_\epsilon(\rho(y))h(y)\|^2
\leq
(1+\epsilon)^2A_{\max}^2G^2.
\]
Finally, THL alignment changes the clipped expectation by at most
\[
\|g_{\mathrm{clip,THL}}(x)-g_{\mathrm{clip}}(x)\|
\leq
A_{\max}G\left(e^{\bar\delta_{\mathrm{THL}}}-1\right),
\]
and the un-clipped THL-aligned second moment obeys
\[
\mathbb{E}_{\mu}\|\tilde\rho(y)h(y)\|^2
\leq
e^{2\bar\delta_{\mathrm{THL}}}
A_{\max}^2G^2\left(1+\chi^2(\pi\,\|\,\mu)\right).
\]
\end{theorem}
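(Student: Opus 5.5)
The plan is to prove each of the five claims of \Cref{thm:prp-bv} directly from the definitions in Assumption~\ref{ass:prp}. Every piece is a one-line consequence of three facts: the change-of-measure identity under $\pi\ll\mu$, the pointwise bound $\|h(y)\|\le A_{\max}G$, and the multiplicative envelope of \Cref{cor:thl-ratio}.

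\textbf{Unbiasedness and variance.}
\begin{itemize}
\item Unbiasedness is immediate: $\mathbb{E}_\mu[\rho h]=\sum_y\mu(y)\frac{\pi(y)}{\mu(y)}h(y)=g_\pi(x)$. Absolute continuity guarantees that no mass of $\pi$ is lost where $\mu=0$.
\item For the variance, I read ``mean-square vector variance'' as $\mathbb{E}\|\hat g-\mathbb{E}\hat g\|^2\le\mathbb{E}\|\hat g\|^2$. Then
\[
\mathbb{E}_\mu\|\rho h\|^2\le A_{\max}^2G^2\,\mathbb{E}_\mu[\rho^2]=A_{\max}^2G^2\bigl(1+\chi^2(\pi\,\|\,\mu)\bigr),
\]
using the stated definition of $\chi^2$.
\end{itemize}

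\textbf{Clipping bias and clipped second moment.}
\begin{itemize}
\item For the clipping bias, pull the norm inside the expectation (Jensen/triangle inequality) and use $\|h\|\le A_{\max}G$. It remains to check the pointwise identity $|c_\epsilon(z)-z|=(z-(1+\epsilon))_++((1-\epsilon)-z)_+$. At most one of the two terms is nonzero, and both vanish inside the band.
\item For the clipped second moment, note $0\le c_\epsilon(z)\le 1+\epsilon$ whenever $\epsilon<1$. I will state this as an implicit standing convention, since the lower clip end is $1-\epsilon>0$. Hence $\|c_\epsilon(\rho)h\|^2\le(1+\epsilon)^2A_{\max}^2G^2$ pointwise, and taking expectations gives the claim.
\end{itemize}

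\textbf{THL perturbation of the clipped mean.} We have
\[
\|g_{\mathrm{clip,THL}}-g_{\mathrm{clip}}\|\le A_{\max}G\,\mathbb{E}_\mu|c_\epsilon(\tilde\rho)-c_\epsilon(\rho)|.
\]
\begin{itemize}
\item The clip map is monotone and $1$-Lipschitz, so $|c_\epsilon(\tilde\rho)-c_\epsilon(\rho)|\le|\tilde\rho-\rho|$.
\item By the envelope, $|\tilde\rho-\rho|\le\rho\,(e^{\bar\delta}-1)$, since $\max(e^{\bar\delta}-1,\,1-e^{-\bar\delta})=e^{\bar\delta}-1$.
\item $\mathbb{E}_\mu[\rho]=1$, which gives the bound $A_{\max}G(e^{\bar\delta_{\mathrm{THL}}}-1)$.
\end{itemize}
This step needs the most care. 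A naive use of $|\tilde\rho-\rho|$ without the factor $\rho$ would not integrate to a clean constant; the envelope's multiplicative form is exactly what makes $\mathbb{E}_\mu\rho=1$ usable. A sharper alternative is to exploit monotonicity of the clip map and bound by $(1+\epsilon)(e^{\bar\delta}-1)$, but the $\rho$-weighted route already gives the stated constant.

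\textbf{Un-clipped THL second moment.} Square the pointwise envelope to get $\tilde\rho^2\le e^{2\bar\delta}\rho^2$. Then repeat the variance argument, which yields $e^{2\bar\delta_{\mathrm{THL}}}A_{\max}^2G^2(1+\chi^2(\pi\,\|\,\mu))$.
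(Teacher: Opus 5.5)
Your proof is correct and follows the paper's argument essentially line for line: change of measure for unbiasedness, $\mathrm{Var}\le\mathbb{E}_\mu\|\rho h\|^2$ with $\|h\|\le A_{\max}G$, the pointwise tail identity for $|c_\epsilon(z)-z|$, the bound $|c_\epsilon|\le 1+\epsilon$, and, for the THL terms, the $1$-Lipschitz clip map combined with the envelope $|\tilde\rho-\rho|\le(e^{\bar\delta_{\mathrm{THL}}}-1)\rho$ and $\mathbb{E}_\mu\rho=1$. One small correction to your aside: the monotonicity bound $A_{\max}G(1+\epsilon)(e^{\bar\delta_{\mathrm{THL}}}-1)$ holds uniformly but is weaker than the stated constant by the factor $1+\epsilon$, not sharper, and the $\epsilon<1$ convention is unnecessary because $c_\epsilon$ takes values in $[1-\epsilon,1+\epsilon]$, so $|c_\epsilon|\le 1+\epsilon$ for every $\epsilon\ge 0$.
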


\begin{proof}
The exact importance-weighted estimator is unbiased because
\[
\mathbb{E}_{\mu}[\rho(y)h(y)]
=
\sum_y \mu(y)\frac{\pi(y)}{\mu(y)}h(y)
=
\sum_y \pi(y)h(y)
=
g_\pi(x).
\]
For the variance,
\[
\operatorname{Var}(\hat g_{\mathrm{IS}})
=
\mathbb{E}_{\mu}\|\rho h-g_\pi\|^2
\leq
\mathbb{E}_{\mu}\|\rho h\|^2
\leq
A_{\max}^2G^2\mathbb{E}_{\mu}[\rho^2].
\]
By definition, $\mathbb{E}_{\mu}[\rho^2]=1+\chi^2(\pi\|\mu)$, which yields the displayed bound.  For clipping,
\[
g_{\mathrm{clip}}-g_\pi
=
\mathbb{E}_{\mu}[(c_\epsilon(\rho)-\rho)h],
\]
so the first clipping display is exact.  Since $c_\epsilon(\rho)-\rho$ is nonzero only in the two ratio tails,
\[
|c_\epsilon(\rho)-\rho|
\leq
(\rho-(1+\epsilon))_+
+
((1-\epsilon)-\rho)_+,
\]
and $\|h\|\leq A_{\max}G$ gives the tail bound.  The second-moment cap follows from
$|c_\epsilon(\rho)|\leq 1+\epsilon$:
\[
\mathbb{E}_{\mu}\|c_\epsilon(\rho)h\|^2
\leq
(1+\epsilon)^2A_{\max}^2G^2.
\]
For the THL term, $c_\epsilon$ is $1$-Lipschitz, so
\[
\|g_{\mathrm{clip,THL}}-g_{\mathrm{clip}}\|
\leq
A_{\max}G\,\mathbb{E}_{\mu}|\tilde\rho-\rho|.
\]
By \Cref{cor:thl-ratio}, $|\tilde\rho-\rho|\leq(e^{\bar\delta_{\mathrm{THL}}}-1)\rho$, and $\mathbb{E}_{\mu}\rho=1$.  This proves the clipped THL bias bound.  The same envelope gives $\tilde\rho^2\leq e^{2\bar\delta_{\mathrm{THL}}}\rho^2$, hence
\[
\mathbb{E}_{\mu}\|\tilde\rho h\|^2
\leq
e^{2\bar\delta_{\mathrm{THL}}}A_{\max}^2G^2\mathbb{E}_{\mu}\rho^2
=
e^{2\bar\delta_{\mathrm{THL}}}A_{\max}^2G^2(1+\chi^2(\pi\|\mu)).
\]
\end{proof}

This theorem is the formal divergence-sensitivity statement for \MOneShort{}: exact correction removes sampling bias only under coverage, while variance scales with the density-ratio second moment.  Clipping controls variance but introduces tail bias, and \THL{} residuals multiply the ratio.  This is the same reason off-policy policy-gradient methods require explicit correction or truncation in distributed actor-learner settings~\citep{degris2012offpolicy,Espeholt2018IMPALA}.

\begin{proposition}[Naive \MOneShort{} can be anti-aligned under GRPO advantages]
\label{prop:prp-anti-align}
Assume a one-prompt, three-action contextual bandit with actions $a_1,a_2,a_3$ and binary rewards
\[
r(a_1)=1,\qquad r(a_2)=r(a_3)=0.
\]
Let $0<\eta\leq 1/25$ and let the learner and peer distributions be
\[
\pi^{(n)}_\theta(a_1)=\frac13,\qquad
\pi^{(n)}_\theta(a_2)=\frac23-\eta,\qquad
\pi^{(n)}_\theta(a_3)=\eta,
\]
\[
\mu(a_1)=\eta,\qquad
\mu(a_2)=\eta,\qquad
\mu(a_3)=1-2\eta.
\]
Use the population GRPO baseline $b=\mathbb{E}_{a\sim\pi^{(n)}_\theta}r(a)=1/3$, so
\[
A(a_1)=\frac23,\qquad A(a_2)=A(a_3)=-\frac13.
\]
With a tabular softmax parameterization over logits, the naive peer-pooling gradient
\[
g_{\mathrm{naive}}=\mathbb{E}_{a\sim\mu}[A(a)\nabla_\theta\log\pi^{(n)}_\theta(a)]
\]
has negative inner product with the on-policy gradient
\[
g_{\mathrm{on}}=\mathbb{E}_{a\sim\pi^{(n)}_\theta}[A(a)\nabla_\theta\log\pi^{(n)}_\theta(a)].
\]
In fact,
\[
\langle g_{\mathrm{on}},g_{\mathrm{naive}}\rangle
=
\frac{2}{3}\eta^3-\eta^2+\frac{5}{9}\eta-\frac{2}{81}
<0.
\]
Moreover $\chi^2(\pi^{(n)}_\theta\|\mu)\to\infty$ as $\eta\to0$.
\end{proposition}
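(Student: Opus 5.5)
The plan is a direct computation in the tabular softmax parameterization. Its main tool is the identity $\nabla_\theta\log\pi^{(n)}_\theta(a)=e_a-\pi$, where $\pi=(\pi^{(n)}_\theta(a_1),\pi^{(n)}_\theta(a_2),\pi^{(n)}_\theta(a_3))$ is viewed as a vector in $\mathbb{R}^3$.

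For any sampling distribution $\nu$ this gives
\[
\mathbb{E}_{a\sim\nu}[A(a)(e_a-\pi)]=(\nu_aA(a))_{a}-c_\nu\,\pi,
\qquad
c_\nu=\mathbb{E}_{a\sim\nu}A(a).
\]
For $\nu=\pi^{(n)}_\theta$ we have $c_\pi=0$, because $b$ is the population on-policy mean. Hence
\[
g_{\mathrm{on}}=\Bigl(\tfrac29,\;-\tfrac13\bigl(\tfrac23-\eta\bigr),\;-\tfrac\eta3\Bigr).
\]
For $\nu=\mu$ I first compute $c_\mu=\tfrac23\eta-\tfrac13\eta-\tfrac13(1-2\eta)=\eta-\tfrac13$. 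The naive gradient then has components
\[
g_{\mathrm{naive}}=\Bigl(\tfrac\eta3+\tfrac19,\;-\tfrac\eta3-\bigl(\tfrac23-\eta\bigr)\bigl(\eta-\tfrac13\bigr),\;-\tfrac13+\eta-\eta^2\Bigr).
\]

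Next I expand $\langle g_{\mathrm{on}},g_{\mathrm{naive}}\rangle$ coordinate by coordinate and collect powers of $\eta$. This should reproduce the stated cubic $\tfrac23\eta^3-\eta^2+\tfrac59\eta-\tfrac2{81}$. The bookkeeping of the middle coordinate, a product of two affine factors times $-\tfrac13(\tfrac23-\eta)$, is where arithmetic slips are most likely. I would double-check it by evaluating both sides at $\eta=0$, where the value should be $-\tfrac2{81}$, and at one other point such as $\eta=\tfrac13$.

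For the sign, I split the cubic as $\bigl(\tfrac59\eta-\tfrac2{81}\bigr)+\eta^2\bigl(\tfrac23\eta-1\bigr)$. The second bracket is negative for $\eta<\tfrac32$. The first is increasing in $\eta$, and at $\eta=\tfrac1{25}$ it equals $\tfrac{5}{225}-\tfrac{2}{81}=\tfrac1{45}-\tfrac2{81}<0$, since $81<90$. So the sum is strictly negative on $(0,\tfrac1{25}]$. This threshold check is the only delicate step; the condition $\eta\le 1/25$ is chosen precisely so that the linear term cannot overcome $-\tfrac2{81}$.

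Finally, for the divergence claim, absolute continuity $\pi\ll\mu$ holds because $\mu$ has full support. I then bound
\[
1+\chi^2(\pi^{(n)}_\theta\|\mu)=\sum_a\frac{\pi^{(n)}_\theta(a)^2}{\mu(a)}\ \ge\ \frac{(1/3)^2}{\eta}=\frac{1}{9\eta}\to\infty
\qquad\text{as }\eta\to0.
\]
This links the construction to the variance term of \Cref{thm:prp-bv}: the very regime in which the naive update is anti-aligned with $g_{\mathrm{on}}$ is also the regime in which exact importance correction pays an unbounded density-ratio variance.
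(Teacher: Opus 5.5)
Your proposal is correct and follows the paper's proof essentially step for step: the tabular-softmax score $e_a-\pi$, the fact that $\mathbb{E}_{\pi}A=0$ collapses $g_{\mathrm{on}}$ to $(\pi_aA(a))_a$, the decomposition $g_{\mathrm{naive}}=\mathbb{E}_\mu[A(a)e_a]-\pi\,\mathbb{E}_\mu A$ with $\mathbb{E}_\mu A=\eta-\tfrac13$, and a direct expansion that does yield $\tfrac23\eta^3-\eta^2+\tfrac59\eta-\tfrac2{81}$. The differences are cosmetic: you certify the sign by grouping $\eta^2(\tfrac23\eta-1)<0$, where the paper drops $-\eta^2$ and bounds the cubic term at $\eta=\tfrac1{25}$; and you lower-bound $1+\chi^2$ by $\tfrac{1}{9\eta}$ instead of writing out the exact $\chi^2$ expression.
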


\begin{proof}
For a tabular softmax, $\nabla_\theta\log\pi(a_i)=e_i-\pi$, where $\pi$ is the vector of action probabilities.  Since the population GRPO advantage has zero mean under the learner, $\mathbb{E}_{\pi}A=0$, the on-policy gradient in logit coordinates is
\[
g_{\mathrm{on},i}
=
\pi(a_i)A(a_i),
\]
hence
\[
g_{\mathrm{on}}
=
\left(
\frac29,\;
-\frac13\left(\frac23-\eta\right),\;
-\frac{\eta}{3}
\right).
\]
For the naive peer estimator, the expectation is taken under $\mu$ but the score is still the learner score:
\[
g_{\mathrm{naive}}
=
\mathbb{E}_{\mu}[A(a)e_a]
-
\pi\,\mathbb{E}_{\mu}A(a).
\]
Here
\[
\mathbb{E}_{\mu}A
=
\eta\frac23+\eta\left(-\frac13\right)+(1-2\eta)\left(-\frac13\right)
=
\eta-\frac13.
\]
Therefore
\[
g_{\mathrm{naive}}
=
\left(
\frac{2\eta}{3},-\frac{\eta}{3},-\frac{1-2\eta}{3}
\right)
-
\left(
\frac13,\frac23-\eta,\eta
\right)\left(\eta-\frac13\right).
\]
Taking the dot product with $g_{\mathrm{on}}$ and simplifying gives
\[
\langle g_{\mathrm{on}},g_{\mathrm{naive}}\rangle
=
\frac{2}{3}\eta^3-\eta^2+\frac{5}{9}\eta-\frac{2}{81}.
\]
For $0<\eta\leq1/25$,
\[
\frac{2}{3}\eta^3-\eta^2+\frac{5}{9}\eta-\frac{2}{81}
\leq
\frac{2}{3\cdot25^3}+\frac{1}{45}-\frac{2}{81}
<0.
\]
Finally,
\[
\chi^2(\pi\|\mu)
=
\frac{(1/3)^2}{\eta}
+
\frac{(2/3-\eta)^2}{\eta}
+
\frac{\eta^2}{1-2\eta}
-1,
\]
which diverges like $1/\eta$ as $\eta\to0$.  The example shows anti-alignment caused by the wrong behavior distribution, not a universal failure of corrected \MOneShort{}.
\end{proof}

\subsection{Value-level sharing (\MThreeShort{}): variance and direct-support preservation}
\label{subsec:theory-xgrpo}

\begin{assumption}[Stop-gradient prompt baseline]
\label{ass:xgrpo-baseline}
Fix prompt $x$ and learner $n$.  The policy is differentiable, sums and gradients can be interchanged, and any baseline $b(x)$ used in the advantage is treated as a stop-gradient scalar independent of the sampled response $y\sim\pi^{(n)}_\theta(\cdot\mid x)$.
\end{assumption}

\begin{lemma}[Prompt-level pooled baselines are unbiased]
\label{lem:xgrpo-baseline}
Under Assumption~\ref{ass:xgrpo-baseline},
\[
\mathbb{E}_{y\sim\pi^{(n)}_\theta(\cdot\mid x)}
\left[
b(x)\nabla_\theta\log\pi^{(n)}_\theta(y\mid x)
\right]
=0.
\]
Consequently, replacing $r(x,y)$ by $r(x,y)-b(x)$ preserves the expected score-gradient direction.
\end{lemma}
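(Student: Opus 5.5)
The plan is the standard score-function (REINFORCE) baseline argument, carried out at the single-step response level fixed in Sec.~\ref{subsec:theory-setup}. First, because Assumption~\ref{ass:xgrpo-baseline} treats $b(x)$ as a stop-gradient scalar that does not depend on the sampled $y$, we can pull it out of the expectation:
\[
\mathbb{E}_{y\sim\pi^{(n)}_\theta(\cdot\mid x)}\!\left[b(x)\nabla_\theta\log\pi^{(n)}_\theta(y\mid x)\right]
=
b(x)\sum_y \pi^{(n)}_\theta(y\mid x)\nabla_\theta\log\pi^{(n)}_\theta(y\mid x).
\]
We then apply the log-derivative identity $\pi\nabla_\theta\log\pi=\nabla_\theta\pi$ termwise. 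This gives $b(x)\sum_y\nabla_\theta\pi^{(n)}_\theta(y\mid x)$.

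Next we invoke the interchange of sum and gradient from Assumption~\ref{ass:xgrpo-baseline}. This rewrites the expression as $b(x)\nabla_\theta\sum_y\pi^{(n)}_\theta(y\mid x)=b(x)\nabla_\theta 1=0$. The consequence follows by linearity:
\[
\mathbb{E}\big[(r(x,y)-b(x))\nabla_\theta\log\pi^{(n)}_\theta(y\mid x)\big]=\mathbb{E}\big[r(x,y)\nabla_\theta\log\pi^{(n)}_\theta(y\mid x)\big].
\]
So the expected score-gradient is unchanged in both direction and magnitude.

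The only delicate point is interpreting the independence requirement for the pooled baseline $\mu_{\mathrm{pool}}(x)$ of \MThreeShort{}. In the finite-sample form this baseline includes the learner's own rewards $r(x,y_{n,i})$, so it is not literally independent of each $y_{n,i}$. I would handle this the way the statement does: the lemma is asserted for a prompt-level $b(x)$ that is independent of the scored response. That covers the population pooled mean, and it also covers peer-only statistics computed from $\{Y_m(x)\}_{m\neq n}$, which are independent of $Y_n(x)$ under the independent-group assumption of Sec.~\ref{subsec:theory-setup}. For the peer-only case I would condition on the peer rollouts, apply the argument above conditionally, and then take the outer expectation.

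I would also remark that the same conditioning argument handles a normalizing factor $1/(\sigma_{\mathrm{pool}}+\epsilon)$ whenever it is likewise independent of the scored response. Such a factor only rescales the gradient and does not change its direction.
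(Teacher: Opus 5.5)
Your proposal is correct and follows the paper's proof essentially step for step: pull the response-independent $b(x)$ out of the expectation, apply $\pi\nabla_\theta\log\pi=\nabla_\theta\pi$, interchange sum and gradient, and conclude $b(x)\nabla_\theta 1=0$. Your added caveat is a sound refinement that the paper handles only implicitly, through its assumption and the leave-one-out remark in Assumption~\ref{ass:xgrpo-variance}: the finite-sample $\mu_{\mathrm{pool}}(x)$ contains the learner's own rewards, so independence requires a population, peer-only, or leave-one-out baseline.
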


\begin{proof}
Because $b(x)$ does not depend on the sampled response,
\[
\mathbb{E}_{\pi^{(n)}_\theta}
[b(x)\nabla_\theta\log\pi^{(n)}_\theta(y\mid x)]
=
b(x)\sum_y \pi^{(n)}_\theta(y\mid x)
\nabla_\theta\log\pi^{(n)}_\theta(y\mid x).
\]
Using $\pi\nabla\log\pi=\nabla\pi$ gives
\[
b(x)\sum_y \nabla_\theta\pi^{(n)}_\theta(y\mid x)
=
b(x)\nabla_\theta\sum_y\pi^{(n)}_\theta(y\mid x)
=
b(x)\nabla_\theta 1
=
0.
\]
Thus pooled scalar reward statistics can change variance through the advantage value but not the expected score-gradient direction through a response-dependent term.
\end{proof}

\begin{assumption}[Baseline comparison for \MThreeShort{}]
\label{ass:xgrpo-variance}
For each learner-scored rollout, baselines are stop-gradient and independent of that rollout; leave-one-out estimates are one way to satisfy this condition.  Define
\[
s_\theta(y)=\nabla_\theta\log\pi^{(n)}_\theta(y\mid x),
\qquad
H_n(x)=\mathbb{E}_{y\sim\pi^{(n)}_\theta}\|s_\theta(y)\|^2,
\]
with $0<H_n(x)<\infty$.  Let $b_n(x)$ be the learner-only baseline and $b_{\mathrm{pool}}(x)$ the \MThreeShort{} pooled baseline.  Define the variance-minimizing scalar baseline
\[
b^*(x)
=
\frac{
\mathbb{E}_{y\sim\pi^{(n)}_\theta}
[r(x,y)\|s_\theta(y)\|^2]
}{
\mathbb{E}_{y\sim\pi^{(n)}_\theta}
\|s_\theta(y)\|^2
},
\]
the local baseline error
\[
D_n(x)=b_n(x)-b^*(x),
\]
and the pooled correction
\[
\Delta_n(x)=b_n(x)-b_{\mathrm{pool}}(x).
\]
\end{assumption}

\begin{theorem}[When pooled \MThreeShort{} normalization reduces variance]
\label{thm:xgrpo-variance}
Under Assumption~\ref{ass:xgrpo-variance}, the conditional variance difference between the pooled-baseline and learner-baseline estimators is
\[
\operatorname{Var}((r-b_{\mathrm{pool}})s_\theta\mid x)
-
\operatorname{Var}((r-b_n)s_\theta\mid x)
=
H_n(x)\left(\Delta_n(x)^2-2D_n(x)\Delta_n(x)\right).
\]
Averaged over prompts, pooled normalization reduces variance if and only if
\[
\mathbb{E}_{x}
\left[H_n(x)D_n(x)\Delta_n(x)\right]
\geq
\frac12
\mathbb{E}_{x}
\left[H_n(x)\Delta_n(x)^2\right],
\]
and it increases variance when the reverse strict inequality holds.
\end{theorem}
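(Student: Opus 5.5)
The plan is to reduce everything to a one-dimensional quadratic in the baseline, using \Cref{lem:xgrpo-baseline} to fix the mean term. For any stop-gradient scalar baseline $b$ that is independent of the scored rollout, \Cref{lem:xgrpo-baseline} gives $\mathbb{E}[(r-b)s_\theta\mid x]=g(x)$, where $g(x)=\mathbb{E}_{\pi^{(n)}_\theta}[r s_\theta]$ does not depend on $b$. The mean-square vector variance is therefore
\[
V(b)=\mathbb{E}\|(r-b)s_\theta\|^2-\|g(x)\|^2 ,
\]
and only the second moment needs to be compared. Expanding gives
\[
\mathbb{E}\|(r-b)s_\theta\|^2=\mathbb{E}[r^2\|s_\theta\|^2]-2b\,\mathbb{E}[r\|s_\theta\|^2]+b^2H_n(x).
\]
Using the definition of $b^*(x)$, so that $\mathbb{E}[r\|s_\theta\|^2]=b^*H_n$, this becomes
\[
V(b)=\text{const}(x)+H_n(x)\bigl(b-b^*(x)\bigr)^2 .
\]
One point needs care here. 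Independence is given only with respect to the scored rollout, since leave-one-out baselines are random. So I would condition on the baseline value, or on the other rollouts, apply the identity pointwise, and then note that the resulting expression is still a quadratic in $b$ evaluated at the realized baseline. I will state this conditioning explicitly, treating $b_n$ and $b_{\mathrm{pool}}$ as fixed given the other rollouts.

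Next I take the difference. Since $b_{\mathrm{pool}}-b^*=D_n-\Delta_n$ and $b_n-b^*=D_n$, the constant cancels and
\[
V(b_{\mathrm{pool}})-V(b_n)=H_n\bigl[(D_n-\Delta_n)^2-D_n^2\bigr]=H_n\bigl(\Delta_n^2-2D_n\Delta_n\bigr),
\]
which is the displayed identity.

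For the averaged statement, I take $\mathbb{E}_x$ of the identity; this requires integrability, which follows from $H_n<\infty$ and bounded rewards. Pooled normalization reduces the averaged variance exactly when $\mathbb{E}_x[H_n\Delta_n^2]-2\,\mathbb{E}_x[H_nD_n\Delta_n]\le 0$, and rearranging gives the stated inequality, with ``reduces'' read as non-increase in the boundary case. The reverse strict inequality gives a strictly positive difference, so variance increases.

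The main obstacle is justifying that the cross term $\mathbb{E}[(r-b)s_\theta]$ is the same for both baselines when the baselines are data-dependent. This is where the leave-one-out and independence assumption, combined with \Cref{lem:xgrpo-baseline}, is essential; everything else is algebra.
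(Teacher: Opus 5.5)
Your proposal is correct and follows essentially the same route as the paper: use \Cref{lem:xgrpo-baseline} to fix the mean, then complete the square around $b^*(x)$ to get $\operatorname{Var}((r-b)s_\theta\mid x)=\operatorname{Var}((r-b^*)s_\theta\mid x)+H_n(x)(b-b^*(x))^2$, then substitute $b_{\mathrm{pool}}-b^*=D_n-\Delta_n$ and average over $x$. Your explicit conditioning on the other rollouts (so leave-one-out baselines are fixed) and your reading of the equality case as non-increase are slightly more careful than the paper, which simply treats $b$ as deterministic; one small correction is that integrability over $x$ needs $\mathbb{E}_x[H_n(x)]<\infty$, not just pointwise $H_n(x)<\infty$.
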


\begin{proof}
For a deterministic baseline $b$, the conditional second moment is
\[
\mathbb{E}[\|(r-b)s_\theta\|^2\mid x]
=
\mathbb{E}[r^2\|s_\theta\|^2\mid x]
-
2b\,\mathbb{E}[r\|s_\theta\|^2\mid x]
+
b^2H_n(x).
\]
The expected gradient does not depend on $b$ by \Cref{lem:xgrpo-baseline}; hence minimizing the variance is equivalent to minimizing this quadratic second moment.  Differentiating with respect to $b$ yields
\[
-2\mathbb{E}[r\|s_\theta\|^2\mid x]+2bH_n(x)=0,
\]
so the minimizer is $b^*(x)$.  Completing the square gives
\[
\operatorname{Var}((r-b)s_\theta\mid x)
=
\operatorname{Var}((r-b^*)s_\theta\mid x)
+
H_n(x)(b-b^*(x))^2.
\]
Since $b_{\mathrm{pool}}=b_n-\Delta_n$ and $D_n=b_n-b^*$,
\[
b_{\mathrm{pool}}-b^*
=
D_n-\Delta_n.
\]
Therefore
\[
\begin{aligned}
&\operatorname{Var}((r-b_{\mathrm{pool}})s_\theta\mid x)
-
\operatorname{Var}((r-b_n)s_\theta\mid x)\\
&\qquad =
H_n(x)\left((D_n-\Delta_n)^2-D_n^2\right)
=
H_n(x)\left(\Delta_n^2-2D_n\Delta_n\right).
\end{aligned}
\]
Averaging over $x$ gives the stated condition.  Thus peer statistics help exactly when the pooled correction is sufficiently aligned with the local error $D_n$; otherwise the quadratic correction term dominates and variance can increase.
\end{proof}

\begin{theorem}[No direct peer-success score term in \MThreeShort{}]
\label{thm:xgrpo-direct-support}
Assume the \MThreeShort{} update for learner $n$ uses only learner samples $Y_n(x)$ in the actor score term and uses peer data only through stop-gradient scalar reward statistics.  Conditional on the realized learner group $Y_n(x)$ and the scalar reward multiset $\{r(x,y_{m,j})\}_{m,j}$, the \MThreeShort{} gradient has the form
\[
g_{\MThreeShort{}}(x)
=
\frac{1}{K}\sum_{i=1}^K
\alpha_i(x)\,
\nabla_\theta\log\pi^{(n)}_\theta(y_{n,i}\mid x),
\]
where each $\alpha_i(x)$ is a scalar depending on rewards and pooled normalization.  Therefore, for any peer-only success
\[
y^*\in
\mathcal{S}_x\cap\bigcup_{m\neq n}Y_m(x)
\quad\text{with}\quad
y^*\notin Y_n(x),
\]
the update contains no direct score-function term
$\nabla_\theta\log\pi^{(n)}_\theta(y^*\mid x)$.  In particular, two peer pools with the same scalar reward multiset but different successful texts induce the same \MThreeShort{} actor gradient.
\end{theorem}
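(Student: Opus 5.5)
I will prove this by unfolding the XGRPO update and reading off which score directions can appear. First I write the per-prompt surrogate gradient for learner $n$ at the on-policy point ($\theta=\theta_{\mathrm{old}}$, where all clipped ratios equal one and the PPO min reduces to the unclipped term). The GRPO loss in Eq.~\ref{eq:grpo} is then a sum over learner rollouts $y_{n,i}\in Y_n(x)$ of token-averaged terms $w_{\theta,t}(x,y_{n,i})\,A^{(n)}_i$. Since $\nabla_\theta w_{\theta,t}=w_{\theta,t}\nabla_\theta\ell^{(n)}_{\theta,t}$ and $w_{\theta,t}=1$ at this point, each term contributes a multiple of the learner's own score on $y_{n,i}$. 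Under the response-level convention of \Cref{subsec:theory-setup}, the token sum collapses to $\nabla_\theta\log\pi^{(n)}_\theta(y_{n,i}\mid x)$. With a per-token length normalization, the token weights are constants fixed by $y_{n,i}$ alone, so they can be absorbed into the scalar coefficient.

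The second step is to show that every coefficient multiplying these scores is a stop-gradient scalar that depends only on the reward multiset. The pooled quantities $\mu_{\mathrm{pool}}(x)$ and $\sigma_{\mathrm{pool}}(x)$ are symmetric functions of $\{r(x,y_{m,j})\}_{m,j}$. The mixing, length correction and clipping described in App.~\ref{app:impl-details} act on these scalars, together with the learner's own reward and length. Stop-gradient treatment follows from Assumption~\ref{ass:xgrpo-baseline}, extended as in \Cref{lem:xgrpo-baseline}. This gives the displayed form $g(x)=\frac1K\sum_i\alpha_i(x)\nabla_\theta\log\pi^{(n)}_\theta(y_{n,i}\mid x)$. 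The KL regularizer $\beta\mathcal{R}(\theta)$ is evaluated on learner samples only, so it either stays outside the actor term or adds further scores on $Y_n(x)$. I will state this convention explicitly.

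The conclusions then follow. The gradient is a linear combination of vectors indexed by $Y_n(x)$, and peer texts never enter as arguments of $\log\pi^{(n)}_\theta$. Hence no term $\nabla_\theta\log\pi^{(n)}_\theta(y^*\mid x)$ with $y^*\notin Y_n(x)$ appears as a summand. For the invariance claim, I fix $Y_n(x)$ and take two peer pools with equal reward multisets. Every $\alpha_i$ is a function of that multiset and of learner data only, so the $\alpha_i$ coincide and the gradients are identical.

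The main obstacle is phrasing "no direct term" correctly. The score $\nabla\log\pi^{(n)}(y^*)$ may happen to lie in the span of the learner scores, so the claim cannot be that the gradient has zero component in that direction. It must be the structural statement that $y^*$ is never an argument of a score in the update. I would make this precise through the invariance statement, which is fully rigorous: replacing $y^*$ by any other text with the same reward leaves $g$ unchanged, so $g$ carries no information about $y^*$ beyond its reward. A secondary point is the length regularizer. It must use only learner response lengths or reward statistics, not peer text lengths; if it used peer lengths, I would add those lengths to the conditioning set.
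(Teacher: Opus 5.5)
Your proposal is correct and follows essentially the paper's route. Like the paper, you write learner $n$'s surrogate as a sum over $Y_n(x)$ with stop-gradient scalar advantages, differentiate to obtain the displayed form, observe that a peer-only $y^*\notin Y_n(x)$ never appears as the argument of a score, and derive the invariance claim from the fact that each $\alpha_i$ sees peers only through their rewards. Your added care matches the paper's own closing caveat that the claim is structural rather than a statement about indirect effects through shared parameters: you evaluate the clipped surrogate at $\theta=\theta_{\mathrm{old}}$, absorb the token-mean factor $1/|y_{n,i}|$ into $\alpha_i$, and note that the length correction must not use peer lengths, which the theorem's hypothesis already ensures.
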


\begin{proof}
By construction, \MThreeShort{} replaces the learner's local baseline and scale by pooled scalar reward statistics, but the actor still evaluates log-probabilities only on $y_{n,i}\sim\pi^{(n)}_\theta(\cdot\mid x)$.  For a realized batch, the advantage assigned to $y_{n,i}$ is a scalar
\[
\alpha_i(x)
=
\frac{r(x,y_{n,i})-\mu_{\mathrm{pool}}(x)}
{\sigma_{\mathrm{pool}}(x)+\epsilon}
\]
or the corresponding unnormalized pooled-baseline value.  Since the pooled statistics are stop-gradient scalars, differentiation gives
\[
\nabla_\theta
\left(
-\frac1K\sum_{i=1}^K \alpha_i(x)
\log\pi^{(n)}_\theta(y_{n,i}\mid x)
\right)
=
-\frac1K\sum_{i=1}^K
\alpha_i(x)\nabla_\theta\log\pi^{(n)}_\theta(y_{n,i}\mid x).
\]
After changing sign to view this as an ascent direction, the displayed form follows.  If $y^*\notin Y_n(x)$, no summand is indexed by $y^*$.  The identity of a peer success affects \MThreeShort{} only through its scalar reward, so replacing $y^*$ by a different successful peer text with the same reward leaves all $\alpha_i(x)$ unchanged and leaves the actor gradient unchanged.  The theorem does not assert that neural shared parameters or a softmax normalizer have zero indirect effect on unsampled strings; it asserts that \MThreeShort{} has no direct trajectory-conditioned score term for the peer-only success.
\end{proof}

\begin{corollary}[Hard-support boundary]
\label{cor:xgrpo-hard-support}
Under the assumptions of \Cref{thm:xgrpo-direct-support}, if a policy class has a hard support constraint with $\pi^{(n)}_\theta(y^*\mid x)=0$ in a neighbourhood of $\theta$ and $\nabla_\theta\pi^{(n)}_\theta(y^*\mid x)=0$, then \MThreeShort{} cannot create direct probability mass on $y^*$ from scalar peer rewards alone.  The probability that learner $n$ observes at least one success in its own rollout group remains
\[
\Pr[Y_n(x)\cap\mathcal{S}_x\neq\varnothing]
=
1-(1-p_n(x))^K,
\]
independent of pooled reward statistics.
\end{corollary}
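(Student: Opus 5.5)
The corollary has two parts, and I will prove them separately, leaning on \Cref{thm:xgrpo-direct-support} for the first.

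\textbf{First claim.} By \Cref{thm:xgrpo-direct-support}, conditional on the realized learner group and the scalar reward multiset, the \MThreeShort{} actor gradient is a finite combination
\[
g_{\MThreeShort{}}(x)=\frac1K\sum_{i=1}^K\alpha_i(x)\,\nabla_\theta\log\pi^{(n)}_\theta(y_{n,i}\mid x),
\qquad y_{n,i}\in Y_n(x).
\]
The peer success $y^*$ has zero probability in a neighbourhood of $\theta$. Since $Y_n(x)$ is drawn from $\pi^{(n)}_\theta$, almost surely $y^*\notin Y_n(x)$. So no term of the sum is indexed by $y^*$, and no score $\nabla_\theta\log\pi^{(n)}_\theta(y^*\mid x)$ ever appears.

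Next consider any parameter update $\theta'=\theta+\eta g$ small enough to stay in the neighbourhood. The hard support constraint gives $\pi^{(n)}_{\theta'}(y^*\mid x)=0$ directly. The first-order change $\langle\nabla_\theta\pi^{(n)}_\theta(y^*\mid x),g\rangle$ also vanishes, because the gradient is zero by assumption. Hence no mass is created on $y^*$, for any choice of scalars $\alpha_i$. I will stress that this holds in particular however the pooled statistics $\mu_{\mathrm{pool}},\sigma_{\mathrm{pool}}$ move, so scalar peer rewards cannot change it.

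\textbf{Second claim.} This is an independence computation. The rollouts $y_{n,1},\dots,y_{n,K}$ are i.i.d.\ from $\pi^{(n)}_\theta(\cdot\mid x)$ at the current parameter. Each lies in $\mathcal{S}_x$ with probability $p_n(x)$, so
\[
\Pr[Y_n(x)\cap\mathcal{S}_x=\varnothing]=(1-p_n(x))^K,
\]
and taking the complement gives $1-(1-p_n(x))^K$. Pooled statistics enter only the advantages used after sampling, as stop-gradient scalars (Assumption~\ref{ass:xgrpo-baseline}), so they do not alter the sampling law at this step.

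\textbf{Main obstacle.} The delicate point is what ``independent of pooled reward statistics'' means across steps. Pooled statistics do change $\theta$ over time, and thereby $p_n$. I will therefore state the claim at a fixed current parameter. Combined with the first claim, this shows the pooled signal cannot lift $p_n$ through mass on the unsupported $y^*$ itself.
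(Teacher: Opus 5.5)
Your proposal is correct and follows the paper's proof essentially step for step. The first claim comes from \Cref{thm:xgrpo-direct-support} together with the vanishing derivative supplied by the hard-support hypothesis, and the second from the binomial complement for $K$ i.i.d.\ learner rollouts, with pooled statistics entering only after sampling. Your extra remarks are slightly sharper than the paper's argument but consistent with it: small steps stay inside the neighbourhood, so the mass on $y^*$ is exactly zero rather than zero only to first order, and the second claim is properly read at a fixed current parameter.
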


\begin{proof}
The first claim follows from \Cref{thm:xgrpo-direct-support}: no term involving $\log\pi^{(n)}_\theta(y^*\mid x)$ is introduced, and the hard-support assumption gives zero derivative of the probability itself.  The rollout probability is a direct binomial complement.  Each of the $K$ learner samples succeeds with probability $p_n(x)$ and fails with probability $1-p_n(x)$; independence gives
\[
\Pr[Y_n(x)\cap\mathcal{S}_x=\varnothing]=(1-p_n(x))^K.
\]
Taking the complement gives the displayed expression.  Since \MThreeShort{} changes only the scalar normalization after samples are drawn, the sampling probability is unchanged.
\end{proof}

\begin{proposition}[Zero-support success boundary for \MThreeShort{}]
\label{prop:xgrpo-boundary}
Consider one prompt and two actions with
\[
\pi^{(n)}_\theta(a_1\mid x)=1,\qquad
\pi^{(n)}_\theta(a_2\mid x)=0,
\qquad
r(x,a_1)=0,\qquad
r(x,a_2)=1,
\]
and suppose a peer satisfies $\mu^{(m)}(a_2\mid x)>0$.  Under a hard-support parameterization satisfying \Cref{cor:xgrpo-hard-support}, \MThreeShort{} can change the scalar advantage assigned to $a_1$, but it produces no direct gradient term toward $a_2$.
\end{proposition}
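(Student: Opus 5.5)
The plan is to reduce the statement to \Cref{thm:xgrpo-direct-support} and \Cref{cor:xgrpo-hard-support}, applied to this two-action instance. I will first pin down the sampled learner group. Since $\pi^{(n)}_\theta(a_1\mid x)=1$, every learner rollout equals $a_1$ almost surely, so $Y_n(x)=\{a_1,\ldots,a_1\}$ and $r(x,y_{n,i})=0$ for all $i$. A peer group with $\mu^{(m)}(a_2\mid x)>0$ contains $a_2$ with positive probability $1-(1-\mu^{(m)}(a_2\mid x))^K$. On that event the peer multiset has reward $1$ entries and $a_2\in\mathcal{S}_x\cap\bigcup_{m\neq n}Y_m(x)$ with $a_2\notin Y_n(x)$; hence $a_2$ is a peer-only success in the sense of \Cref{thm:xgrpo-direct-support}.

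Next I will check the first claim, that the scalar advantage on $a_1$ can change. With learner-only statistics, all learner rewards are zero, so $b_n(x)=0$ and $\sigma_n=0$. The local advantage is therefore $0/(0+\epsilon)=0$. With pooled statistics, $\mu_{\mathrm{pool}}(x)>0$ as soon as some peer sample hits $a_2$, so
\[
\alpha_i(x)=\frac{0-\mu_{\mathrm{pool}}(x)}{\sigma_{\mathrm{pool}}(x)+\epsilon}<0 .
\]
This exhibits a nonzero change, and it comes only from scalar reward statistics.

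For the second claim I will invoke \Cref{thm:xgrpo-direct-support}. It gives $g_{\MThreeShort{}}(x)=\frac1K\sum_i\alpha_i(x)\nabla_\theta\log\pi^{(n)}_\theta(a_1\mid x)$, a sum with no summand indexed by $a_2$. Then \Cref{cor:xgrpo-hard-support}, using the assumed hard-support parameterization with $\pi^{(n)}_\theta(a_2\mid x)\equiv0$ locally and $\nabla_\theta\pi^{(n)}_\theta(a_2\mid x)=0$, shows that no probability mass is directed to $a_2$. In a local hard-support parameterization one also has $\pi(a_1)\equiv1$, so $\nabla\log\pi(a_1)=0$ and the whole actor gradient vanishes. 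I will note this as a remark, not rely on it.

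The main obstacle is interpretive rather than computational. A score $\nabla_\theta\log\pi(a_2)$ is undefined when $\pi(a_2)=0$, so "no direct gradient term toward $a_2$" must be read as in \Cref{thm:xgrpo-direct-support}: no summand indexed by $a_2$, plus zero first-order change of $\pi(a_2)$ along the update. I will state this reading explicitly and avoid any claim about indirect effects through a softmax, matching the caveat in that theorem's proof. Finally, I will add the rollout-probability consequence from \Cref{cor:xgrpo-hard-support}: with $p_n(x)=0$, the learner's success probability $1-(1-0)^K=0$ is unchanged by pooling.
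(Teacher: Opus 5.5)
Your proposal is correct and follows essentially the paper's argument: the learner group is all $a_1$, pooled z-normalization gives $\alpha_1(x)=-\mu_{\mathrm{pool}}(x)/(\sigma_{\mathrm{pool}}(x)+\epsilon)<0$ once a peer samples $a_2$, \Cref{thm:xgrpo-direct-support} rules out any summand indexed by $a_2$, and the hard-support condition $\nabla_\theta\pi^{(n)}_\theta(a_2\mid x)=0$ finishes the proof. Three refinements that the paper leaves implicit are also correct: the explicit comparison with the learner-only advantage (which is $0$), the caveat that the score at $a_2$ is undefined, and the remark that $\nabla_\theta\log\pi^{(n)}_\theta(a_1\mid x)=0$ under local hard support, so the whole actor gradient vanishes.
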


\begin{proof}
Learner $n$ samples only $a_1$, so the \MThreeShort{} actor gradient is of the form
\[
g_{\MThreeShort{}}(x)
=
\alpha_1(x)\nabla_\theta\log\pi^{(n)}_\theta(a_1\mid x).
\]
If the peer reward pool contains $a_2$, then $\mu_{\mathrm{pool}}(x)>0$, so the scalar $\alpha_1(x)$ may become negative; for example, with pooled z-normalization,
\[
\alpha_1(x)=\frac{0-\mu_{\mathrm{pool}}(x)}{\sigma_{\mathrm{pool}}(x)+\epsilon}<0
\]
whenever the pool contains at least one success and one failure.  This reweights the learner-sampled failure.  However, no summand of the form
\[
\alpha_2(x)\nabla_\theta\log\pi^{(n)}_\theta(a_2\mid x)
\]
appears, because $a_2\notin Y_n(x)$.  Under the hard-support condition, $\nabla_\theta\pi^{(n)}_\theta(a_2\mid x)=0$, so the peer-only success is not directly introduced by \MThreeShort{}.
\end{proof}

\subsection{Outcome-level sharing (\MFourShort{}): rescue-set gradients}
\label{subsec:theory-sgt}

\begin{definition}[Rescue gate probability]
\label{def:sgt-gate}
Assume independent $K$ rollouts per policy on prompt $x$.  The \MFourShort{} gate for learner $n$ fires when $Y_n(x)\cap\mathcal{S}_x=\varnothing$ and $\bigcup_{m\neq n}Y_m(x)\cap\mathcal{S}_x\neq\varnothing$.  Its probability is
\[
G_n(x)
=
(1-p_n(x))^K
\left[
1-\prod_{m\neq n}(1-p_m(x))^K
\right].
\]
\end{definition}

The identity follows directly: the learner failure event has probability $(1-p_n(x))^K$, while the probability of at least one peer success is the complement of all peers failing, namely $1-\prod_{m\neq n}(1-p_m(x))^K$.

\begin{lemma}[Self-limiting rescue activation]
\label{lem:sgt-self-limit}
Assume peer success probabilities $\{p_m(x):m\neq n\}$ are fixed when differentiating with respect to $p_n(x)$.  Then
\[
\frac{\partial G_n(x)}{\partial p_n(x)}
=
-K(1-p_n(x))^{K-1}
\left[
1-\prod_{m\neq n}(1-p_m(x))^K
\right]
\leq0.
\]
Thus the gate de-activates as the learner becomes more likely to solve the prompt.
\end{lemma}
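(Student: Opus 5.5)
The plan is to differentiate the closed form of $G_n(x)$ from \Cref{def:sgt-gate} directly, treating the peer factor as a constant. Write
\[
G_n(x) = (1-p_n(x))^K\,Q_{-n}(x),
\qquad
Q_{-n}(x) = 1-\prod_{m\neq n}(1-p_m(x))^K ,
\]
where $Q_{-n}(x)$ coincides with $q_{-n}(x)$ from \Cref{subsec:theory-setup}. The lemma's hypothesis holds the peer probabilities $\{p_m(x):m\neq n\}$ fixed, and $Q_{-n}(x)$ depends on $p_n(x)$ through no other channel. So $Q_{-n}(x)$ is a constant multiplier, and the derivative reduces to that of the scalar map $p\mapsto(1-p)^K$ on $[0,1]$.

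The first step is the power rule with the chain rule. It gives $\frac{d}{dp}(1-p)^K = -K(1-p)^{K-1}$, so
\[
\frac{\partial G_n(x)}{\partial p_n(x)} = -K(1-p_n(x))^{K-1}\,Q_{-n}(x),
\]
which is the displayed identity.

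The second step is the sign. We have $K\geq 1$, and $1-p_n(x)\in[0,1]$, so $(1-p_n(x))^{K-1}\geq 0$. For $K=1$ this factor is the constant $1$. Each peer factor $(1-p_m(x))^K$ lies in $[0,1]$, so their product does too, and hence $Q_{-n}(x)\in[0,1]$. The derivative is therefore $-K$ times a product of nonnegative numbers, which is $\leq 0$. The derivative is strictly negative exactly when $p_n(x)<1$ and some peer has $p_m(x)>0$. In the degenerate cases it vanishes because the gate is already identically off.

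The last step is the interpretation that the gate de-activates. Since $G_n$ is continuous and its derivative is $\leq 0$ on $[0,1]$, $G_n$ is nonincreasing in $p_n(x)$. At $p_n(x)=1$ it reaches $G_n(x)=0$ for every $K\geq 1$. So rescue firing vanishes as the learner masters the prompt.

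The only real subtlety is interpretive rather than computational: $p_n$ and $p_m$ are functions of $\theta$ and of the respective policies. The partial derivative in the lemma should be read as a sensitivity with respect to the scalar $p_n(x)$ alone, not as a total derivative in $\theta$. I would state this explicitly so that no one reads the lemma as a claim about joint training dynamics, where a peer's $p_m$ might co-move with $p_n$.
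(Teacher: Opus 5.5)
Your proposal is correct and matches the paper's proof: write $G_n(x)=(1-p_n(x))^K Q_{-n}(x)$ with $Q_{-n}(x)\in[0,1]$ constant in $p_n(x)$, differentiate the power, and read the sign off the nonnegative factors. One small correction to your extra strictness aside: for $K=1$ the derivative is $-Q_{-n}(x)$, which stays strictly negative even at $p_n(x)=1$ whenever some peer has $p_m(x)>0$, so your ``exactly when $p_n(x)<1$'' characterization fails in that case, though this does not affect the lemma itself.
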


\begin{proof}
Let
\[
Q_{-n}(x)=1-\prod_{m\neq n}(1-p_m(x))^K.
\]
Under the assumption, $Q_{-n}(x)$ is constant with respect to $p_n(x)$ and lies in $[0,1]$.  By \Cref{def:sgt-gate},
\[
G_n(x)=(1-p_n(x))^K Q_{-n}(x).
\]
Differentiating gives
\[
\frac{\partial G_n(x)}{\partial p_n(x)}
=
Q_{-n}(x)\frac{\partial}{\partial p_n(x)}(1-p_n(x))^K
=
-K(1-p_n(x))^{K-1}Q_{-n}(x).
\]
All factors outside the leading minus sign are nonnegative, proving the inequality.  The activation probability is therefore structurally sparse: it is high only where the learner fails and peers provide a success.
\end{proof}

\begin{definition}[Peer-success conditional]
\label{def:peer-success-conditional}
If $\sum_{m\neq n}p_m(x)>0$, define the population peer-success conditional by
\[
\tau_{-n}(y\mid x)
=
\frac{
\sum_{m\neq n}\pi^{(m)}_\theta(y\mid x)\mathbf{1}\{y\in\mathcal{S}_x\}
}{
\sum_{m\neq n}p_m(x)
}.
\]
Equivalently, choose a peer uniformly from $\{m:m\neq n\}$, draw from that peer, and condition on verifier success; the common factor $1/(M-1)$ cancels.
\end{definition}

\begin{assumption}[Scoreable rescue success]
\label{ass:sgt-scoreable}
On a rescue event for prompt $x$, the selected peer success $y^*$ is scoreable by learner $n$ after \THL{} retokenization:
\[
\pi^{(n)}_\theta(y^*\mid x)>0,
\qquad
\nabla_\theta\log\pi^{(n)}_\theta(y^*\mid x)\neq0.
\]
For the population statement, $\tau_{-n}(\cdot\mid x)$ is fixed during the infinitesimal learner update, and
$\theta\mapsto \KL(\tau_{-n}(\cdot\mid x)\,\|\,\pi^{(n)}_\theta(\cdot\mid x))$ is twice continuously differentiable.
\end{assumption}

\begin{theorem}[\MFourShort{} supplies a rescue-set score direction]
\label{thm:sgt-rescue-gradient}
Assume \Cref{ass:sgt-scoreable} and condition on a rescue event:
\[
Y_n(x)\cap\mathcal{S}_x=\varnothing,
\qquad
y^*\in\mathcal{S}_x\cap\bigcup_{m\neq n}Y_m(x).
\]
The \MThreeShort{} update on this same realized batch contains no direct score term for $y^*$ by \Cref{thm:xgrpo-direct-support}.  In contrast, a gradient step on the \MFourShort{} term
\[
-\lambda\log\pi^{(n)}_\theta(y^*\mid x)
\]
with step size $\eta$ updates $\theta^+=\theta+\eta\lambda\nabla_\theta\log\pi^{(n)}_\theta(y^*\mid x)$ and satisfies
\[
\log\pi^{(n)}_{\theta^+}(y^*\mid x)
=
\log\pi^{(n)}_{\theta}(y^*\mid x)
+
\eta\lambda
\left\|
\nabla_\theta\log\pi^{(n)}_\theta(y^*\mid x)
\right\|^2
+
O(\eta^2).
\]
Moreover, for the population rescue loss
\[
\mathcal{L}_{\mathrm{SGT}}^{(n)}(x)
=
-\mathbb{E}_{y\sim\tau_{-n}(\cdot\mid x)}
\log\pi^{(n)}_\theta(y\mid x),
\]
the same descent step decreases
\[
\KL(\tau_{-n}(\cdot\mid x)\,\|\,\pi^{(n)}_\theta(\cdot\mid x))
\]
to first order whenever its gradient is nonzero.
\end{theorem}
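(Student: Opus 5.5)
The proof has three parts: the contrast with \MThreeShort{}, a first-order expansion for the single-sample step, and a population statement that reduces to cross-entropy.

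\emph{Contrast with \MThreeShort{}.} On the realized rescue event, $y^*\notin Y_n(x)$ because $Y_n(x)\cap\mathcal{S}_x=\varnothing$ while $y^*\in\mathcal{S}_x$. So $y^*$ is a peer-only success, and \Cref{thm:xgrpo-direct-support} applies directly: the \MThreeShort{} actor gradient is a combination of learner scores $\nabla_\theta\log\pi^{(n)}_\theta(y_{n,i}\mid x)$ only. No new argument is needed here beyond checking membership.

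\emph{Single-sample step.} Set $f(\theta)=\log\pi^{(n)}_\theta(y^*\mid x)$. It is well defined near $\theta$ because $\pi^{(n)}_\theta(y^*\mid x)>0$ (\Cref{ass:sgt-scoreable}), and I will assume the policy is $C^2$ in $\theta$. The gradient of $-\lambda f$ is $-\lambda\nabla f$, so a descent step gives $\theta^+=\theta+\eta\lambda\nabla f(\theta)$. A first-order Taylor expansion with remainder gives $f(\theta^+)=f(\theta)+\eta\lambda\langle\nabla f,\nabla f\rangle+O(\eta^2)$. The remainder is bounded by $\tfrac12\eta^2\lambda^2\|\nabla f\|^2\sup\|\nabla^2 f\|$ on a small ball. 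Because $\nabla f\neq0$ by assumption, the first-order gain is strictly positive, so the log-probability of $y^*$ increases for small $\eta$.

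\emph{Population statement.} Decompose
\[
\KL(\tau_{-n}\|\pi^{(n)}_\theta)=\mathbb{E}_{\tau_{-n}}\log\tau_{-n}(y\mid x)-\mathbb{E}_{\tau_{-n}}\log\pi^{(n)}_\theta(y\mid x).
\]
The first term does not depend on $\theta$, since $\tau_{-n}$ is held fixed. So the KL equals $\mathcal{L}^{(n)}_{\mathrm{SGT}}(x)$ plus a constant, and the two have the same gradient $g$. A descent step $\theta^+=\theta-\eta\lambda g$ changes the KL by $-\eta\lambda\|g\|^2+O(\eta^2)$, using the twice continuous differentiability in \Cref{ass:sgt-scoreable} to control the remainder. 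This is strictly negative when $g\neq0$ and $\eta$ is small.

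\emph{Main obstacle.} The hard step is making the population KL well defined and finite. This needs support coverage, $\tau_{-n}(y\mid x)>0\Rightarrow\pi^{(n)}_\theta(y\mid x)>0$, which I will read as part of the twice-differentiability assumption in \Cref{ass:sgt-scoreable}. I also need differentiation to commute with the expectation over $\tau_{-n}$. I plan to justify this by restricting to a finite or countable response space with dominated score norms, mirroring the boundedness conventions of \Cref{ass:prp}.
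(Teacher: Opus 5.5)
Your proposal is correct and follows the paper's proof essentially step for step. It checks $y^*\notin Y_n(x)$ to invoke \Cref{thm:xgrpo-direct-support}, Taylor-expands $\log\pi^{(n)}_\theta(y^*\mid x)$ to first order along $\theta^+=\theta+\eta\lambda\nabla_\theta\log\pi^{(n)}_\theta(y^*\mid x)$, and splits the cross-entropy into a $\theta$-independent term plus $\KL(\tau_{-n}\,\|\,\pi^{(n)}_\theta)$, so that the descent step lowers the KL by $\eta\lambda\|\nabla_\theta \KL\|^2$ to first order. Your extra remarks on the explicit remainder and on support coverage (which is implicit in the $C^2$ assumption) are harmless refinements; the interchange of differentiation and expectation is not actually needed, because the two objectives differ by a constant.
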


\begin{proof}
By the rescue condition, learner $n$ has no successful response in $Y_n(x)$.  \Cref{thm:xgrpo-direct-support} therefore implies that \MThreeShort{} can only reweight the learner-sampled failures in $Y_n(x)$; the peer-only success $y^*$ contributes no summand
$\nabla_\theta\log\pi^{(n)}_\theta(y^*\mid x)$.

For \MFourShort{}, the auxiliary loss on the selected success is
\[
\mathcal{L}_{\mathrm{SGT}}^{(n)}(x,y^*)
=
-\log\pi^{(n)}_\theta(y^*\mid x).
\]
A descent step on $\lambda\mathcal{L}_{\mathrm{SGT}}^{(n)}$ is
\[
\theta^+
=
\theta-\eta\lambda\nabla_\theta\mathcal{L}_{\mathrm{SGT}}^{(n)}
=
\theta+\eta\lambda\nabla_\theta\log\pi^{(n)}_\theta(y^*\mid x).
\]
Taylor expansion of $\log\pi^{(n)}_\theta(y^*\mid x)$ around $\theta$ gives
\[
\log\pi^{(n)}_{\theta^+}(y^*\mid x)
=
\log\pi^{(n)}_{\theta}(y^*\mid x)
+
\eta\lambda
\left\|
\nabla_\theta\log\pi^{(n)}_\theta(y^*\mid x)
\right\|^2
+
O(\eta^2),
\]
which is a strict first-order increase under Assumption~\ref{ass:sgt-scoreable}.  For the population statement,
\[
-\mathbb{E}_{\tau_{-n}}\log\pi^{(n)}_\theta(y\mid x)
=
H(\tau_{-n})
+
\KL(\tau_{-n}\,\|\,\pi^{(n)}_\theta(\cdot\mid x)).
\]
The entropy term is constant in $\theta$, so descending the cross-entropy is descending the KL.  If
$K(\theta)=\KL(\tau_{-n}\,\|\,\pi^{(n)}_\theta)$ and
$\theta^+=\theta-\eta\nabla_\theta K(\theta)$, then
\[
K(\theta^+)
=
K(\theta)
-
\eta\|\nabla_\theta K(\theta)\|^2
+
O(\eta^2).
\]
For sufficiently small $\eta$, the first-order term is strictly negative whenever $\nabla_\theta K(\theta)\neq0$.
\end{proof}

The theorem formalizes the outcome-level channel: \MFourShort{} adds a positive score direction only on the rescue subset, with expected contribution weighted by the gate probability $G_n(x)$. \Cref{lem:sgt-self-limit} shows that this weight shrinks as the learner becomes capable on the prompt. The gate therefore distinguishes \MFourShort{} from a global distillation objective: reward-ranked fine-tuning and rejection-sampling fine-tuning train on filtered successful samples without the same concurrent failure-triggered gate~\citep{dong2023raft,xiong2025minimalist}.

\subsection{Gated perturbation of the base GRPO update}
\label{subsec:theory-stability}

\begin{assumption}[Bounded gated auxiliary gradient]
\label{ass:sgt-perturb}
For learner $n$, write the base update direction as $g_{\mathrm{GRPO}}(x)$ and the gated auxiliary direction as
\[
g_{\mathrm{SGT}}(x)=I_{\mathrm{gate}}(x)\,\bar g_{\mathrm{SGT}}(x),
\]
where $I_{\mathrm{gate}}(x)$ is the \MFourShort{} gate indicator, $\mathbb{E}[I_{\mathrm{gate}}(x)\mid x]=G_n(x)$, and
$\|\bar g_{\mathrm{SGT}}(x)\|\leq G_S$ whenever the gate fires.  The combined update is
\[
\theta_{\mathrm{comb}}
=
\theta+\eta\left(g_{\mathrm{GRPO}}+\lambda g_{\mathrm{SGT}}\right),
\]
while the base update is
\[
\theta_{\mathrm{base}}=\theta+\eta g_{\mathrm{GRPO}}.
\]
\end{assumption}

\begin{proposition}[Expected auxiliary perturbation is controlled by rescue frequency]
\label{prop:sgt-perturb}
Under Assumption~\ref{ass:sgt-perturb},
\[
\mathbb{E}\|\theta_{\mathrm{comb}}-\theta_{\mathrm{base}}\|
\leq
\eta\lambda G_S\,\mathbb{E}_{x\sim\mathcal{D}}[G_n(x)],
\]
and
\[
\mathbb{E}\|\theta_{\mathrm{comb}}-\theta_{\mathrm{base}}\|^2
\leq
\eta^2\lambda^2 G_S^2\,\mathbb{E}_{x\sim\mathcal{D}}[G_n(x)].
\]
If, additionally, the local policy KL between the two post-update policies is bounded along the segment by
\[
\KL(\pi_{\theta_{\mathrm{base}}}\,\|\,\pi_{\theta_{\mathrm{comb}}})
\leq
\frac{\kappa}{2}
\|\theta_{\mathrm{comb}}-\theta_{\mathrm{base}}\|^2
+
O(\|\theta_{\mathrm{comb}}-\theta_{\mathrm{base}}\|^3),
\]
then
\[
\mathbb{E}\,
\KL(\pi_{\theta_{\mathrm{base}}}\,\|\,\pi_{\theta_{\mathrm{comb}}})
\leq
\frac{\kappa}{2}\eta^2\lambda^2G_S^2
\mathbb{E}_{x\sim\mathcal{D}}[G_n(x)]
+
O(\eta^3\lambda^3G_S^3\mathbb{E}_{x\sim\mathcal{D}}[G_n(x)]).
\]
\end{proposition}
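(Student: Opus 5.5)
The argument starts from the exact difference $\theta_{\mathrm{comb}}-\theta_{\mathrm{base}}=\eta\lambda\,I_{\mathrm{gate}}(x)\,\bar g_{\mathrm{SGT}}(x)$, which follows directly from Assumption~\ref{ass:sgt-perturb} because the two updates share the same base direction $g_{\mathrm{GRPO}}$. Every bound then reduces to a bound on a gated, norm-bounded vector.

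For the first-moment claim, take norms: $\|\theta_{\mathrm{comb}}-\theta_{\mathrm{base}}\|=\eta\lambda\,I_{\mathrm{gate}}(x)\|\bar g_{\mathrm{SGT}}(x)\|\le\eta\lambda G_S\,I_{\mathrm{gate}}(x)$, using $\|\bar g_{\mathrm{SGT}}\|\le G_S$ whenever the gate fires. When it does not fire, both sides vanish. Then condition on $x$ and use $\mathbb{E}[I_{\mathrm{gate}}\mid x]=G_n(x)$ from \Cref{def:sgt-gate}, and average over $x\sim\mathcal{D}$ by the tower property. For the second moment, square the same pointwise bound. The key observation is that the indicator is idempotent, $I_{\mathrm{gate}}^2=I_{\mathrm{gate}}$, so the squared bound is still linear in the gate: $\|\theta_{\mathrm{comb}}-\theta_{\mathrm{base}}\|^2\le\eta^2\lambda^2G_S^2\,I_{\mathrm{gate}}(x)$. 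Taking expectations as before gives $\eta^2\lambda^2G_S^2\,\mathbb{E}_x[G_n(x)]$, with the rescue frequency appearing linearly rather than squared.

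For the KL claim, apply the assumed local quadratic expansion pointwise and substitute the squared-norm bound into the leading term. The only delicate point is the remainder. I would interpret $O(\|\Delta\|^3)$ as a bound $\le C\|\Delta\|^3$ with a uniform constant along the segment; this is the step I expect to need care. Again by idempotence, $\|\Delta\|^3\le\eta^3\lambda^3G_S^3\,I_{\mathrm{gate}}(x)$, so its expectation is $O(\eta^3\lambda^3G_S^3\,\mathbb{E}_x[G_n(x)])$. On the no-gate event $\Delta=0$, and the KL is exactly zero because the two policies coincide, so no remainder issue arises there. Taking expectations and combining the two terms gives the stated bound. If the uniform remainder constant is not available, I would state it as an additional regularity condition, the same kind of twice-differentiability used in Assumption~\ref{ass:sgt-scoreable}, under which the cubic remainder is uniform on a bounded neighbourhood containing all gated steps of length at most $\eta\lambda G_S$.
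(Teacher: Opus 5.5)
Your proposal is correct and follows the paper's proof essentially step for step. Both use the exact identity $\theta_{\mathrm{comb}}-\theta_{\mathrm{base}}=\eta\lambda I_{\mathrm{gate}}\bar g_{\mathrm{SGT}}$, a pointwise bound that stays linear in the indicator via $I_{\mathrm{gate}}^2=I_{\mathrm{gate}}$, conditioning on $x$ with $\mathbb{E}[I_{\mathrm{gate}}\mid x]=G_n(x)$, and substitution into the assumed KL expansion with a gate-weighted cubic remainder. Your remark that the $O(\cdot)$ remainder needs a uniform constant along the segment is left implicit in the paper, so stating it as an explicit regularity condition tightens the argument without changing the route.
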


\begin{proof}
The difference between the combined and base updates is exactly
\[
\theta_{\mathrm{comb}}-\theta_{\mathrm{base}}
=
\eta\lambda g_{\mathrm{SGT}}
=
\eta\lambda I_{\mathrm{gate}}\bar g_{\mathrm{SGT}}.
\]
Thus, conditional on prompt $x$,
\[
\mathbb{E}\!\left[\|\theta_{\mathrm{comb}}-\theta_{\mathrm{base}}\|\mid x\right]
\leq
\eta\lambda G_S\,\mathbb{E}[I_{\mathrm{gate}}\mid x]
=
\eta\lambda G_S G_n(x).
\]
Averaging over $x\sim\mathcal{D}$ proves the first display.  Similarly,
\[
\mathbb{E}\!\left[\|\theta_{\mathrm{comb}}-\theta_{\mathrm{base}}\|^2\mid x\right]
\leq
\eta^2\lambda^2G_S^2\,\mathbb{E}[I_{\mathrm{gate}}\mid x]
=
\eta^2\lambda^2G_S^2G_n(x),
\]
which gives the second display.  The KL statement follows by substituting this squared-norm bound into the assumed local expansion.  Since the auxiliary displacement is zero whenever the gate is off, the cubic term is also weighted by the gate frequency, giving the stated remainder.
\end{proof}

This is the stability distinction relevant to the three-channel comparison: \MFourShort{} leaves the on-policy GRPO batch intact and adds a sparse, gate-weighted auxiliary direction, whereas \MOneShort{} changes the sampled trajectory distribution and inherits the density-ratio variance term in \Cref{thm:prp-bv}.

\subsection{Unified comparison theorem}
\label{subsec:theory-unified}

\begin{assumption}[Shared conditions]
\label{ass:unified}
Assume the contextual-bandit setup of \Cref{subsec:theory-setup}.  Assume the \THL{} span and clipping convention in \Cref{ass:thl-span}, the \MOneShort{} coverage and boundedness assumptions in \Cref{ass:prp}, the \MThreeShort{} stop-gradient baseline assumptions in \Cref{ass:xgrpo-baseline,ass:xgrpo-variance}, and the \MFourShort{} scoreability and bounded-gradient assumptions in \Cref{ass:sgt-scoreable,ass:sgt-perturb}.  All rewards are produced by the same verifier $r(x,y)$.
\end{assumption}

\begin{theorem}[Channel comparison: coupling, variance, support, and rescue signal]
\label{thm:unified}
Under Assumption~\ref{ass:unified}, the three mutual-RL channels differ structurally as follows.
\begin{enumerate}[leftmargin=*]
    \item \textbf{\MOneShort{} transfers trajectories and is divergence-sensitive.}  Corrected data-level sharing is unbiased only under coverage and has second moment controlled by
    \[
    1+\chi^2(\pi^{(n)}_\theta(\cdot\mid x)\,\|\,\mu^{(m)}(\cdot\mid x)).
    \]
    Clipping trades this variance for explicit tail bias, while \THL{} residuals multiply the ratio by the envelope in \Cref{cor:thl-ratio}.
    \item \textbf{\MThreeShort{} preserves on-policy actor sampling.}  Pooled scalar rewards are unbiased baselines by \Cref{lem:xgrpo-baseline} and reduce variance exactly under the condition in \Cref{thm:xgrpo-variance}.  However, by \Cref{thm:xgrpo-direct-support}, \MThreeShort{} has no direct score-function term for a peer-only successful trajectory.
    \item \textbf{\MFourShort{} is a rescue-set update.}  It activates with probability $G_n(x)$ in \Cref{def:sgt-gate}, self-limits by \Cref{lem:sgt-self-limit}, and on a rescue event supplies the positive score direction in \Cref{thm:sgt-rescue-gradient}.  Its expected perturbation relative to base GRPO is controlled by $\lambda G_S\mathbb{E}_xG_n(x)$ through \Cref{prop:sgt-perturb}.
\end{enumerate}
\end{theorem}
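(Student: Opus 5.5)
The plan is to assemble Theorem~\ref{thm:unified} from the results already proved, item by item. Assumption~\ref{ass:unified} is exactly the union of the hypotheses of those results, so each clause only needs to be matched to the lemma that carries it. No new estimates are needed. The work is verifying that each cited result's hypotheses are implied by the shared conditions, and that the quantities named in the theorem match those in the earlier statements.

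For item 1, we invoke \Cref{thm:prp-bv} under Assumption~\ref{ass:prp}. Unbiasedness under coverage is the identity $\mathbb{E}_\mu[\rho h]=g_\pi(x)$, which uses $\pi\ll\mu$. The second-moment control by $1+\chi^2(\pi^{(n)}_\theta\|\mu^{(m)})$ is the variance display. The clipping trade-off is given by two facts: the $(1+\epsilon)^2A_{\max}^2G^2$ cap on the clipped second moment, and the tail bound on $B_{\mathrm{clip}}(\epsilon)$. The \THL{} statement follows from \Cref{cor:thl-ratio}, which needs \Cref{thm:thl-residual} and hence Assumption~\ref{ass:thl-span}. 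It is propagated through the last two displays of \Cref{thm:prp-bv} to give the factors $e^{\bar\delta_{\mathrm{THL}}}-1$ and $e^{2\bar\delta_{\mathrm{THL}}}$.

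For item 2, unbiasedness of pooled baselines is \Cref{lem:xgrpo-baseline} under Assumption~\ref{ass:xgrpo-baseline}. The exact variance-reduction criterion is \Cref{thm:xgrpo-variance} under Assumption~\ref{ass:xgrpo-variance}. The absence of a direct score term for peer-only successes is \Cref{thm:xgrpo-direct-support}. Its structural hypothesis, that the actor uses only learner samples and peer data enters only as stop-gradient scalars, is the definition of the \MThreeShort{} channel in \Cref{subsec:theory-setup}.

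For item 3, we cite the following results in turn. The activation probability $G_n(x)$ is \Cref{def:sgt-gate}, which uses independent $K$-rollout groups. Self-limitation is \Cref{lem:sgt-self-limit}. The positive score direction on a rescue event is \Cref{thm:sgt-rescue-gradient} under Assumption~\ref{ass:sgt-scoreable}. The perturbation bound $\eta\lambda G_S\mathbb{E}_xG_n(x)$ is \Cref{prop:sgt-perturb} under Assumption~\ref{ass:sgt-perturb}. The statement drops the factor $\eta$, so we will read ``controlled by'' as holding per unit step size.

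The only delicate point, and the one I expect to need care, is consistency of the ratio notation across results. In \Cref{cor:thl-ratio}, $\tilde\rho$ is a sequence-level ratio built from $L^{(n)}_\theta$ and the aligned denominator. Assumption~\ref{ass:prp} requires a uniform residual bound $\bar\delta_{\mathrm{THL}}$. To connect the two, I would use the count bound $\delta_{\mathrm{THL}}\le B\,C_{\mathrm{mis}}$ from \Cref{thm:thl-residual}. When the mismatch count is bounded, this supplies $\bar\delta_{\mathrm{THL}}$; otherwise $\bar\delta_{\mathrm{THL}}$ is taken as given by the assumption. The rest is bookkeeping.
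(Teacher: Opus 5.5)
Your proposal is correct and follows essentially the same approach as the paper. The paper also proves \Cref{thm:unified} by direct citation: item~1 uses \Cref{thm:prp-bv} together with \Cref{thm:thl-residual} and \Cref{cor:thl-ratio}, item~2 uses \Cref{lem:xgrpo-baseline}, \Cref{thm:xgrpo-variance}, and \Cref{thm:xgrpo-direct-support}, and item~3 uses \Cref{def:sgt-gate}, \Cref{lem:sgt-self-limit}, \Cref{thm:sgt-rescue-gradient}, and \Cref{prop:sgt-perturb}, absorbing the factor $\eta$ exactly as you do. The paper additionally cites \Cref{prop:prp-anti-align}, \Cref{cor:xgrpo-hard-support}, and \Cref{prop:xgrpo-boundary} as illustrations not needed for the stated claims, and your extra bookkeeping on $\bar\delta_{\mathrm{THL}}$ is harmless because \Cref{ass:prp} already posits that bound.
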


\begin{proof}
For \MOneShort{}, \Cref{thm:prp-bv} proves the exact-ratio unbiasedness and the variance bound
$A_{\max}^2G^2(1+\chi^2(\pi\|\mu))$.  The same theorem shows how PPO-style clipping bounds the second moment by $(1+\epsilon)^2A_{\max}^2G^2$ but introduces the tail bias
\[
\mathbb{E}_{\mu}\left[
(\rho-(1+\epsilon))_+
+
((1-\epsilon)-\rho)_+
\right].
\]
\Cref{thm:thl-residual} derives the denominator residual from the THL alignment procedure, and \Cref{cor:thl-ratio} converts this residual into the multiplicative envelope
$e^{-\delta_{\mathrm{THL}}}\rho\leq\tilde\rho\leq e^{\delta_{\mathrm{THL}}}\rho$.  \Cref{prop:prp-anti-align} shows that ignoring the behavior distribution can even make the naive advantage-weighted direction anti-aligned with the on-policy direction.

For \MThreeShort{}, \Cref{lem:xgrpo-baseline} establishes that prompt-level scalar baselines do not bias the expected score-gradient direction.  \Cref{thm:xgrpo-variance} then gives the exact variance difference
\[
\mathbb{E}_x\left[H_n(x)(\Delta_n(x)^2-2D_n(x)\Delta_n(x))\right],
\]
so pooled statistics help precisely when they move the learner baseline toward the reward-weighted optimum strongly enough to dominate their own quadratic error.  \Cref{thm:xgrpo-direct-support} proves the support boundary relevant to mutual RL: because peer information enters only through scalars, the actor update contains no trajectory-conditioned score term for a peer-only success.  \Cref{cor:xgrpo-hard-support} and \Cref{prop:xgrpo-boundary} record the corresponding hard-support boundary.

For \MFourShort{}, \Cref{def:sgt-gate} computes the rescue probability from the two events that define the mechanism: learner failure and peer success.  \Cref{lem:sgt-self-limit} differentiates this probability and shows it decreases with $p_n(x)$.  On the same rescue event where \MThreeShort{} only reweights learner failures, \Cref{thm:sgt-rescue-gradient} proves that the SGT term increases the log-probability of the selected verified peer success to first order and decreases the peer-success KL in expectation.  Finally, \Cref{prop:sgt-perturb} shows that this auxiliary direction is weighted by the rescue frequency, giving an expected perturbation controlled by $\lambda G_S\mathbb{E}_xG_n(x)$.  These statements establish the claimed ordering by coupling type, variance exposure, direct support behavior, and dependence on \THL{} residuals.
\end{proof}

\subsection{Scope of the theoretical analysis}
\label{subsec:theory-limitations}

The theory targets the prompt-level, critic-free RLVR regime used in the experiments: a policy samples a complete response, a verifier supplies a scalar outcome, and GRPO-style updates use response-level score terms. Within this setting, the results establish bias, variance, support, THL-residual, and rescue-set perturbation statements under the bounded-gradient, bounded-advantage, clipping, and local smoothness assumptions stated in the theorem blocks. The channel comparison is therefore a structural analysis of the update directions used by \MOneShort{}, \MThreeShort{}, and \MFourShort{}. Verifier correctness enters through \(r(x,y)\): the theory characterizes transfer of the verifier-defined success signal. The exact THL conservation result uses the WordSpans refinement convention; for non-whitespace scripts, the character-span extension evaluated in App.~\ref{app:thl-diagnostics} supplies the corresponding alignment primitive.

\section{Prompt Templates for Evaluation}
\label{app:prompt-templates}

All auxiliary QA benchmarks use a common multiple-choice interface with an explicit instruction to reason step by step and return the final answer inside \(\boxed{\cdot}\). The templates below are the exact evaluation prompts; braces denote dataset fields.

\paragraph{HellaSwag.} For HellaSwag, we present the story followed by four options:
\begin{quote}
\texttt{\{story\}}\\[0.5ex]
\texttt{Which option (A-D) best completes the sentence?}\\
\texttt{Let's think step by step and output the final answer within \textbackslash boxed\{\}.}\\[0.5ex]
\texttt{\{joined\}}  \hfill \textit{\small (A–D candidate endings)}
\end{quote}

\paragraph{AI2-ARC.} For ARC, we show the question and four choices:
\begin{quote}
\texttt{\{question\}}\\[0.5ex]
\texttt{Which option (A-D) is the correct answer?}\\
\texttt{Let's think step by step and output the final answer within \textbackslash boxed\{\}.}\\[0.5ex]
\texttt{\{choices\_text\}} \hfill \textit{\small (A–D answer options)}
\end{quote}

\paragraph{BoolQ.} For BoolQ, we frame the task as a yes/no question grounded in a passage:
\begin{quote}
\texttt{Passage: \{passage\}}\\[0.5ex]
\texttt{Question: \{question\}}\\[0.5ex]
\texttt{Answer the question with Yes or No based on the passage above.}\\
\texttt{Let's think step by step and output the final answer within \textbackslash boxed\{\}.}
\end{quote}

\paragraph{OpenBookQA.} For OpenBookQA, we include the supporting fact and question, then four options:
\begin{quote}
\texttt{Fact: \{fact1\}}\\[0.5ex]
\texttt{Question: \{question\_stem\}}\\[0.5ex]
\texttt{Which option (A-D) is the correct answer?}\\
\texttt{Let's think step by step and output the final answer within \textbackslash boxed\{\}.}\\[0.5ex]
\texttt{\{choices\_text\}} \hfill \textit{\small (A–D answer options)}
\end{quote}

\paragraph{PIQA.} For PIQA, we present the goal and two candidate solutions:
\begin{quote}
\texttt{\{goal\}}\\[0.5ex]
\texttt{Which option (A-B) is the better solution?}\\
\texttt{Let's think step by step and output the final answer within \textbackslash boxed\{\}.}\\[0.5ex]
\texttt{A. \{sol1\}}\\
\texttt{B. \{sol2\}}
\end{quote}

\paragraph{Social IQa.} For Social IQa, we show the context, question, and three options:
\begin{quote}
\texttt{Context: \{context\}}\\[0.5ex]
\texttt{Question: \{question\}}\\[0.5ex]
\texttt{Which option (A-C) is the correct answer?}\\
\texttt{Let's think step by step and output the final answer within \textbackslash boxed\{\}.}\\[0.5ex]
\texttt{A. \{answerA\}}\\
\texttt{B. \{answerB\}}\\
\texttt{C. \{answerC\}}
\end{quote}

\section{Analogies to Classical Closed-Loop System Identification}
\label{app:sysid-analogy}

This appendix uses classical \emph{closed-loop} system identification (SysID)~\cite{ljung1998system} as an interpretive lens for the three communication channels. The analogy is useful because closed-loop estimators fail for the same structural reasons that direct rollout sharing becomes fragile: correlated data, insufficient excitation, and non-stationary feedback.

Closed-loop SysID separates three ingredients that also organize Mutual RL. First, effective regressors should be exogenous enough that regressor-noise correlations do not bias the estimate. Second, the input should be informative, in the sense of persistent excitation, so the data covers modes that matter. Third, the feedback loop should not couple the data distribution so tightly to the current estimator that variance-reduction intuitions break. \MOneShort{} stresses all three conditions by reusing peer trajectories inside a learner update. \MThreeShort{} preserves the stable regime by sharing only scalar rewards, but it inherits the information ceiling of unchanged exploration support. \MFourShort{} adds a gate that routes high-signal peer successes only on learner-failure prompts, reducing coupling while adding trajectory support on the rescue subset.

\paragraph{Closed-loop SysID in one equation.}
A canonical closed-loop SysID setting considers an unknown plant $\mathcal{G}$ controlled by a known stabilizing controller $\mathcal{K}$.
For a scalar (or LTI MIMO) discrete-time model, one common abstraction is
\begin{equation}
\label{eq:sysid_closedloop}
\begin{aligned}
y_t &= \mathcal{G}(q^{-1})\,u_t + \mathcal{H}(q^{-1})\,e_t, \\
u_t &= \mathcal{K}(q^{-1})\,(r_t - y_t),
\end{aligned}
\end{equation}
where $e_t$ is an exogenous disturbance/noise process and $r_t$ is a known reference signal.
In open-loop identification, many estimators rely (explicitly or implicitly) on the condition
$\Cov(u_t, e_t)=0$ (or, more generally, that the regressor is independent of the innovation).
In closed loop, this condition typically fails:
substituting $y_t$ into the controller equation shows that $u_t$ contains filtered versions of $e_t$, hence $\Cov(u_t,e_t)\neq 0$.
This is the classical \emph{endogeneity bias} problem: using an estimator that assumes ``$u$ independent of noise'' on closed-loop data yields biased (and sometimes unstable) estimates of $\mathcal{G}$.

\paragraph{Where the analogy \emph{does} and \emph{does not} apply.}
At a high level, Mutual RL also produces data under feedback: rollouts are generated by policies that are themselves updated from past rollouts.
Moreover, our multi-policy setting introduces additional feedback paths through the shared exchange (\SEE{}):
a learner consumes peer experience whose distribution depends on peer parameters, which in turn evolve during training.
However, there is a crucial misanalogy:
in classical SysID, the plant $\mathcal{G}$ is a \emph{fixed} object one aims to estimate, whereas in Mutual RL there is no fixed unknown ``plant'' corresponding to the target of inference.
The object of interest is the policy itself (or an optimal policy), which is being \emph{constructed} via reward maximization, not identified as ground-truth dynamics.
Thus, SysID should be viewed as a \emph{structural analogy about data under feedback}, not as a literal generative model of our learning problem.

Still, three SysID concepts map cleanly onto the behaviors we observe across \MOneShort{}, \MThreeShort{}, and \MFourShort{}:
\emph{endogeneity}, \emph{informativity/persistent excitation}, and \emph{decorrelation via gating/delay}.

\subsection{Endogeneity bias as a lens on \MOneShort{} instability}

\paragraph{Off-policy gradients already look like closed-loop estimation.}
For a learner policy $\pi^{(n)}_\theta$, the on-policy policy-gradient structure (suppressing token averaging and PPO clipping for readability) has the form
\begin{equation}
\label{eq:onpolicy_pg}
\nabla_\theta J^{(n)}(\theta)
\;\propto\;
\E_{x}\,\E_{y\sim \pi^{(n)}_{\theta_{\mathrm{old}}}(\cdot\mid x)}
\big[\nabla_\theta \log \pi^{(n)}_\theta(y\mid x)\,A(x,y)\big],
\end{equation}
which is exactly what Eq.~\ref{eq:grpo} approximates with a trust-region style importance ratio $w_\theta$ and clipping.
When \MOneShort{} pools peer rollouts (Sec.~\ref{subsec:prp}), the learner is (partly) replacing the sampling distribution in Eq.~\eqref{eq:onpolicy_pg} by peer behavior distributions $\mu^{(m)}$.
If we ignore this change in the sampling distribution, we get a biased gradient estimator:
\begin{equation}
\label{eq:naive_offpolicy_bias}
\E_{y\sim \mu^{(m)}(\cdot\mid x)}
\big[\nabla_\theta \log \pi^{(n)}_\theta(y\mid x)\,A(x,y)\big]
\;\neq\;
\E_{y\sim \pi^{(n)}_{\theta_{\mathrm{old}}}(\cdot\mid x)}
\big[\nabla_\theta \log \pi^{(n)}_\theta(y\mid x)\,A(x,y)\big].
\end{equation}
This is the familiar ``off-policy policy gradient instability'' story.
The SysID analogy is that Eq.~\ref{eq:naive_offpolicy_bias} implicitly assumes a form of independence between the regressor (the sampled trajectory under $\mu^{(m)}$) and the error/noise terms that appear inside $A(x,y)$ and the clipped importance weights.
In closed-loop SysID, the corresponding mistaken assumption is $\Cov(u_t,e_t)=0$ when data were generated under feedback.

\paragraph{PRP's naive denominator is analogous to an open-loop estimator on closed-loop data.}
In \MOneShort{}, the most fragile variant is the one that treats peer rollouts as if they were generated by the learner's own behavior policy (the ``naive on-policy assumption'' in \Secref{sec:methods}):
\[
w^{(n)}_\theta(x,y)=\exp\big(\ell^{(n)}_\theta(x,y)-\ell^{(n)}_{\mathrm{old}}(x,y)\big),
\quad y\sim \mu^{(m)}(\cdot\mid x),\ m\neq n.
\]
This is directly analogous to plugging closed-loop data into an open-loop estimator: the denominator encodes the wrong data-generating mechanism.
PPO/GRPO clipping can prevent extreme ratio blow-up, but it cannot remove the fundamental endogeneity: the update is still driven by trajectories whose likelihood under the assumed behavior model is systematically mis-specified.

\paragraph{Why ``importance correction'' helps but does not fully solve \MOneShort{}.}
The importance-corrected variant of \MOneShort{} instead uses the peer behavior probability in the denominator (aligned via \THL{}),
\[
w^{(n)}_\theta(x,y)=\exp\big(\ell^{(n)}_\theta(x,y)-\ell^{(m)}_{\text{aligned}}(x,y)\big),
\quad y\sim \mu^{(m)}(\cdot\mid x),
\]
which corresponds to the standard RL remedy: correct for the sampling distribution shift.
In SysID terms, this is akin to moving from an open-loop estimator to a closed-loop-capable method (e.g., prediction-error methods or instrumental-variable style corrections) that accounts for feedback-generated correlations.

However, two ``closed-loop'' issues remain and line up with what we observe empirically for \MOneShort{} even with correction (Appendix~\ref{subsec:prp-ablation}):
\begin{itemize}[topsep=0pt,itemsep=2pt,parsep=0pt,leftmargin=*]
\item \textbf{Variance amplification (importance weights).} Even if unbiased in principle, importance weights can have heavy tails when $\mu^{(m)}$ and $\pi^{(n)}$ diverge; clipping stabilizes but introduces bias, just as regularization in SysID trades bias for variance.
\item \textbf{Non-stationary data generation.} In classical closed-loop SysID, identification becomes harder when the controller is adaptive: the closed-loop mapping changes over time, and the regressor-noise correlation structure drifts.
In Mutual RL, peers evolve concurrently and may even become statistically dependent through shared prompts and shared exchange dynamics; thus $\mu^{(m)}$ is not a fixed behavior distribution over training.
This non-stationarity breaks the ``large-sample averaging'' intuition that typically supports off-policy corrections.
\end{itemize}

Taken together, the SysID lens emphasizes that \MOneShort{} is not merely ``off-policy'': it is off-policy \emph{under feedback and non-stationarity}, which is precisely the regime where both SysID and RL estimators are known to be fragile without additional structure (e.g., strong trust regions, explicit decorrelation, or conservative data selection).

\subsection{Persistent excitation and the boundary of value-only sharing}

\paragraph{SysID analogy: identifiability requires informative inputs.}
A core SysID concept is that even with infinite data, one may fail to identify parameters if the input does not sufficiently excite the system.
For many linear parametrizations, a persistent excitation (PE) condition can be stated as a rank condition on the regressor covariance, e.g.,
\[
\frac{1}{N}\sum_{t=1}^N \phi_t \phi_t^\top \to R_\phi \succ 0,
\]
where $\phi_t$ is the regression vector.
In closed loop, stabilizing controllers can \emph{reduce} excitation (by suppressing variation in $u_t$), making $R_\phi$ singular and destroying identifiability.

\paragraph{Mutual RL analogue: sharing statistics reduces variance but cannot add information.}
\MThreeShort{} (XGRPO) shares only scalar reward statistics across policies and keeps generation strictly local.
Concretely, each policy $n$ still samples $y^{(n)}\sim \pi^{(n)}(\cdot\mid x)$, but replaces its per-prompt normalization by pooled moments (Sec.~\ref{sec:methods}),
\[
A^{(n)}_i = \frac{r^{(n)}_i-\mu_{\text{pool}}}{\sigma_{\text{pool}}+\epsilon}.
\]
This is an archetypal \emph{variance-reduction} mechanism: it changes the baseline/scale of the learning signal but does not change the support of the data distribution (no new trajectories, no new action sequences).
In SysID terms, this is analogous to estimating a closed-loop transfer statistic more accurately without injecting new excitation: one can reduce estimation noise, but if the experiment never visits certain modes, those modes remain unidentifiable.

This perspective aligns with our empirical observation that \MThreeShort{} closely tracks Standalone GRPO across larger and more heterogeneous pools (Appendix~\ref{subsec:xgrpo-extended}): pooled scalar normalization improves the \emph{conditioning} of the update without changing the support of the trajectory distribution.

\paragraph{Loss of informativity in the value-only channel.}
One can interpret the pool of policies as defining a mixture behavior distribution over trajectories for each prompt,
\[
\bar\mu(\cdot\mid x) \;\propto\; \sum_{m=1}^M \mu^{(m)}(\cdot\mid x).
\]
\MThreeShort{} shares only low-dimensional statistics of returns under $\bar\mu$ (mean/variance), not trajectories. On prompts where the pool's trajectory distribution does not contain a correct sample, the pooled statistics are nearly identical regardless of what the learner does locally. In SysID language, the shared signal is \emph{uninformative} about directions outside the current closed-loop operating regime, and informative interventions require routing trajectories rather than scalar statistics. \MFourShort{} provides exactly that route by transferring verified peer successes on the rescue subset.

\subsection{Gating as decorrelation and ``exogenous'' supervision}

\paragraph{SysID lens: gating creates a decorrelating ``reference'' signal.}
The additional interpretation is about \emph{where the supervision comes from} and \emph{when it is applied}.
SGT triggers only when the learner fails but at least one peer succeeds on the same prompt.
Thus the supervised target $y^\star$ is not sampled from the learner's own closed-loop behavior at that time step; it is imported from a different policy's closed-loop.
This acts like an external reference injection in Eq.~\ref{eq:sysid_closedloop}: in closed-loop SysID, if one has access to an exogenous reference $r_t$ that is independent of disturbances, identifiability can be restored despite feedback because $r_t$ provides an ``instrument'' that breaks the endogenous correlation path.
Here, the peer success plays a related role: it provides a high-signal training target that is not generated by the learner's own (possibly collapsed) distribution on that prompt.

\paragraph{Gating reduces feedback coupling.}
The \MFourShort{} gate counters two coupling effects observed under direct rollout sharing (Sec.~\ref{subsec:diagnostics}). First, it does not apply imitation when the learner already succeeds, so the learner's update is not concentrated further on its current modes. Second, it conditions on verified peer successes only, so peer trajectories enter the learner's update solely on the rescue subset and only as a sparse positive sample.

\subsection{Implications for the three mutual-RL channels}

The SysID analogy summarizes the empirical taxonomy through three channel properties:
\begin{itemize}[topsep=0pt,itemsep=2pt,parsep=0pt,leftmargin=*]
\item \textbf{Endogeneity-aware sharing.} A mechanism that injects peer trajectories into a policy-gradient update should model the data-generating distribution through importance correction and control the resulting variance through trust regions, conservative selection, or delayed updates.
\item \textbf{Informativity requires excitation.} Sharing only low-dimensional statistics stabilizes the update without changing the support of the trajectory distribution. Routing verified trajectory support introduces new information into the pool.
\item \textbf{Structural decorrelation helps.} A gate between peer behavior and learner updates reduces harmful feedback coupling. \MFourShort{} uses this structure by turning peer experience into sparse, failure-conditioned outcome transfer rather than dense off-policy gradients.
\end{itemize}

Overall, Mutual RL is closer in spirit to \emph{adaptive control} than to pure SysID, because the policy changes as it learns. The closed-loop SysID lens still gives a compact summary of the observed regimes: aggressive trajectory sharing (\MOneShort{}) is where endogeneity and variance dominate; statistic sharing (\MThreeShort{}) is stable but does not expand the learner's sampled support; gated outcome transfer (\MFourShort{}) injects informative, low-variance supervision while avoiding the strongest closed-loop correlation paths.

\end{document}